\newtheorem{remark}{Remark}
\def\sigmamin{\sigma_{\text{min}}}
\NewDocumentCommand{\TT}{s m}{\textcolor{cyan}{\IfBooleanF{#1}{\textbf{Taos:}~}#2}}
\NewDocumentCommand{\MathVariable}{o m}{%
    {\IfValueTF{#1}{#1{#2}}{#2}}%
}
\NewDocumentCommand{\RefUpToThree}{m D<>{} D<>{} o o o}{%
    #1~#2\ref{#4}#3%
    \IfValueT{#5}{%
        \IfValueTF{#6}{%
            , #2\ref{#5}#3, and #2\ref{#6}#3%
        }{%
            and #2\ref{#5}#3%
        }%
    }%
}
\NewDocumentCommand{\RefEq}{s o o o}{%
    \RefUpToThree{\IfBooleanTF{#1}{Equation}{equation}}<><>[#2][#3][#4]%
}
\NewDocumentCommand{\RefAppendix}{s o o o}{%
    \RefUpToThree{\IfBooleanTF{#1}{Appendix}{appendix}}[#2][#3][#4]%
}
\NewDocumentCommand{\RefSec}{s o o o}{%
    \RefUpToThree{\IfBooleanTF{#1}{Section}{Section}}[#2][#3][#4]%
}
\NewDocumentCommand{\RefFig}{s o o o}{%
    \RefUpToThree{\IfBooleanTF{#1}{Figure}{Figure}}[#2][#3][#4]%
}
\NewDocumentCommand{\RefTable}{s o o o}{%
    \RefUpToThree{\IfBooleanTF{#1}{Table}{Table}}[#2][#3][#4]%
}
\DeclarePairedDelimiter{\set}{\{}{\}}
\DeclarePairedDelimiter{\abs}{\lvert}{\rvert}
\DeclarePairedDelimiter{\pn}{(}{)}
\DeclarePairedDelimiter{\spn}{[}{]}
\DeclarePairedDelimiter{\hrpn}{(}{]}
\DeclarePairedDelimiter{\norm}{\lVert}{\rVert}
\NewDocumentCommand{\vect}{}{\MathVariable[\bm]}
\NewDocumentCommand{\mat}{}{\MathVariable[\bm]}
\NewDocumentCommand{\dif}{}{\mathrm{d}}
\NewDocumentCommand{\wrt}{m}{\,\dif#1}
\NewDocumentCommand{\diffrac}{O{} m}{\frac{\dif#1}{\dif#2}}
\NewDocumentCommand{\delfrac}{O{} m}{\frac{\partial#1}{\partial#2}}
\NewDocumentCommand{\Gaussian}{}{\mathcal{N}}
\NewDocumentCommand{\Transpose}{}{\mathsf{T}}
\NewDocumentCommand{\Reals}{}{\mathbb{R}}
\def\eqref#1{equation~\ref{#1}}
\def\gL{{\mathcal{L}}}
\def\gN{{\mathcal{N}}}
\def\gO{{\mathcal{O}}}
\def\mC{{\bm{C}}}
\def\mI{{\bm{I}}}
\def\sZ{{\mathbb{Z}}}
\def\vb{{\bm{b}}}
\def\vf{{\bm{f}}}
\def\vg{{\bm{g}}}
\def\vm{{\bm{m}}}
\def\vs{{\bm{s}}}
\def\vu{{\bm{u}}}
\def\vv{{\bm{v}}}
\def\vw{{\bm{w}}}
\def\vx{{\bm{x}}}
\def\vy{{\bm{y}}}
\def\vz{{\bm{z}}}
\newcommand{\E}{\mathbb{E}}
\newcommand{\R}{\mathbb{R}}
\let\argmin\undefined
\DeclareMathOperator*{\argmin}{arg\,min}
\newcommand{\rev}[1]{#1}
\NewDocumentCommand{\EDIT}{m}{#1}
\setlist[enumerate]{leftmargin=.5in}
\setlist[itemize]{leftmargin=.5in}
\crefname{hypothesis}{Hypothesis}{Hypotheses}
\title{Flow Matching
for Efficient and Scalable Data Assimilation\thanks{Submitted to the editors \today{}.
\funding{This material is based on research sponsored by NSF grants DMS-2208361, DMS-2219956, and DMS-2436344, and DOE grants DE-SC0023490, DE-SC0025589, and DE-SC0025801.}
}}
\author{
Taos Transue*
\and
Bohan Chen*\thanks{T. Transue and B. Chen are co-first authors.}
\and
So Takao\thanks{B. Chen and S. Takao are with The Computing and Mathematical Sciences Department, California Institute of Technology, 1200 E California Blvd, Pasadena, CA 91125.}
\and
Bao Wang\thanks{T. Transue and B. Wang are with the Department of Mathematics and Scientific Computing and Imaging Institute, University of Utah, Salt Lake City, UT, 84112.}
}
\begin{document}

\maketitle

\begin{abstract}
Data assimilation (DA) estimates a dynamical system's state from noisy observations. Recent generative models like the ensemble score filter (EnSF) improve DA in high-dimensional nonlinear settings but are computationally expensive. We introduce the ensemble flow filter (EnFF), a training-free, flow matching (FM)-based framework that accelerates sampling and offers flexibility in flow design. EnFF uses Monte Carlo estimators for the marginal flow field, localized guidance for observation assimilation, and utilizes a novel flow path that exploits the Bayesian DA formulation.
It generalizes classical filters such as the bootstrap particle filter and ensemble Kalman filter. Experiments on high-dimensional benchmarks demonstrate EnFF's improved cost-accuracy tradeoffs and scalability, highlighting FM's potential for efficient, scalable DA. Code is available at \url{https://github.com/Utah-Math-Data-Science/Data-Assimilation-Flow-Matching}.
\end{abstract}

\begin{keywords}
Data Assimilation, Flow Matching, Localized Guidance, Flow Design, Training-Free
\end{keywords}

\begin{AMS}
60G35, 62M20, 93E11
\end{AMS}

\begin{wrapfigure}{r}{0.49\textwidth}
\vspace{-1.3cm}
\centering
\resizebox{0.45\textwidth}{!}{%
    \centering
    \begin{tabular}{cc}
        \multicolumn{2}{c}
     {\includegraphics[width=0.99\linewidth]{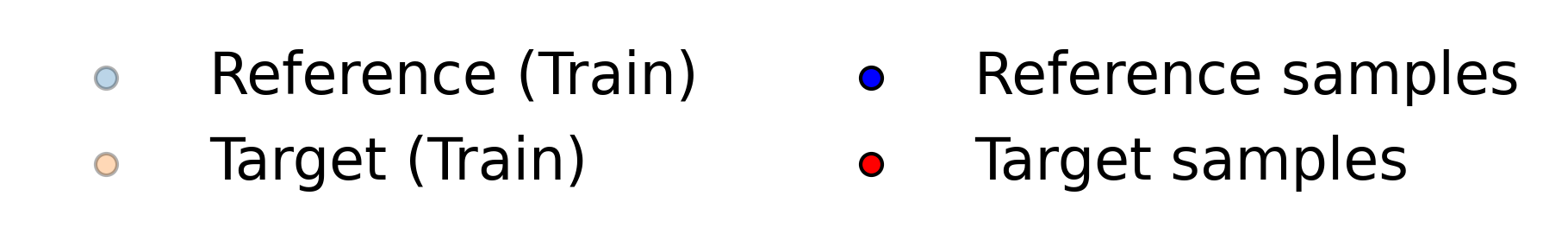}} \\
        \includegraphics[width=0.5\linewidth]{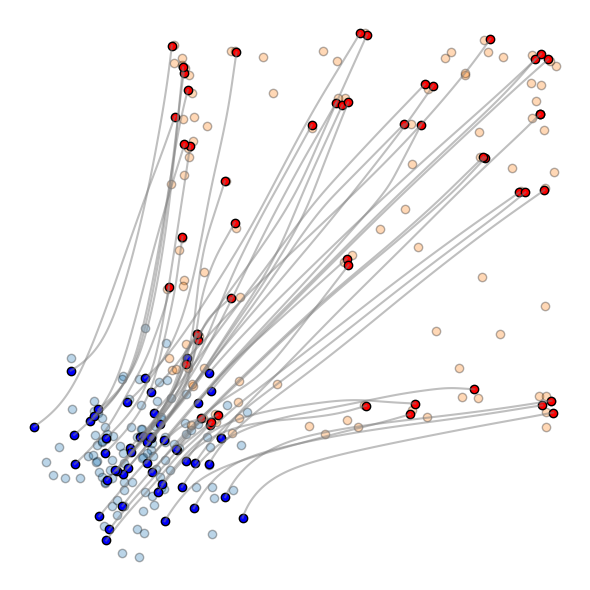}
        &
        \includegraphics[width=0.5\linewidth]{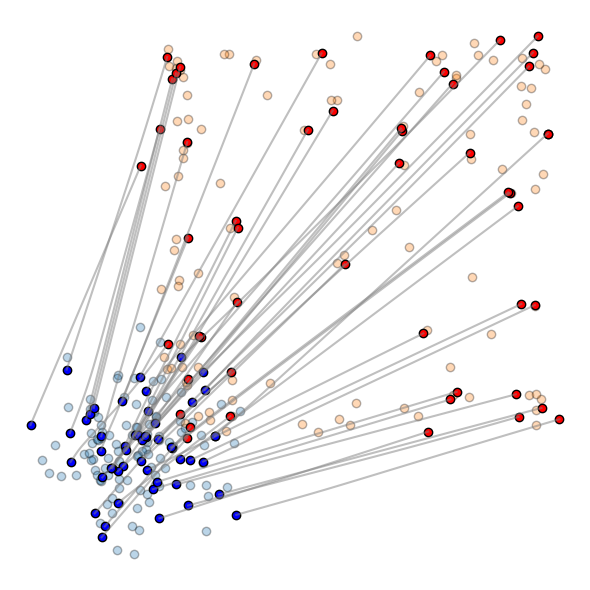} \\ [-2pt]
        \scriptsize (a) OT flow &
        \scriptsize (b) F2P flow (ours)
    \end{tabular}
    }\vspace{-0.3cm}
\caption{
Trajectories of the ODE using (a) OT flow and (b) F2P flow. F2P trajectories are straighter due to coupling between reference and target measures.
    }
    \label{fig:ot-vs-f2p}\vspace{-0.2cm}
\end{wrapfigure}

\section{Introduction}\label{section:introduction}\vspace{-0.3cm}

Data assimilation (DA) estimates \EDIT{a} dynamical system's state from noisy observations \cite{law2015data}, with {\it filtering} referring to sequential inference.
While Bayesian DA is principled, it is often intractable, requiring simplifications.
For example, the Kalman filter (KF) provides a closed-form solution for linear Gaussian systems \cite{Kalman1960new,welch1995introduction}, while the particle filter (PF) offers an asymptotically exact solution via particle approximations \cite{chopin2020introduction}.

DA plays a key role in numerical weather prediction (NWP), but the high dimensionality and nonlinear processes of NWP pose major computational challenges. Modern NWP models have $\mathcal{O}(10^9)$ state dimensions and ingest millions of observations daily, with both dynamics and observation operators being nonlinear. Under such settings, KFs and their nonlinear variants (e.g., extended and unscented KFs \cite{julier2004unscented}) become intractable due to their cubic complexity in dimension, while PFs such as the bootstrap PF (BPF) \cite{gordon1993novel,kitagawa1996monte} suffer from mode collapse.
The ensemble KF (EnKF) assumes a near-Gaussian predictive distribution \cite{calvello2024accuracy,bach2025learning} and reduces KF costs by using a small ensemble for state estimation \cite{burgers1998analysis,evensen2003ensemble}. While effective in NWP \cite{kunii2012estimating,buehner2017ensemble}, small ensembles introduce spurious correlations and underestimated variances, risking instability \cite{bannister2017review}. Localization \cite{hunt2007efficient} and inflation \cite{anderson1999monte} mitigate these issues, but tuning their hyperparameters for robust performance remains challenging, especially in nonlinear settings, due to their computational cost and their hyperparameters' sensitivity.

To address these challenges, generative models (GMs) such as diffusion models (DMs) have gained attention in DA for their ability to represent high-dimensional, non-Gaussian distributions and solve inverse problems with nonlinear forward operators. For example, score-based DA \cite{rozet2023score} uses a pretrained score function network to infer full trajectories from spatiotemporal observations, while the ensemble score filter (EnSF) \cite{bao2024score,bao2024ensemble} estimates score functions online to assimilate observations sequentially via guidance-based methods. EnSF shows improved robustness over traditional approaches when observation operators are nonlinear.

Despite their promise, DMs suffer from slow sampling due to the high cost of solving stochastic differential equations (SDEs), limiting their use in long-trajectory DA. Several methods address this bottleneck: Denoising diffusion implicit models \cite{song2020denoising} accelerate sampling via non-Markovian processes, while \cite{lu2022dpm,zhao2023unipc} propose higher-order ODE solvers for the probability flow ODE. Alternatively, flow matching (FM) \cite{lipman2022flow,liu2023flow} and related stochastic interpolants \cite{albergo2023stochastic,albergo2023building} offer a more fundamental solution by introducing {\em better couplings} between reference and target measures \cite{tong2023improving,albergo2024stochastic}. This encourages straighter flows for efficient generation.

\subsection{Contributions}
\begin{figure}[t]
    \centering
    \includegraphics[width=0.9\textwidth, trim={0 20 0 0}, clip]{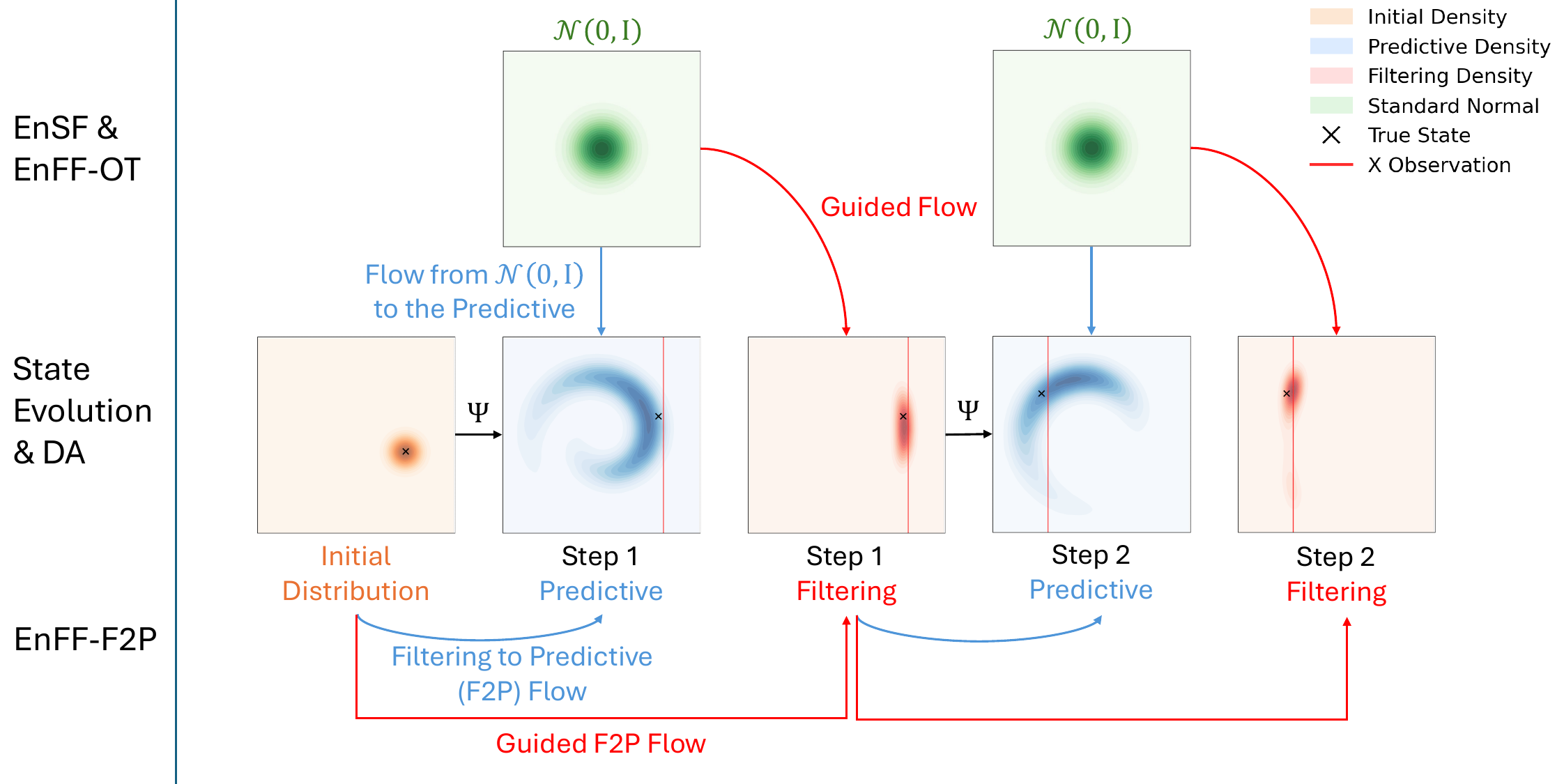}
    \caption{\rev{Comparison of generative model (GM)-based and classical data assimilation frameworks. The flowchart illustrates two consecutive assimilation steps, driven by the dynamical model $\Psi$, comparing \EDIT{standard GM-based approaches} (EnSF and EnFF-OT; top), and our proposed EnFF-F2P (bottom). Classical methods (e.g., EnKF) apply an analysis step on the predictive distribution to yield the filtering distribution. GM-based approaches, such as EnSF and EnFF-OT (top), construct a transport map from a reference distribution $\mathcal{N}(\vect{0}, \mat{I})$ to the predictive distribution, using a guidance term to target the filtering distribution. While utilizing similar guidance, our proposed EnFF-F2P (bottom) completely abandons the $\mathcal{N}(\vect{0}, \mat{I})$ reference, instead using a guided flow from the previous filtering distribution to the predictive distribution. This filtering-to-predictive formulation achieves higher sampling efficiency and demonstrates improved stability with respect to hyperparameters.}}
    \label{fig:f2p_vs_ot}
\end{figure}

We propose the \textit{ensemble flow filter} (EnFF), a training-free FM-based framework for efficient, scalable high-dimensional DA. Its key components are (1) a conditional probability path between latent and predictive distributions and (2) a guidance mechanism steering trajectories toward the filtering distribution (\RefSec[sec:Algorithm]). In \RefSec[sec:Theory], we show EnFF generalizes EnSF and classical filters (BPF, EnKF) through specific choices of its components. Leveraging component (1), we introduce a \textit{filtering-to-predictive} (F2P) path (\RefSec[sec:mc-approximation]) that exploits DA’s iterative Bayesian structure, yielding straighter trajectories than the conditional OT path (\RefFig[fig:ot-vs-f2p]). \rev{EnFF-F2P's core distinction from EnSF and EnFF-OT, as illustrated in \RefFig[fig:f2p_vs_ot], is the complete abandonment of the $\mathcal{N}(\vect{0},\mat{I})$ reference distribution.} Empirical results (\RefSec[sec:Numerics]) show EnFF-F2P improves robustness and efficiency while maintaining accuracy competitive with EnFF-OT and EnSF.

\subsection{Notation}
Let $\sZ^+=\{0,1,\ldots\}$, $\R=(-\infty,\infty)$, $\R^+=[0,\infty)$, and $[N]=\{1,\ldots,N\}$.
We use $\gO(\cdot)$ for asymptotic upper bounds.
For $\vect{\tau}\in\R^d$, $\delta_{\vect{\tau}}$ denotes the Dirac mass at $\vect{\tau}$, and $\gN(\vx|\vm,\mC)$ the Gaussian with mean $\vm$ and covariance $\mC$. For simplicity, we use the same notation for measures and densities, where $p(\vx)$ is the density of the measure $p(\dif\vx)$. For a measurable map $f$ and measure $\mu$, we write $f_{\#}\mu$ for the pushforward measure (Definition~\ref{def:pushforward}).

In DA, $\vx$ and $\vy$ denote state and observation, and $j$ the {\it DA timestep}. The probability path time is $t\in[0,1]$.
The discrete step count for solving the ODE/SDE is the {\it sampling timesteps} $T$. When analyzing the flow-ODE, its vector field (VF), and probability path, we omit $j$.

\section{Background}\label{sec:background}
\subsection{Flow Matching}\label{sec:flow-matching}
FM aims to learn a vector field (VF) $\vu_t\colon [0,1] \times \mathbb{R}^d \to \mathbb{R}^d$ with induced flow
$\vect{\phi}_t\colon [0,1] \times \mathbb{R}^{\EDIT{d}} \to \mathbb{R}^{\EDIT{d}}$, such that its {\em probability path}
\begin{align}\label{eq:probability-path}
    p_t := (\vect{\phi}_t)_{\#}\rho_0, \quad \forall t \in [0, 1],
\end{align}
satisfies the endpoint conditions (i) $p_0 = \rho_0$ (e.g., $\mathcal{N}({\bf 0}, \mI)$), and (ii) $p_1 = \rho_1$ (data distribution).
Since the first endpoint condition holds for any $\vect{\phi}_t$ (since $\left.\vect{\phi}_t\right|_{t=0} = id$), we focus on the second in our search for $\vu_t$.
To this end, for $\vz_1 \sim \rho_1$, FM introduces the {\em conditional probability path} $p_t(\vz|\vz_1)$ and {\em conditional VF} $\vu_t(\vz|\vz_1)$ with flow $\vect{\phi}_t\pn{\vz|\vz_1}$ such that $p_1(\vz|\vz_1)\approx\delta(\vz-\vz_1)$ and $\vect{\phi}_t(\vz|\vz_1)$ pushes $\rho_0$ to $p_t(\vz|\vz_1)$:
$$p_t(\;\cdot\;|\vz_1)=\vect{\phi}_t(\;\cdot\;|\vz_1)_{\#}\rho_0.$$
Setting $p_t(\vz):=\E_{\vz_1\sim\rho_1}[p_t(\vz|\vz_1)]$ therefore gives $p_1(\vz)\approx\rho_1(\vz)$, as desired. Moreover, the marginal and conditional VFs satisfy \cite[Theorem 1]{lipman2022flow}
\begin{align}\label{eq:marginal-vf}
    \vu_t(\vz) = \int \vu_t(\vz|\vz_1) \frac{p_t(\vz|\vz_1) \rho_1(\vz_1)}{p_t(\vz)} \wrt{\vz_1}.
\end{align}
A key insight behind FM is that, unlike the marginal probability path $p_t(\vz)$ and VF $\vu_t(\vz)$, their conditional counterparts can be defined explicitly.
One possible definition is the Gaussian conditional probability path: $p_t(\vz|\vz_1) = \mathcal{N}(\vz|\vect{\mu}_t(\vz_1), \sigma_t^2(\vz_1) \mat{I})$
where $\vect{\mu}\colon \spn{0, 1} \times \Reals^d \to \Reals^d$ and $\sigma_t\colon \spn{0, 1} \times \Reals^d \to \hrpn{0, 1}$.
When $\rho_0(\vz) = \Gaussian(\vect{0}, \mat{I})$, we set $\vect{\mu}_0(\vz_1) = \vect{0}$, $\vect{\mu}_1(\vz_1) = \vz_1$, $\sigma_0(\vz_1) = 1$, and $\sigma_1(\vz_1) = \sigmamin$ for some small $\sigmamin > 0$.
For any Gaussian conditional probability path, the corresponding conditional VF and flow are \cite[Theorem 3]{lipman2022flow}
\begin{align}\label{eq:gaussian-conditional-vf}
    \vu_t(\vz|\vz_1) = \sigma_t(\vz_1)^{-1}\delfrac[\sigma_t]{t}(\vz_1)(\vz - \vect{\mu}_t(\vz_1)) + \delfrac[\vect{\mu}_t]{t}(\vz_1), \quad \vect{\phi}_t\pn{\vz|\vz_1} = \vect{\mu}_t\pn{\vz_1} + \sigma_t\pn{\vz_1}\vz
\end{align}
By \cite[Theorem 2]{lipman2022flow}, the marginal VF $\vu_t(\vz)$ can be approximated by a neural network $\vv_t(\vz; \theta)$ by minimizing the conditional flow matching (CFM) loss:
\begin{equation}\label{eq:CFM-loss}
\gL_{\rm CFM}(\theta) := \mathbb{E}_{t\sim U([0,1]),\vz_1\sim p_1(\vz_1), \vz \sim p_t(\vz|\vz_1)}[\|\vu_t(\vz|\vz_1)-\vv_t(\vz;\theta)\|_2^2],
\end{equation}
where the expectation is approximated using Monte Carlo (MC).

We may also extend to settings beyond $\rho_0 = \Gaussian(\vect{0}, \mat{I})$ and use arbitrary reference distributions $\rho_0$ by considering probability paths and VFs conditioned on {\em both endpoints}: $p_t(\vz|\vz_0, \vz_1)$ and $\vu_t(\vz|\vz_0, \vz_1)$, where $\vz_0 \sim \rho_0$ and $\vz_1 \sim \rho_1$.
Here, the path $p_t\pn{\vz|\vz_0,\vz_1}$ satisfies
\begin{align}
    p_0(\vz | \vz_0, \vz_1) \approx \delta(\vz - \vz_0), \quad \mathrm{and} \quad p_1(\vz | \vz_0, \vz_1) \approx \delta(\vz - \vz_1),\label{eq:endpoint-cond}
\end{align}
and the flow $\vect{\phi}_t\pn{\vz|\vz_0,\vz_1}$ of $\vu_t\pn{\vz|\vz_0,\vz_1}$ satisfies $p_t(\;\cdot\; | \vz_0, \vz_1) = \vect{\phi}_t(\;\cdot\; | \vz_0, \vz_1)_{\#}\rho_0$.
For example, we can choose the probability path
    $p_t(\vz|\vz_0, \vz_1) = \mathcal{N}(\vz|t \vz_1 + (1-t) \vz_0, \sigmamin^2 \mat{I}),
    \forall t \in [0, 1]$,
which satisfies conditions (\ref{eq:endpoint-cond}) if $\sigmamin \ll 1$.
Then, letting $p_t(\vz) := \mathbb{E}_{\vz_0\sim \rho_0, \vz_1 \sim \rho_1}\left[p_t(\vz|\vz_0, \vz_1)\right]$, the endpoint conditions $p_0(\vz) = \rho_0(\vz)$ and $p_1(\vz) = \rho_1(\vz)$ hold approximately. Now, introducing an arbitrary joint distribution $\rho(\vz_0, \vz_1)$ whose marginals are $\rho_0(\vz_0)$ and $\rho_1(\vz_1)$, the following relation holds, analogous to \eqref{eq:marginal-vf} (see \cite[Theorem 3.1]{tong2023improving}):
\begin{align}
    \vu_t(\vz) &= \iint \vu_t(\vz|\vz_0, \vz_1) \frac{p_t(\vz | \vz_0, \vz_1) \rho(\vz_0, \vz_1)}{p_t(\vz)} \dif \vz_0 \dif \vz_1. \label{eq:marginal-vf-2}
\end{align}
Again, $\vu_t(\vz)$ can be approximated by a neural network $\vv_t(\vz; \theta)$ by minimizing the following {\em generalized CFM loss} (\cite[Theorem 3.2]{tong2023improving})
\begin{align}
    \label{eq:generalized-cfm-loss}
    \gL_{\rm CFM}(\theta) := \mathbb{E}_{t\sim U[0,1], (\vz_0,\vz_1) \sim \rho(\vz_0, \vz_1), \vz \sim p_t(\vz|\vz_0, \vz_1)}[\|\vu_t(\vz|\vz_0, \vz_1)-\vv_t(\vz;\theta)\|_2^2].
\end{align}

\subsection{Data Assimilation}
\label{sec:background:data-assimilation}
DA frames sequential state estimation as Bayesian inference on a partially observed stochastic system \cite{jazwinski1970stochastic,law2015data,bach2024inverse}. Let $\{\vx_j\}_{j=0}^J \subset \R^d$ be latent states forming a Markov chain, and $\vy_{1:J}=(\vy_1,\ldots,\vy_J)$ with $\vy_j\in\R^{d_y}$ be noisy observations at times $j\in[J]$.

\paragraph{Problem Setting}
We consider the following stochastic model with additive Gaussian model and observation errors:
\begin{subequations}\label{eq:da_generative}
\begin{align}
    \vx_{j} &= \vect{\psi}(\vx_{j-1}) + \vect{\xi}_{j-1},
    && \vect{\xi}_{j-1}\sim\gN(\vect{0}, \mat{\Sigma}),
    && \vx_0\sim \gN(\vm_0, \mC_0), \label{eq:dyn}\\
    \vy_j   &= \vect{h}(\vx_j) + \vect{\eta}_j,
    && \vect{\eta}_j\sim\gN(\vect{0}, \mat{\Gamma}), \label{eq:obs}
\end{align}
\end{subequations}
where $\{\vect{\xi}_j\}$ is independent of $\vx_0$ and $\{\vect{\eta}_j\}$, and $\{\vect{\eta}_j\}$ is independent of $\{\vect{\xi}_j\}$. The induced transition density and likelihood are
\begin{align}
p(\vx_j|\vx_{j-1})=\gN\!\bigl(\vx_j|\vect{\psi}(\vx_{j-1}),\mat{\Sigma}\bigr),\quad
p(\vy_j|\vx_j)=\gN\!\bigl(\vy_j| \vect{h}(\vx_j), \mat{\Gamma}\bigr).
\label{eq:densities}
\end{align}

\paragraph{Objective}
At each time $j$, the objective is to compute the {\em filtering distribution} $p(\vx_j| \vy_{1:j})$, i.e., the law of the current state conditioned on all observations up to $j$.

\paragraph{Recursive Solution}
Let $p(\vx_{j-1}| \vy_{1:j-1})$ denote the previous filtering distribution. DA advances in two steps \cite{bach2024inverse}:
\begin{enumerate}[leftmargin=6mm]
\item \textbf{Prediction:} Propagate uncertainty
to obtain the \emph{predictive} (prior) distribution:
\begin{align}
p(\vx_j | \vy_{1:j-1})
= \int p(\vx_j| \vx_{j-1})\, p(\vx_{j-1}| \vy_{1:j-1}) \wrt{\vx_{j-1}}.
\label{eq:pred_dist}
\end{align}
This step encodes the Markovian evolution and accumulates model-error covariance $\mat{\Sigma}$.

\item \textbf{Analysis:} Incorporate the new observation $\vy_j$ via Bayes’ rule:
\begin{align}
p(\vx_j| \vy_{1:j})
= \frac{p(\vy_j| \vx_j)\, p(\vx_j| \vy_{1:j-1})}
        {\int p(\vy_j| \vx_j)\, p(\vx_j| \vy_{1:j-1}) \wrt{\vx_j}}.
\label{eq:analyze}
\end{align}

\end{enumerate}

\paragraph{Empirical Approximations from Finite Ensembles}
In realistic problems, evolving the full laws in \RefEq[eq:pred_dist][eq:analyze] is intractable. Practical DA methods such as the bootstrap particle filter (BPF) and the ensemble Kalman filter (EnKF) therefore approximate the filtering and predictive distributions using \emph{empirical measures} from a finite ensemble.
That is, at time $j$, we use the analysis ensemble $\{\vx_j^{(n)}\}_{n=1}^N$ to form the approximation $\sum_{n=1}^N
\delta_{\vx_j^{(n)}}(\mathrm{d}\vx)\approx p(\dif \vx_j| \vy_{1:j})$. During prediction, ensemble members are propagated via $\vx_{j+1}^{(n)} \sim p(\vx_{j+1} | \vx_j^{(n)})$, and to perform analysis, BPF uses importance resampling, whereas EnKF uses closed-form posterior computation based on empirical Gaussian approximations of the predictive distribution.

For more details, we refer \EDIT{the reader} to Appendix~\ref{app:DA}.

\subsection{EnSF}\label{sec:ensf}
EnSF
\cite{bao2024score,bao2024ensemble} introduces a filtering method based on DMs. It represents the predictive distribution $p(\vx_j|\vy_{1:j-1})$ via its {\em score function}, learned from particles $\{\hat{\vx}_j^{(n)}\}_{n=1}^N$ obtained by propagating the previous filtering samples through the dynamics. Express the forward SDE as
\(
\dif \vz_t^{j,n} = \vf(\vz_t^{j,n},t)\dif t + g(t)\dif \vect{w}_t, \quad \vz_0^{j,n}=\hat{\vx}_j^{(n)}
\),
and estimate the score $\vs(\vz_t^{j,n},t)\approx\nabla\log p(\vz_t^{j,n}|\vy_{1:j-1})$ via score matching \cite{bao2024score} or MC \cite{bao2024ensemble}. Assuming $p(\vy_j|\vx_j)\propto e^{-J(\vx_j;\vy_j)}$ for some $J>0$, EnSF then considers the reverse-time SDE with a guidance term:
\begin{align}\label{eq:backwards-guided-sde}
    \dif \vz_t^{j, n} = \Big[\vf(\vz_t^{j, n}, t) - g(t)^2 \Big(\vs(\vz_t^{j, n}, t) -\underbrace{c(t) \nabla J(\vz_t^{j,n}; \vy_j)}_{\mathrm{guidance}}\Big)\Big] \dif t + g(t)\dif \hat{\vect{w}}_t,
\end{align}
where $c(t)$ is a monotonically decreasing function with $c(0)=1$, $c(T)=0$, and $\hat{\vect{w}}_t$ the backward Wiener process. Solving \eqref{eq:backwards-guided-sde} backwards from $t=T$ to $0$ with i.i.d.\ initial conditions $\vz_T^{j,n}\sim\gN(\vect{0},\mI)$ for $n=1,\ldots,N$ yields approximate samples $\{\vx_j^{(n)}\}_{n=1}^N$ from $p(\vx_j|\vy_{1:j})$. These particles are then propagated to $j+1$ via \eqref{eq:pred_dist}, producing $\{\hat{\vx}_{j+1}^{(n)}\}_{n=1}^N$ from $p(\vx_{j+1}|\vy_{1:j})$, and the cycle continues.

\subsection{Additional Related Works}

Other diffusion-based DA methods include FlowDAS \cite{chen2025flowdas}, which learns $p(\vx_j|\vx_{j-1})$ once offline using stochastic interpolants \cite{albergo2023stochastic} and samples from $p(\vx_j|\vx_{j-1},\vy_j)$ via guidance, and DiffDA \cite{huang2024diffda} uses \EDIT{a conditional DM to sample} $p(\vx_j|\hat{\vx}_j,\vy_j)$.
Recent extensions of EnSF include latent-space EnSF \cite{si2024latent}, and EnSF with image inpainting \cite{liang2025ensemble}, to handle sparse observations. The score-based DA (SDA) method of \cite{rozet2023score} trains unconditional DMs on local segments of spatiotemporal trajectories that are pieced together to form global trajectories, and \EDIT{incorporates} observations via training-free guidance, such as diffusion posterior sampling \cite{chung2022diffusion}.

\section{EnFF: Our Proposed Ensemble Flow Filter}\label{sec:Algorithm}
This section details our proposed EnFF method (cf.~Algorithm~\ref{alg:enff}). Applying FM to DA involves three key challenges:
\begin{itemize}[leftmargin=5mm]
\item {\bf FM-DA Challenge 1}:
Training FM at each DA step is infeasible for long trajectories.
\item {\bf FM-DA Challenge 2}:
Only predictive (prior) samples are available to approximate the VF for sampling the filtering (posterior) distribution.
\item {\bf FM-DA Challenge 3}:
Classical limitations—such as mode collapse and near-Gaussian assumptions—should be overcome.
\end{itemize}

EnFF operates by: (1) using predictive samples from $\rho_1(\vx_j):=p(\vx_j|\vy_{1:j-1})$ to construct a VF $\vu_t(\vz)$ transporting a reference distribution $\rho_0$ to $\rho_1$ (Section~\ref{sec:mc-approximation}); (2) adding a guidance term $\vg_t(\vz;\vy_j)$ so that the guided VF $\vu_t'(\vz)=\vu_t(\vz)+\vg_t(\vz;\vy_j)$ transports $\rho_0$ to the filtering distribution $p(\vx_j|\vy_{1:j})\propto p(\vy_j|\vx_j)\rho_1(\vx_j)$ (Section~\ref{sec:mc_guidance}); and (3) propagating samples from $p(\vx_j|\vy_{1:j})$ to sample $p(\vx_{j+1}|\vy_{1:j})$.
The cycle continues with $\rho_1\leftarrow p(\vx_{j+1}|\vy_{1:j})$ (Section~\ref{sec:enff_alg}).
\begin{algorithm}[t!]
\caption{\footnotesize Ensemble Flow Filter}
\begin{algorithmic}[1]\scriptsize
\State {\bf Inputs:} Number of particles $N$, transition model $p(\vx_j | \vx_{j-1})$, observation model $p(\vy_j | \vx_j)$, observations $\{\vy_1, \vy_2, \dots, \vy_J\}$, reference distribution $\rho_0$, conditional VF $\vu_t(\vz|\vz_0, \vz_1)$
\State {\bf Initialize:} Sample $\{\vx_0^{(n)}\}_{n=1}^N \sim p(\vx_0)$

\For{$j = 1$ \textbf{to} $J$}
    \For{$n = 1$ \textbf{to} $N$}
        \State Sample $\vz_0^{(n)} \sim \rho_0(\vz_0)$
        \Comment{Sample reference}
        \State Sample $\vz_1^{(n)} = \hat{\vx}_j^{(n)} \sim p(\vx_{j} | \vx_{j-1}^{(n)})$
        \Comment{Sample target}
    \EndFor
    \State (Marginal VF) \,\, $\vu_t(\vz) \approx \sum_{n=1}^{N} w_n(\vz) \vu_t(\vz | \vz_0^{(n)}, \vz_1^{(n)})$, \quad $w_n(\vz) =$ \eqref{eq:marginal-vf-mc}
    \State (Guidance VF) \, $\vg_t(\vz; \vy_j) =$ \eqref{eq:guidance-vf} $\approx$ Estimate by MC or linearization
    \State (Guided VF) \,\quad $\vu_t'(\vz; \vy_j) = \vu_t(\vz) + \vg_t(\vz; \vy_j)$
    \State Denote the flow of $\vu_t'(\vz; \vy_j)$ by $\vect{\phi}_t'(\vz; \EDIT{\vy_j})$
    \For{$n = 1$ \textbf{to} $N$}
        \State $\vx_j^{(n)} = \vect{\phi}_1'(\vz_0^{(n)}; \EDIT{\vy_j})$
        \Comment{Propagate particles with the guided VF}
    \EndFor
\EndFor
\State {\bf Output:} Particle approximation $\{\vx_j^{(n)}\}_{n=1}^N$ for each $j$
\end{algorithmic}
\label{alg:enff}
\end{algorithm}

\subsection{Training-Free Approximation of the Marginal VF}\label{sec:mc-approximation}
We first construct the marginal VF $\vu_t(\vz)$ transporting $\rho_0$ to the predictive distribution $\rho_1$. Regressing a neural network $\vv_t(\vz;\theta)$ to $\vu_t(\vz)$ via the CFM loss \eqref{eq:CFM-loss} at each timestep $j$ is infeasible, so we adopt a training-free MC approximation to $\vu_t(\vz)$, addressing {\bf FM-DA challenge 1}. Following \eqref{eq:marginal-vf-2}, the MC approximation is:
\begin{align}
    \vu_t(\vz) \approx \sum_{n=1}^N w_n(\vz) \vu_t(\vz|\vz_0^{(n)}, \vz_1^{(n)}), \quad w_n(\vz) = \frac{p_t(\vz|\vz_0^{(n)}, \vz_1^{(n)})}{\sum_{m=1}^N p_t(\vz|\vz_0^{(m)}, \vz_1^{(m)})},\label{eq:marginal-vf-mc}
\end{align}
where $(\vz_0^{(n)}, \vz_1^{(n)}) \sim \rho(\vz_0, \vz_1)$ for $n\in[N]$, i.i.d. This
parallels how the score function in EnSF is approximated by MC \cite{bao2024ensemble}.
For the conditional VF $\vu_t(\vz|\vz_0,\vz_1)$, we consider two options. As noted in Section~\ref{sec:flow-matching}, this choice is determined by the reference $\rho_0$, which then determines the conditional path $p_t(\vz_t|\vz_0,\vz_1)$.
We first examine the commonly used optimal transport (OT) conditional VF.

\paragraph{OT VF \cite{lipman2022flow}}
Let $\rho_0(\vz_0)=\gN(\vz_0|\vect{0},\mI)$ and $p_t(\vz_t|\vz_0,\vz_1)=p_t(\vz_t|\vz_1)=\gN(\vz_t|t\vz_1,(1-(1-\sigmamin)t)^2\mI)$ for some $\sigmamin>0$. This yields the {\em OT conditional VF}
$\vu_t(\vz|\vz_1)=\frac{\vz_1-(1-\sigmamin)\vz}{1-(1-\sigmamin)t}$,
with flow $\vect{\phi}_t(\vz|\vz_1)=(1-(1-\sigmamin)t)\vz+t\vz_1$.
Plugging this conditional VF into \eqref{eq:marginal-vf-mc} yields an ODE
$\frac{\dif \vz_t}{\dif t}=\vu_t(\vz_t)$, which transports $\rho_0$ to $\rho_1$, replacing the SDE used in EnSF. The ODE is more sampling-efficient, as Euler integration has error $\gO(\Delta t)$ versus $\gO(\sqrt{\Delta t})$ for Euler–Maruyama on SDEs.
However, we should note that FM with the OT conditional VF has an equivalent DM formulation using SDEs and vice-versa \cite{gao2025diffusion}; e.g., the SDE used in EnSF \cite{bao2024ensemble} can be shown to have an equivalent FM formulation with  $p_t(\vz_t|\vz_1)=\gN(\vz_t|\EDIT{(1 - (1-\epsilon_\alpha)t)\vz_1, (\epsilon_\beta+(1-\epsilon_\beta)t)\mI})$, for some $\epsilon_\alpha, \epsilon_\beta > 0$. This is only slightly different from FM using the OT conditional VF.
Thus, we claim that the key advantage of FM lies in its {\em flexibility in flow design}, allowing more general references $\rho_0$ beyond Gaussians. Next, we exploit this flexibility to design a flow tailored to DA.

\paragraph{F2P VF}\label{ex:cfm}
We bridge the filtering and predictive distributions at timesteps $j-1$ and $j$ directly to avoid an intermediate Gaussian reference distribution.
Let $\rho_0(\vz_0)$ be the filtering distribution at $j-1$, and define  $p_t(\vz_t|\vz_0,\vz_1)=\gN(\vz_t|t\vz_1+(1-t)\vz_0,\sigmamin^2\mI)$ where  $\sigmamin>0$.
The conditional VF is then $\vu\EDIT{_t}(\vz_t|\vz_0,\vz_1)=\vz_1-\vz_0$ with flow
$\vect{\phi}_t(\vz|\vz_0,\vz_1)=\vz+t(\vz_1-\vz_0)$. We call this the {\em filtering-to-predictive} (F2P) conditional VF.

With the F2P conditional VF, we further leverage the transition kernel $p(\vx_j|\vx_{j-1})$ to couple F2P's reference and target distributions.
Specifically, we take $\rho(\vz_0, \vz_1) := p(\vz_1|\vz_0)\rho_0(\vz_0)$ as our joint measure -- clearly this satisfies $\int \rho(\vz_0, \vz_1) \wrt{\vz_1} = \rho_0(\vz_0)$ and by \eqref{eq:pred_dist}, we also have $\int \rho(\vz_0, \vz_1) \wrt{\vz_0} = \rho_1(\vz_1)$, making this a sound coupling.
This ``data-dependent coupling'' \cite{albergo2024stochastic} exploits the causal link between
the reference and target distributions to produce straighter flows\footnote{\rev{Note that marginal velocities can better preserve the straightness of the conditional velocities when the source and target distributions are coupled more tightly. Intuitively, this coupling pairs each filtering sample with its dynamically propagated predictive counterpart, so the marginal VF averages conditional velocities over coherent source--target pairs rather than unrelated pairs. This idea has been exploited in works such as ReFlow
\cite{liu2023flow} and data-dependent coupling \cite{albergo2024stochastic} to improve the sampling efficiency of FM. }}, and thus reduce the number of ODE integration steps.
Figure~\ref{fig:ot-vs-f2p} compares OT and F2P flows, letting $\rho_0(\vz_0)=\gN(\vz_0|\vect{0},\mI)$ and $\rho_1(\vz_1)=\int p(\vz_1|\vz_0)\rho_0(\vz_0) \wrt{\vz_0}$, with transition kernel
\begin{align}
    p(\vz_1 | \vz_0) = \delta\Big(\vz_1 - \big(\vz_0 + \frac{6 \vz_0}{1 + \vz_0^2} + \boldsymbol{b}\big) \Big), \quad \boldsymbol{b} = (5, 5)^\Transpose\EDIT{,}
\end{align}
where $z_0 \mapsto \frac{6 z_0}{1 + z_0^2}$ is understood to be applied element-wise.
We see that F2P with data coupling produces notably straighter flows than OT. Similar effects can be achieved via {\em OT coupling} \cite{tong2023improving}, but this requires computing a transport map at each timestep $j$,
which is infeasible in our setting. In contrast, F2P avoids this overhead.

\subsection{Conditioning on Observations via Guidance}
\label{sec:mc_guidance}
Next, we construct the guidance VF $\vg_t(\vz)$ to nudge states toward observations, enabling sampling from the filtering distribution and addressing {\bf FM-DA challenges 2 and 3}. Following \cite{feng2025guidance}, we seek $\vg_t(\vz)$ such that the guided VF $\vu_t'(\vz):=\vu_t(\vz)+\vg_t(\vz)$ transports $\rho_0$ to the posterior $p(\vz_1|\vy)\propto p(\vy|\vz_1)\rho_1(\vz_1)$, assuming $\vu_t(\vz)$ transports $\rho_0$ to $\rho_1$ (the predictive distribution). Assuming $p(\vy|\vz_1)\propto e^{-J(\vz_1;\vy)}$ for some $J>0$, the expression for $\vg_t(\vz)$ is \cite[Appendix A.1]{feng2025guidance}:
\begin{align}\label{eq:guidance-vf}
    \vg_t(\vz_t; \vy) = \iint \left(\frac{e^{-J(\vz_1 ; \vy)}}{Z_t(\vz_t; \vy)} - 1 \right) \vu_t(\vz_t | \vz_0, \vz_1) p(\vz_0, \vz_1 | \vz_t)  \wrt{\vz_0}\wrt{\vz_1},
\end{align}
where $Z_t(\vz_t; \vy) := \int e^{-J(\vz_1 ; \vy)} p(\vz_1 | \vz_t) \wrt{\vz_1}$.
From \eqref{eq:marginal-vf}, we also have:
\begin{align}\label{eq:guided-vf}
    \vu_t'(\vz_t; \vy) = \iint \frac{e^{-J(\vz_1 ; \vy)}}{Z_t(\vz_t; \vy)} \vu_t(\vz_t | \vz_0, \vz_1) p(\vz_0, \vz_1 | \vz_t)  \wrt{\vz_0}\wrt{\vz_1}.
\end{align}
\RefEq*[eq:guidance-vf] and \RefEq*[eq:guided-vf] are generally intractable.
Hence, we adopt two approximation methods proposed in \cite{feng2025guidance}:

\paragraph{MC Guidance} Let $(\vz_0^{(n)}, \vz_1^{(n)})\sim \rho(\vz_0, \vz_1), n\in [N]$,
and consider the
MC approximation:
\begin{align}\label{eq:mc-guided-vf}
    \vu_t'(\vz) \approx \sum_{n=1}^N w_n'(\vz; \vy) \vu_t(\vz|\vz_0^{(n)}, \vz_1^{(n)}), \,\,\, w_n'(\vz; \vy) = \frac{e^{-J(\vz_1^{(n)} ; \vy)} p_t(\vz|\vz_0^{(n)}, \vz_1^{(n)})}{\sum_{m=1}^N e^{-J(\vz_1^{(m)} ; \vy)} p_t(\vz|\vz_0^{(m)}, \vz_1^{(m)})}.
\end{align}
In Section~\ref{sec:Theory}, we show that MC guidance yields a filtering algorithm that is approximately equivalent to BPF, inheriting its limitations—most notably, mode collapse \cite{bengtsson2008curse, snyder2008obstacles}. Therefore, we also consider the following guidance method.

\paragraph{Localized Guidance} We approximate the guidance VF \eqref{eq:guidance-vf} by linearizing the likelihood $e^{-J(\vz_1;\vy)}$ at point $\hat{\vz}_1(\vz_t) := \mathbb{E}_{\vz_1 \sim p(\vz_1 | \vz_t)}[\vz_1]$. The localized guidance becomes \cite{feng2025guidance}
\begin{equation}\label{eq:local_guidance}
    \vg_t(\vz_t;\vy)\approx -\E_{(\vz_0,\vz_1)\sim p(\vz_0,\vz_1|\vz_t)}\left[\vu_t(\vz_t|\vz_0,\vz_1)(\vz_1 - \hat{\vz}_1)^\Transpose\right]\nabla_{\hat{\vz}_1}J(\hat{\vz}_1),
\end{equation}
where $\vu_t(\vz_t|\vz_0,\vz_1)(\vz_1 - \hat{\vz}_1)^\Transpose\in \R^{d\times d}$.
Under the further assumption that the integral curve of $\vu_t(\vz_t | \vz_0, \vz_1)$ takes the form $\vz_t = \alpha_t \vz_1 + \beta_t \vz_0 + \sigma_t \vect{\varepsilon}$, i.e. the {\em affine flow assumption},
for some Gaussian noise $\vect{\varepsilon} \sim \mathcal{N}(\vect{0}, \mat{I})$ and \EDIT{time-dependent} $C^1$-curves $\alpha_t, \beta_t, \sigma_t$ such that $\sigma_t, \dot{\sigma}_t$ are sufficiently small,
one can show that (see \cite[Appendix A.10]{feng2025guidance} for the derivation)
\begin{align}
    \mathrm{\eqref{eq:local_guidance}} = -\underbrace{\frac{\dot{\alpha}_t\beta_t - \dot{\beta}_t\alpha_t}{\beta_t} \mat{\Sigma}_{1|t}}_{=: \mat{\Lambda}_t} \nabla_{\hat{\vz}_1} J(\hat{\vz}_1; \vy),
\end{align}
where $\mat{\Sigma}_{1|t} := \mathbb{E}_{\vz_1 \sim p(\vz_1|\vz_t)}\left[(\vz_1 - \hat{\vz}_1)(\vz_1 - \hat{\vz}_1)^\Transpose\right]$.
\rev{The resulting linearized localized guidance can be viewed as an FM analogue of {\em diffusion posterior sampling (DPS) \cite{chung2022diffusion}}, a commonly used guidance approximation in diffusion models: both replace an intractable posterior guidance field by a tractable likelihood-gradient correction evaluated at an estimated endpoint, here $\hat{\vz}_1(\vz_t)$. This approximation is a modeling choice for guidance and is not related to the DA setting.}
In practice, computing $\mat{\Sigma}_{1|t}$ is intractable, so we use a constant scheduler
$\mat{\Lambda}_t \approx \lambda \mat{I}$ for some constant $\lambda > 0$.
\rev{In our DA setting, the affine-flow assumption is reasonable because it is a property of the chosen conditional paths, rather than a linearity assumption on the dynamics or observation model; the marginal VF used for sampling need not be affine.}
Note that both OT and F2P conditional VFs satisfy the affine flow assumption: for OT,
we have $\vz_t = \vect{\phi}_t(\vz_0 | \vz_1) = (1-(1-\sigmamin)t) \vz_0 + t\vz_1$, which satisfies the assumption with $\alpha_t = t$, $\beta_t = 1 - (1-\sigmamin)t$, and $\sigma_t = 0$. For F2P, noting that $p_t(\vz|\vz_0,\vz_1) |_{t=0} =\gN(\vz|\vz_0,\sigmamin^2\mI)$ and therefore the initial samples are given by $\tilde{\vz}_0 = \vz_0 + \sigmamin \vect{\varepsilon}, \, \vect{\varepsilon} \sim \mathcal{N}(\vect{0}, \mat{I})$, we have
\begin{align*}
    \vz_t = \vect{\phi}_t(\tilde{\vz}_0 | \vz_0, \vz_1)
    = \tilde{\vz}_0 + t(\vz_1 - \vz_0)
    = t \vz_1 + (1-t)\vz_0 + \sigmamin \vect{\varepsilon},\ \ \vect{\varepsilon} \sim \mathcal{N}(\vect{0}, \mat{I}).
\end{align*}
This satisfies the affine flow assumption with $\alpha_t = t$, $\beta_t = 1-t$, and $\sigma_t = \sigmamin$.

\begin{remark}\label{rmk:why_guidance}
Other methods, like concatenating VFs \cite{xu2024local} (e.g., rescheduling
on $t \in [0,0.5]$ and appending another flow), can also transform a prior-sampling VF to one
for the posterior. However, concatenated VFs
risk sample degeneracy if an intermediate state approaches an empirical distribution. Guided VFs avoid this by modifying the VF over the entire
$t \in [0,1]$.
\end{remark}

\begin{remark}
Even if $\vu_t(\vz_t)$ induces straight flows, the guided VF $\vu_t'(\vz_t)$ may not. This might seem to weaken our claim about F2P’s advantage, but our experiments in \RefSec[sec:Numerics] show that F2P is still effective with a small number of sampling steps $T$.
\end{remark}

\subsection{EnFF}\label{sec:enff_alg}
We now summarize our FM-based DA algorithm in Algorithm~\ref{alg:enff}, which, like classical filtering (Section~\ref{sec:background:data-assimilation}), applies prediction and analysis steps sequentially:

\paragraph{Prediction Step}
Given particles $\{\vx^{(n)}_{j-1}\}_{n=1}^N$ from the filtering distribution $p(\vx_{j-1}|\vy_{1:j-1})$, propagate them via the Markov transition kernel: $\hat{\vx}_j^{(n)} \sim p(\vx_j | \vx_{j-1}^{(n)})$ for $n \in [N]$. Next, choose a flow design (OT or F2P, see Section \ref{sec:mc-approximation}) and compute the corresponding conditional VF $\vu_t(\vz_t|\vz_0, \vz_1)$ and the MC approximation of the marginal VF $\vu_t(\vz)$ via \eqref{eq:marginal-vf-mc}, using sample pairs $(\vz_0^{(n)}, \vz_1^{(n)}) = (\vz_0^{(n)}, \hat{\vx}_j^{(n)})$ with $\vz_0^{(n)} \sim \rho_0(\vz_0)$.

\paragraph{Analysis Step}
For the current observation $\vy_j$, construct the guidance VF $\vg_t(\vz; \vy_j)$ in \eqref{eq:guidance-vf} either by its MC or localized approximation. Using the guided VF $\vu_t'(\vz; \vy_j) = \vu_t(\vz) + \vg_t(\vz; \vy_j)$ with flow $\vect{\phi}_t'(\vz; \vy_j)$, generate particles $\vx_j^{(n)} = \vect{\phi}_1'(\vz_0^{(n)}; \vy_j)$ for $\vz_0^{(n)} \sim \rho_0$, $n \in [N]$.

\rev{
As a sanity check, \RefFig[fig:reviewer2:q5:rotation2d:cmp] shows that EnFF can track the true posterior computed by the Kalman filter in \EDIT{a} linear Gaussian system $\vect{x}_j = \mat{R}\vect{x}_{j-1}$ and $\vect{y}_j = \vect{x}_j + \vect{\eta}_j$, $ \vect{\eta}_j\sim\Gaussian\pn{\vect{0},\mat{I}}$,
where $\mat{R}$ is a rotation matrix that rotates, in this case, by about 5.9 radians.
}

\begin{figure}[!h]
    \centering
    \begin{tabular}{cc}
         \multicolumn{2}{c}{\includegraphics[scale=.4]{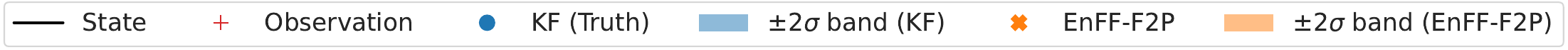}}
         \\
         \includegraphics[scale=.29]{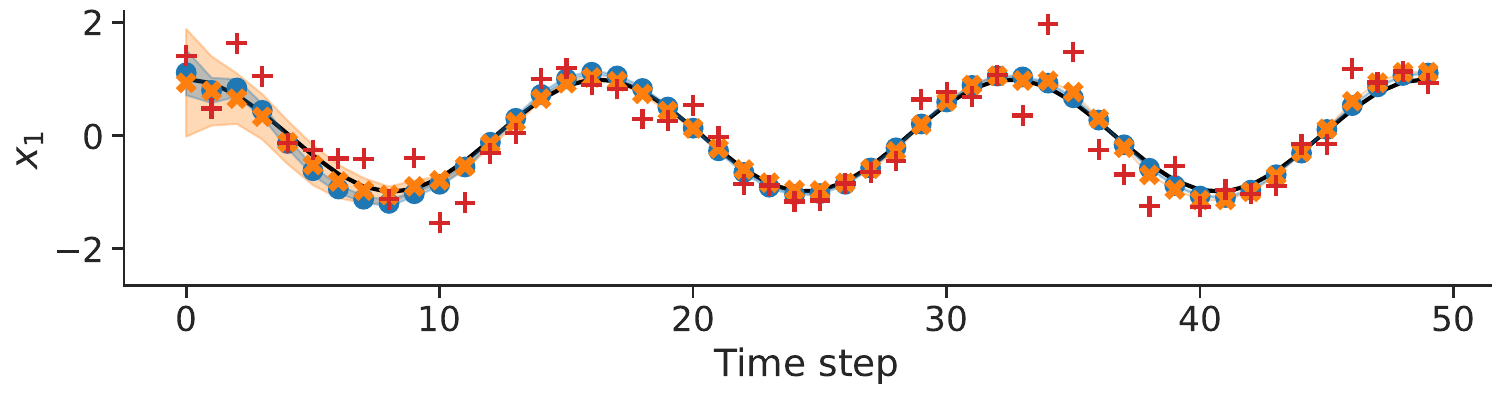}
         &
         \includegraphics[scale=.29]{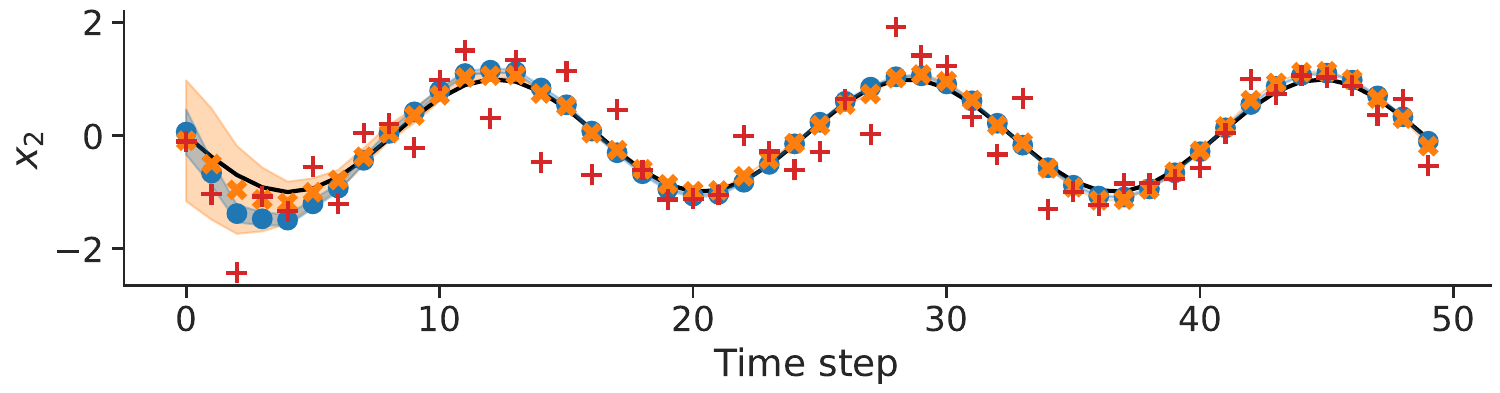}
    \end{tabular}\vspace{-0.3cm}
    \caption{\rev{
        Comparison of the Kalman filter and EnFF-F2P on a 2D linear-Gaussian DA problem.
        The dynamics are rotation by a fixed amount, and identity observations are collected every time step.
        We see that EnFF-F2P tracks the true posterior computed by the Kalman filter.
    }}
    \label{fig:reviewer2:q5:rotation2d:cmp}
\end{figure}

\section{
Theoretical Results}\label{sec:Theory}
This section provides the theoretical foundations of EnFF by analyzing the training-free MC approximation of the marginal vector field and establishing connections to classical filters. All auxiliary definitions and lemmas required for the proofs in this section are provided in Appendix~\ref{app:prelims}.

\subsection{MC Marginal VF}
\rev{This section establishes the theoretical properties of the training-free MC approximation for the marginal VF introduced in Section~\ref{sec:mc-approximation}.}

\rev{We first formalize the exact form of the VF obtained via MC approximation. Specifically, Proposition~\ref{prop:equi_CFM} demonstrates that optimizing the empirical CFM loss directly recovers the MC-approximated marginal VF defined in \eqref{eq:marginal-vf-mc}. The proof proceeds by localizing the expected loss at a fixed point $(\vz, t)$, applying the MC approximation over the sampled pairs, and analytically computing the optimal vector field by setting its gradient to zero.}

\begin{restatable}{proposition}{propEquiCFM}\label{prop:equi_CFM}
Let $\set{(\vz_0^{(n)}, \vz_1^{(n)})}_{n=1}^N \sim \rho(\vz_0, \vz_1)$ be i.i.d. samples. Then, the unique minimizer of the MC-approximated CFM loss \eqref{eq:CFM-loss} at each $(\vx, t)$ is precisely the expression given in \eqref{eq:marginal-vf-mc}, corresponding to the MC approximation of the marginal VF.
\end{restatable}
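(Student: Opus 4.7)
}
The plan is to start from the generalized CFM loss \eqref{eq:generalized-cfm-loss} (which is the version appropriate for the two-endpoint setting used in EnFF), replace the expectation over $(\vz_0,\vz_1)\sim\rho(\vz_0,\vz_1)$ by the empirical average over the i.i.d.\ pairs $\{(\vz_0^{(n)},\vz_1^{(n)})\}_{n=1}^N$, and then perform a pointwise minimization in $\vv$ at each $(\vz,t)$. Concretely, the MC-approximated loss is
\begin{align*}
\hat{\gL}_{\rm CFM}(\vv)
=\frac{1}{N}\sum_{n=1}^N \E_{t\sim U[0,1],\,\vz\sim p_t(\vz|\vz_0^{(n)},\vz_1^{(n)})}
\bigl[\|\vu_t(\vz|\vz_0^{(n)},\vz_1^{(n)})-\vv_t(\vz)\|_2^2\bigr],
\end{align*}
which I would unfold into a Lebesgue integral over $(\vz,t)$ by interchanging summation and integration (justified by Fubini, since all integrands are nonnegative).

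After swapping, $\hat{\gL}_{\rm CFM}$ takes the form $\int_0^1\!\int_{\R^d} F_t(\vz,\vv_t(\vz))\,\dif\vz\,\dif t$, where
\begin{align*}
F_t(\vz,\vv)=\frac{1}{N}\sum_{n=1}^N p_t(\vz|\vz_0^{(n)},\vz_1^{(n)})\,\|\vu_t(\vz|\vz_0^{(n)},\vz_1^{(n)})-\vv\|_2^2 .
\end{align*}
Since the minimization is taken pointwise in $(\vz,t)$ (as the statement specifies), I would simply solve the finite-dimensional weighted least-squares problem $\min_{\vv\in\R^d} F_t(\vz,\vv)$. Setting $\nabla_{\vv} F_t(\vz,\vv)=0$ yields the normal equation
\begin{align*}
\Bigl(\sum_{n=1}^N p_t(\vz|\vz_0^{(n)},\vz_1^{(n)})\Bigr)\vv
=\sum_{n=1}^N p_t(\vz|\vz_0^{(n)},\vz_1^{(n)})\,\vu_t(\vz|\vz_0^{(n)},\vz_1^{(n)}),
\end{align*}
whose solution, whenever the denominator is nonzero, is exactly the weighted average in \eqref{eq:marginal-vf-mc}. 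Strict convexity of $F_t(\vz,\cdot)$ in $\vv$ (the Hessian is $2\bigl(\sum_n p_t(\vz|\vz_0^{(n)},\vz_1^{(n)})\bigr)\mI$) ensures uniqueness of the minimizer at such $(\vz,t)$.

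The only real subtlety, and the step I would flag as the main obstacle, is handling degeneracy: if all conditional densities $p_t(\vz|\vz_0^{(n)},\vz_1^{(n)})$ vanish at some $(\vz,t)$, then $\vv$ is unconstrained there and the ``unique'' minimizer is only defined on the effective support of the mixture $\tfrac{1}{N}\sum_n p_t(\vz|\vz_0^{(n)},\vz_1^{(n)})$. I would address this by restricting the claim to this support (which contains all points the flow actually visits), and noting that for the Gaussian conditional paths used in Section~\ref{sec:mc-approximation} (both OT and F2P), $p_t(\vz|\vz_0^{(n)},\vz_1^{(n)})>0$ for every $\vz\in\R^d$ and $t\in(0,1)$, so the degeneracy is vacuous. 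Matching this expression to \eqref{eq:marginal-vf-mc} then completes the proof.
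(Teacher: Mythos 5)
Your proposal is correct and follows essentially the same route as the paper: both reduce to the pointwise weighted least-squares problem at each $(\vz,t)$ with weights proportional to $p_t(\vz|\vz_0^{(n)},\vz_1^{(n)})$, solve the normal equation, and obtain the weighted average in \eqref{eq:marginal-vf-mc} (the paper MC-approximates the conditional expectation after a Bayes swap, while you discretize the outer expectation first and then apply Fubini, but the resulting objectives differ only by a positive scalar factor and hence have the same minimizer). Your additional observation about strict convexity and the vacuity of the degeneracy issue for the Gaussian OT and F2P paths is a point the paper's proof leaves implicit.
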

\begin{proof}
Consider the loss \eqref{eq:generalized-cfm-loss} with the conditional probability path $p_t(\vz|\vz_0,\vz_1)$ and corresponding VF $\vu_t(\vz|\vz_0,\vz_1)$:
\begin{equation}
\begin{split}
\gL_{\rm CFM}(\theta) :=& \E_{t\sim U[0,1],(\vz_0,\vz_1)\sim \rho(\vz_0,\vz_1),\vz\sim p_t(\vz|\vz_0,\vz_1)}[\|\vu_t(\vz|\vz_0,\vz_1)-\vv_t(\vz;\theta)\|_2^2]\\
=& \E_{t\sim U[0,1],(\vz_0,\vz_1)\sim p_t(\vz_0,\vz_1|\vz),\vz\sim p_t(\vz)}
[\|\vu_t(\vz|\vz_0,\vz_1)-\vv_t(\vz;\theta)\|^2_2].
\end{split}
\end{equation}
For a fixed pair $(\vz,t)$, the optimal VF $\vu_t(\vz)$ is given by minimizing
\begin{align}
    \nonumber
    \gL^{(\vz,t)}_{\mathrm{CFM}}(\vv) &:= \E_{(\vz_0,\vz_1)\sim p_t(\vz_0,\vz_1|\vz)}\bigl\|\,\vv-\vu_t(\vz|\vz_0,\vz_1)\bigr\|^2\\
    \nonumber
    &= \iint \bigl\|\,\vv-\vu_t(\vz| \vz_0,\vz_1)\bigr\|^2_2 \frac{p_t(\vz|\vz_0,\vz_1)p(\vz_0,\vz_1)}{p_t(\vz)} \wrt{\vz_0}\wrt{\vz_1}\\
    &\approx \sum_{n=1}^N \bigl\|\,\vv-\vu_t(\vz| \vz_0^{(n)},\vz_1^{(n)})\bigr\|^2_2 \frac{p_t(\vz|\vz_0^{(n)},\vz_1^{(n)})}{\sum_{m=1}^N p_t(\vz| \vz_0^{(m)},\vz_1^{(m)})}
    =: \mathcal{L}^{(\vz,t)}_\mathrm{MC-CFM}(\vv), \label{eq:MC_CFM_loss}
\end{align}
where we apply the MC approximation based on i.i.d. samples $\{(\vz_0^{(n)},\vz_1^{(n)})\}_{n=1}^N\sim \rho(\vz_0,\vz_1)$ from the third line to the fourth.

By setting the gradient of the MC-CFM loss (\eqref{eq:MC_CFM_loss}) with respect to $\vv$ to zero, i.e.,
\begin{equation}
\nabla_{\vv}\,\mathcal{L}^{(\vz,t)}_{\mathrm{MC\text{-}CFM}}(\vv)
= 2 \sum_{n=1}^N \frac{p_t(\vz | \vz_0^{(n)},\vz_1^{(n)})}{\sum_{m=1}^N p_t(\vz | \vz_0^{(m)},\vz_1^{(m)})}\bigl(\vv - \vu_t(\vz| \vz_0^{(n)},\vz_1^{(n)})\bigr) = 0,
\end{equation}
we see that the optimal VF is given by
\begin{equation*}
    \vu_t^{\rm MC}(\vz) = \argmin_{\vv} \mathcal{L}^{(\vz,t)}_\mathrm{MC-CFM}(\vv) = \sum_{n=1}^N w_n(\vz) \vu_t(\vz|\vz_0^{(n)}, \vz_1^{(n)}),\ w_n(\vz) = \tfrac{p_t(\vz|\vz_0^{(n)}, \vz_1^{(n)})}{\sum_{m=1}^N p_t(\vz|\vz_0^{(m)}, \vz_1^{(m)})}.
\end{equation*}
\end{proof}

\rev{Next, we analyze the terminal distribution induced by this MC approximation. Proposition~\ref{prop:equi_empirical} establishes that as the variance $\sigmamin \to 0$, the distribution at $t=1$ converges weakly to the empirical measure of the target samples. This property mirrors data memorization in diffusion models \cite{baptista2025memorization} and formalizes the connection between EnFF and particle filters. The result is established by verifying that the empirical mixture of the conditional probability paths satisfies the continuity equation governed by the MC marginal VF, and subsequently evaluating the terminal distribution via Lemma~\ref{lem:weak-convergence} for the weak convergence.}

\begin{restatable}{proposition}{propEquiEmpirical}\label{prop:equi_empirical}
Let $(\vz_0^{(n)}, \vz_1^{(n)}) \sim \rho(\vz_0, \vz_1)$, i.i.d. $\forall n \in [N]$. Consider the MC approximation of
$\vu_t(\vz)$ in \eqref{eq:marginal-vf-mc}, with reference measure $\rho_0$ and conditional path $p_t(\vz|\vz_0, \vz_1)$ satisfying $p_1(\vz | \vz_0, \vz_1) = \mathcal{N}(\vz|\vz_1, \sigmamin^2\mat{I})$. Then
$p_1(\vz) = \frac1N \sum_{n=1}^N\gN(\vz|\vz_1^{(n)},\sigmamin^2\mat{I})$, and we have weak convergence (Definition~\ref{def:weak_convergence}) $p_1(\dif\vz) \Longrightarrow \tfrac1N \sum_{n=1}^N \delta_{\vz_1^{(n)}}(\dif\vz)$ as $\sigmamin \to 0$.
\end{restatable}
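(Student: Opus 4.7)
The plan is to prove the two claims in sequence: first identifying $p_1$ as the equally-weighted Gaussian mixture, then sending $\sigmamin \to 0$ to get the claimed weak limit.

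For the first claim, I would argue that the MC-approximated VF $\vu_t^{\rm MC}(\vz) = \sum_{n=1}^N w_n(\vz)\,\vu_t(\vz|\vz_0^{(n)},\vz_1^{(n)})$ is precisely the marginal VF that one obtains from Proposition~\ref{prop:equi_CFM} when the underlying joint law is replaced by the empirical measure $\tfrac{1}{N}\sum_{n=1}^N \delta_{(\vz_0^{(n)},\vz_1^{(n)})}$. Concretely, I would verify via the continuity equation that $q_t(\vz) := \tfrac{1}{N}\sum_{n=1}^N p_t(\vz|\vz_0^{(n)},\vz_1^{(n)})$ is the probability path generated by $\vu_t^{\rm MC}$: each conditional density satisfies $\partial_t p_t^{(n)} + \nabla\cdot(p_t^{(n)}\vu_t^{(n)}) = 0$ by construction; summing over $n$ and using the identity $\sum_n p_t^{(n)}\vu_t^{(n)} = (\sum_m p_t^{(m)})\,\vu_t^{\rm MC}$ (which is immediate from the definition of $w_n$) gives $\partial_t q_t + \nabla\cdot(q_t \vu_t^{\rm MC}) = 0$. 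Evaluating at $t=1$ and invoking $p_1(\vz|\vz_0,\vz_1) = \gN(\vz|\vz_1,\sigmamin^2\mat{I})$ yields $p_1(\vz) = \tfrac{1}{N}\sum_{n=1}^N \gN(\vz|\vz_1^{(n)},\sigmamin^2\mat{I})$.

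For the second claim, I would invoke the portmanteau theorem (Definition~\ref{def:weak_convergence} in Appendix~\ref{app:prelims}): for any bounded continuous $f:\R^d\to\R$,
\begin{equation*}
\int f(\vz)\,p_1(\dif\vz) \;=\; \frac{1}{N}\sum_{n=1}^N \int f(\vz)\,\gN(\vz|\vz_1^{(n)},\sigmamin^2\mat{I})\,\dif\vz.
\end{equation*}
Each summand converges to $f(\vz_1^{(n)})$ as $\sigmamin\to 0$, since $\gN(\,\cdot\,|\vz_1^{(n)},\sigmamin^2\mat{I})$ is a standard approximate identity (the change of variables $\vz = \vz_1^{(n)} + \sigmamin\vect{\varepsilon}$ with $\vect{\varepsilon}\sim\gN(\vect{0},\mat{I})$ and continuity of $f$, combined with dominated convergence using the uniform bound on $f$, yields the limit). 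Summing the $N$ terms gives $\int f\,\dif p_1 \to \tfrac{1}{N}\sum_n f(\vz_1^{(n)}) = \int f\,\dif\bigl(\tfrac{1}{N}\sum_n \delta_{\vz_1^{(n)}}\bigr)$, establishing weak convergence.

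There is no real obstacle here; both steps reduce to standard facts. The only point requiring minor care is the first step, namely confirming that the MC-approximated VF—originally defined in \eqref{eq:marginal-vf-mc} only as an approximation of $\vu_t(\vz)$—actually generates the empirical mixture path $q_t$ exactly, so that evaluating at $t=1$ is justified. This is handled cleanly by the continuity-equation verification sketched above, which makes the identification $p_t^{\rm MC} = q_t$ rigorous rather than asymptotic.
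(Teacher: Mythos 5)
Your proposal is correct and follows essentially the same route as the paper: the paper likewise averages the conditional continuity equations to identify $\tfrac1N\sum_n p_t(\vz|\vz_0^{(n)},\vz_1^{(n)})$ as the path generated by the MC vector field (appealing to Remark~\ref{rmk:uniqueness_of_marginal_probability_path} for uniqueness, the point you also flag as needing care), and then invokes Lemma~\ref{lem:weak-convergence} for the Gaussian-mixture-to-empirical-measure limit. The only difference is that you prove that weak-convergence step directly via the approximate-identity argument, where the paper cites Billingsley.
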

\begin{proof}
The MC approximation of the marginal VF is
\begin{equation}
    \vu^{\rm MC}_t(\vz) = \sum_{n=1}^N w_n(\vz) \vu_t(\vz|\vz_0^{(n)}, \vz_1^{(n)}), \quad w_n(\vz) = \frac{p_t(\vz|\vz_0^{(n)}, \vz_1^{(n)})}{\sum_{m=1}^N p_t(\vz|\vz_0^{(m)}, \vz_1^{(m)})}.
\end{equation}

Since
$p_t(\vz|\vz_0^{(n)},\vz_1^{(n)})$ is generated by $\vu_t(\vz|\vz_0^{(n)},\vz_1^{(n)})$ for any $n\in [N]$, they must satisfy the continuity equation:
\begin{equation}
    \frac{\partial}{\partial t} p_t(\vz|\vz_0^{(n)},\vz_1^{(n)}) + \nabla_\vz\cdot \left[p_t(\vz|\vz_0^{(n)},\vz_1^{(n)})\vu_t(\vz|\vz_0^{(n)},\vz_1^{(n)})\right] = 0.
\end{equation}
Therefore, averaging over samples, we get
\begin{equation}
\begin{split}
    &\frac{\partial}{\partial t} \left[\frac{1}{N}\sum_{n=1}^N p_t(\vz|\vz_0^{(n)},\vz_1^{(n)})\right] + \nabla_\vz\cdot \left[\frac{1}{N}\sum_{n=1}^N p_t(\vz|\vz_0^{(n)},\vz_1^{(n)})\vu_t(\vz|\vz_0^{(n)},\vz_1^{(n)})\right] \\
    =&\frac{\partial}{\partial t} \left[\frac{1}{N}\sum_{n=1}^N p_t(\vz|\vz_0^{(n)},\vz_1^{(n)})\right] + \nabla_\vz\cdot \left[\left(\frac{1}{N}\sum_{n=1}^N p_t(\vz|\vz_0^{(n)},\vz_1^{(n)})\right)\vu^{\rm MC}_t(\vz)\right] = 0
\end{split}
\end{equation}
with initial condition $\frac{1}{N}\sum_{n=1}^N p_0(\vz|\vz_0^{(n)},\vz_1^{(n)}) = \frac{1}{N}\sum_{n=1}^N\rho_0(\vz) = \rho_0(\vz).$
This implies that $p_t(\vz) := \frac{1}{N}\sum_{n=1}^N p_t(\vz|\vz_0^{(n)},\vz_1^{(n)}) $ is the marginal probability path corresponding to the MC marginal VF $\vu^{\rm MC}_t(\vz)$ (for uniqueness, see Remark~\ref{rmk:uniqueness_of_marginal_probability_path}). In particular, at time $t=1$, we have $p_1(\vz) = \frac1N \sum_{n=1}^N \gN(\vz|\vz_1^{(n)},\sigmamin^2\mat{I}).$
Then, by Lemma \ref{lem:weak-convergence}, we establish the weak convergence
\begin{align}
    p_1(\dif \vz) \Longrightarrow \frac1N \sum_{n=1}^N \delta_{\vz_1^{(n)}}(\dif \vz),\ \text{as}\ \sigmamin\rightarrow 0.
\end{align}
\end{proof}

\begin{remark}[Uniqueness of the Marginal Probability Path]
\label{rmk:uniqueness_of_marginal_probability_path}
Under some general conditions of the conditional VF $\vu_t(\vz|\vz_0,\vz_1)\colon\R^d\to\R^d$, the uniqueness of the marginal probability path is guaranteed \cite{villani2008optimal,bogachev2015uniqueness}. That is, if $p_t^{(1)}$ and $p_t^{(2)}$ satisfy the continuity equation
\begin{equation}
    \frac{\partial}{\partial t}p_t(\vz) +\nabla_{\vz}\cdot\bigl[p_t(\vz)\,\vu_t(\vz)\bigr]=0,
\quad
p_{0}(\vz)=\rho_0(\vz), \quad \vz\in\R^d ,\quad t\in[0,T],
\end{equation}
we have $p_t^{(1)} = p_t^{(2)}
\quad\text{a.e.\ on }\R^d,\;\forall t\in[0,T].$
Our OT VF and F2P VF defined in Section~\ref{sec:Algorithm} satisfy the uniqueness condition.
\end{remark}

\subsection{Connections to Classical Filters}
Next, we connect EnFF and classical filtering algorithms. Specifically, when using MC guidance as defined in \eqref{eq:mc-guided-vf}, EnFF approximates the behavior of BPF (\EDIT{see} Appendix~\ref{app:BPF} for details).

\rev{Theorem~\ref{thm:equi_BPF} establishes that the equivalence between EnFF and BPF holds specifically in the limit where the diffusion variance vanishes ($\sigmamin \to 0$) under MC guidance.  The proof constructs the marginal probability path governed by the MC-guided vector field, showing that it incorporates the observation likelihood weights $w_n(\vy)$ directly into the flow. At the terminal time $t=1$, the induced distribution forms a Gaussian mixture centered at the predictive particles. By taking the limit as the diffusion variance $\sigmamin \to 0$, we demonstrate that this Gaussian mixture weakly converges to the discrete Dirac mixture representing the BPF filtering distribution.}

\begin{restatable}{theorem}{thmEquiBPF}\label{thm:equi_BPF}
Consider EnFF with reference measure $\rho_0$ and conditional path $p_t(\vz|\vz_0, \vz_1)$ such that
$p_1(\vz | \vz_0, \vz_1) = \mathcal{N}(\vz|\vz_1, \sigmamin^2\mat{I})$,
and let the guidance VF be approximated via MC. Then, in the limit $\sigmamin \to 0$, EnFF (Algorithm~\ref{alg:enff}) becomes equivalent to BPF.
\end{restatable}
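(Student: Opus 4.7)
The plan is to adapt the continuity-equation argument from the proof of Proposition~\ref{prop:equi_empirical} to the guided setting. Define the likelihood-reweighted marginal path
\begin{equation*}
\tilde{p}_t(\vz) := \frac{\sum_{n=1}^N e^{-J(\vz_1^{(n)};\,\vy_j)}\,p_t(\vz|\vz_0^{(n)},\vz_1^{(n)})}{\sum_{m=1}^N e^{-J(\vz_1^{(m)};\,\vy_j)}}.
\end{equation*}
The core of the proof is to show that (i) $\tilde{p}_t$ satisfies the continuity equation with respect to the MC-guided VF $\vu_t'(\vz;\vy_j)$ in \eqref{eq:mc-guided-vf}, and (ii) $\tilde{p}_0 = \rho_0$. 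For step (i), I would differentiate $\tilde{p}_t$ in $t$, substitute the continuity equation for each conditional path $p_t(\cdot|\vz_0^{(n)},\vz_1^{(n)})$, and observe that because the likelihood weights $e^{-J(\vz_1^{(n)};\vy_j)}$ are constant in $t$ the resulting divergence factors exactly as $-\nabla_{\vz}\cdot[\tilde{p}_t(\vz)\vu_t'(\vz;\vy_j)]$; this mirrors the algebra in the proof of Proposition~\ref{prop:equi_empirical}, with the weights threaded through numerator and denominator so they cancel correctly. For step (ii), taking $p_0(\cdot|\vz_0,\vz_1)=\rho_0(\cdot)$ (as in the OT conditional path) makes $\tilde{p}_0=\rho_0$ immediate. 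Uniqueness of the continuity equation (Remark~\ref{rmk:uniqueness_of_marginal_probability_path}) then identifies $\tilde{p}_t$ as the law at time $t$ of a particle initialized from $\rho_0$ and flowed under $\vu_t'$.

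Evaluating at $t=1$ with $p_1(\cdot|\vz_0,\vz_1)=\gN(\cdot|\vz_1,\sigmamin^2\mI)$ gives
\begin{equation*}
\tilde{p}_1(\vz) = \sum_{n=1}^N \tilde{w}_n\,\gN(\vz|\vz_1^{(n)},\sigmamin^2\mI), \qquad \tilde{w}_n := \frac{e^{-J(\vz_1^{(n)};\,\vy_j)}}{\sum_m e^{-J(\vz_1^{(m)};\,\vy_j)}}.
\end{equation*}
Sending $\sigmamin\to 0$ and invoking the weak-convergence lemma used in Proposition~\ref{prop:equi_empirical} (Lemma~\ref{lem:weak-convergence}) yields $\tilde{p}_1(\dif\vz)\Longrightarrow \sum_n \tilde{w}_n\,\delta_{\vz_1^{(n)}}(\dif\vz)$. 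Since $\vz_1^{(n)}=\hat{\vx}_j^{(n)}$ are draws from the transition kernel $p(\vx_j|\vx_{j-1}^{(n)})$ and $\tilde{w}_n\propto p(\vy_j|\hat{\vx}_j^{(n)})$, the right-hand side is precisely the weighted empirical posterior produced by BPF's importance-weighting plus resampling (Algorithm~\ref{alg:BPF}). Consequently the EnFF particles $\vx_j^{(n)}=\vect{\phi}_1'(\vz_0^{(n)};\vy_j)$ output by Algorithm~\ref{alg:enff} agree in law with the BPF output conditional on the predictive ensemble; iterating across DA steps $j$ delivers the claimed filter-level equivalence.

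The main technical obstacle is the initial-condition match in step (ii). For the OT conditional path it is immediate, but for F2P one has $p_0(\cdot|\vz_0,\vz_1)=\gN(\cdot|\vz_0,\sigmamin^2\mI)$, which depends on $\vz_0$, so $\tilde{p}_0$ equals $\rho_0$ only after a convolution with $\gN(\vect{0},\sigmamin^2\mI)$; this perturbation vanishes in the same $\sigmamin\to 0$ limit and can be absorbed into the weak-convergence step, but formalizing this requires interchanging the two limits (e.g., via tightness or a Portmanteau argument). A milder subtlety is that the EnFF particles are not mutually independent conditional on the previous ensemble—they share the same MC-approximated guided flow—whereas BPF's resampled particles are i.i.d. draws conditional on the prior ensemble; this does not affect equivalence at the level of the conditional filtering law, which is what the theorem claims, but it is worth flagging when comparing the two algorithms at the particle level.
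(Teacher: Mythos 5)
Your proposal is correct and follows essentially the same route as the paper's proof: your likelihood-reweighted path $\tilde{p}_t$ is exactly the paper's $p^{\rm MCG}_t(\vz;\vy)=\sum_n w_n(\vy)\,p_t(\vz|\vz_0^{(n)},\vz_1^{(n)})$, verified via the same continuity-equation argument and the same weak-convergence lemma to identify the $t=1$ law with BPF's resampling measure $\sum_n w_n(\vy)\,\delta_{\hat{\vx}^{(n)}}$. The initial-condition subtlety you flag for the F2P path (where $p_0(\cdot|\vz_0,\vz_1)$ depends on $\vz_0$, so the weighted mixture at $t=0$ is not exactly $\rho_0$) is a real gap that the paper's proof also elides by asserting $p^{\rm MCG}_0=\rho_0$, so your awareness of it is a point in your favor rather than a defect.
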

\begin{proof}
The key difference between BPF and EnFF lies in how particles are sampled from the filtering distribution. BPF uses importance resampling, while EnFF samples via a guided flow. Thus, to prove our claim, it suffices to show that flow-based sampling in EnFF with MC guidance converges in distribution to importance resampling in BPF as $\sigmamin \rightarrow 0$.

To simplify the argument, we focus on the sampling procedure at a fixed timestep $j$. Following the notation in FM, let $\rho_1$ denote the predictive (prior) distribution,
and $\rho_0$ be an arbitrary reference distribution. The MC-guided flow \eqref{eq:mc-guided-vf} is used to sample from the filtering (posterior) distribution $\rho := p(\vx_{j} | \vy_{1:j})\propto p(\vy_j |\vx_{j})p(\vx_{j} | \vy_{1:j-1}) = p(\vy_j |\vx_{j}) \rho_1(\vx_j)$. For clarity, we omit the timestep subscript $j$ in the following proof.

Given $N$ samples $\hat{\vx}^{(n)} \sim \rho_1$, $n\in [N]$, we define the likelihood weights by
$\EDIT{w_n^L(\vy)} = \EDIT{{p(\vy | \hat{\vx}^{(n)})} / {\sum_{m=1}^N p(\vy | \hat{\vx}^{(m)})}}$.
According to Appendix \ref{app:BPF}, the importance resampling step in BPF is equivalent to sampling from the
measure $\rho_\mathrm{BPF}^N(\dif\vz; \vy) = \sum_{n=1}^N \EDIT{w_n^L}(\vy) \delta_{\hat{\vx}^{(n)}}(\dif \vz).$
Our goal is to show that, in the limit $\sigmamin \rightarrow 0$, EnFF with the MC guidance (\eqref{eq:mc-guided-vf})
samples from $\rho_\mathrm{BPF}^N$, which proves our claim.

In the FM setting, we consider sample pairs $(\vz_0^{(n)}, \vz_1^{(n)}) = (\vz_0^{(n)}, \hat{\vx}^{(n)}), \,n\in [N]$ where $\vz_0^{(n)}\sim \rho_0, \,n\in [N]$. Based on these samples, the MC guidance VF is given by
\begin{align}
\begin{split}
    &\vu_t^\mathrm{MCG}(\vz;\vy) = \sum_{n=1}^N w_n'(\vz; \vy) \vu_t(\vz|\vz_0^{(n)}, \vz_1^{(n)}), \label{eq:u_t_MCG}\\
    &\text{where} \quad w_n'(\vz; \vy) =
    \frac{p(\vy | \hat{\vx}^{(n)}) p_t(\vz|\vz_0^{(n)},
    \vz_1^{(n)})}{\sum_{m=1}^N p(\vy | \hat{\vx}^{(m)}) p_t(\vz|\vz_0^{(m)}, \EDIT{\vz_1^{(m)}})} =
    \frac{\EDIT{w_n^L}(\vy)p_t(\vz|\vz_0^{(n)},
    \vz_1^{(n)})}{\sum_{m=1}^N \EDIT{w_m^L}(\vy) p_t(\vz|\vz_0^{(m)}, \EDIT{\vz_1^{(m)}})}.
\end{split}
\end{align}
Since $\vu_t(\vz|\vz_0^{(n)}, \vz_1^{(n)})$ and $p_t(\vz|\vz_0^{(n)}, \vz_1^{(n)})$ satisfy the continuity equation,
\begin{equation}
    \frac{\partial}{\partial t} p_t(\vz|\vz_0^{(n)},\vz_1^{(n)}) + \nabla_\vz\cdot \left[p_t(\vz|\vz_0^{(n)},\vz_1^{(n)})\vu_t(\vz|\vz_0^{(n)},\vz_1^{(n)})\right] = 0,
\end{equation}
we know that
\begin{equation}\label{eq:MCG_marginal_p}
\begin{split}
&\frac{\partial}{\partial t} p^{\rm MCG}_t(\vz;\vy) + \nabla_\vz\cdot \left[p^{\rm MCG}_t(\vz;\vy)\vu_t^\mathrm{MCG}(\vz;\vy)\right] = 0,\\
&p^{\rm MCG}_t(\vz;\vy) = \sum_{n=1}^N \EDIT{w_n^L}(\vy)p_t(\vz|\vz_0^{(n)},\vz_1^{(n)}),\
p^{\rm MCG}_0(\vz;\vy) = \sum_{n=1}^N \EDIT{w_n^L}(\vy)\rho_0(\vz) = \rho_0(\vz).
\end{split}
\end{equation}
Therefore the marginal path given by the MC guidance VF $\vu_t^\mathrm{MCG}(\vz;\vy)$ is $p^{\rm MCG}_t(\vz;\vy)$ (for uniqueness, see Remark~\ref{rmk:uniqueness_of_marginal_probability_path}). At time $t=1$, since $p_1(\vz | \vz_0, \vz_1) = \mathcal{N}(\vz|\vz_1, \sigmamin^2\mat{I})$, we have $p^{\rm MCG}_1(\vz;\vy) = \sum_{n=1}^N \EDIT{w_n^L}(\vy)\mathcal{N}(\vz|\vz_1^{(n)}, \sigmamin^2\mat{I}).$
According to Lemma \ref{lem:weak-convergence}, we establish the weak convergence (denoted by $\Longrightarrow$),
\begin{equation}\label{eq:weak-convergence-marginal}
    p^{\rm MCG}_1(\dif \vz;\vy) \Longrightarrow \rho_\mathrm{BPF}^N(\dif \vz;\vy),\ \text{as}\ \sigmamin \rightarrow 0.
\end{equation}
Now let \(\vect{x}_0 \sim \rho_0\), and let \(\vect{\phi}_t^\vy(\vect{x} ; \sigmamin)\) be the flow induced by \(\vu_t^{\mathrm{MCG}}(\vz; \vy)\) (\eqref{eq:u_t_MCG}). Define \(\vect{x}_1^{\sigmamin} := \vect{\phi}_1^\vy(\vect{x}_0 ; \sigmamin)\), which has the law \(p^{\mathrm{MCG}}_1(\dif \vz; \vy) := \vect{\phi}_1^\vy\pn{\;\cdot\;;\sigmamin}_\sharp \rho_0(\dif \vz)\). Furthermore, let \(\vect{x}_{\mathrm{BPF}} \sim \rho_{\mathrm{BPF}}^N(\dif \vz; \vy)\). Since \(\vect{x}_1^{\sigmamin}\) and \(\vect{x}_{\mathrm{BPF}}\) represent sampling in EnFF and BPF, respectively, the weak convergence in \eqref{eq:weak-convergence-marginal} implies the convergence in distribution
\begin{equation}
\vect{x}_1^{\sigmamin} \stackrel{d}{\rightarrow} \vect{x}_{\mathrm{BPF}} \text{ as } \sigmamin \rightarrow 0.
\end{equation}
\end{proof}

\begin{remark}
Theorem~\ref{thm:equi_BPF} implies that for fixed \(\sigmamin > 0\), EnFF with MC guidance is equivalent to BPF with jittering, where i.i.d. Gaussian noise with variance \(\sigmamin^2\) is added to particles after resampling.
\end{remark}

Theorem~\ref{thm:equi_BPF} shows that EnFF with MC guidance offers no practical advantage; for small $\sigmamin$, Algorithm~\ref{alg:enff} behaves like BPF. Therefore, \EDIT{we focus hereafter on} \emph{localized guidance}.

\rev{Furthermore, we connect EnFF with EnKF. Theorem~\ref{thm:equi_EnKF} establishes that when restricting the system to a linear observation model and taking the limit $\sigmamin \to 0$, an appropriately designed guidance vector field allows EnFF to exactly match the EnKF filtering distribution. The proof establishes this equivalence by splitting the EnKF analysis step into deterministic and stochastic components. Specifically, an affine vector field is constructed to map the terminal particles via the deterministic Kalman update. This vector field is subsequently modified such that the intermediate target distribution is \EDIT{convolved with the covariance induced by the perturbed-observation term, namely $\mat{K}_j\mat{\Gamma}\mat{K}_j^\Transpose$}. Comparing this final induced measure with the EnKF output confirms their exact equivalence as $\sigmamin \to 0$.}

\begin{restatable}{theorem}{thmEquiEnKF}\label{thm:equi_EnKF}
Assume a linear observation operator $\vect{h}(\vx) = \mat{H}\vx$, and let EnFF use reference measure $\rho_0$ and conditional path $p_t(\vz|\vz_0, \vz_1)$ with $p_1(\vz | \vz_0, \vz_1) = \mathcal{N}(\vz|\vz_1, \sigmamin^2\mat{I})$. Then, with a specially designed guidance $\vg_t(\vz)$, EnFF (Algorithm~\ref{alg:enff}) samples from the same filtering distribution as EnKF in the limit as $\sigmamin \to 0$.
\end{restatable}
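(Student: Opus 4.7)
The plan is to mirror the structure of the proof of Theorem~\ref{thm:equi_BPF} by exhibiting a guided vector field whose induced flow, in the limit $\sigmamin \to 0$, produces samples with the same empirical law as the stochastic EnKF. Concretely, I would identify EnKF's output as an empirical measure on an affinely-updated ensemble, then encode this update directly into the targets of the MC marginal VF, and finally read off the required guidance as a difference of VFs.

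For step one, I would recall the stochastic EnKF update (Algorithm~\ref{alg:EnKF} in the appendix). For linear $\vect{h}(\vx) = \mat{H}\vx$, given the predictive ensemble $\{\hat{\vx}^{(n)}\}_{n=1}^N$, let $\mat{C}^f_N$ denote the empirical covariance and
\[
\mat{K}^N := \mat{C}^f_N \mat{H}^\Transpose\bigl(\mat{H}\mat{C}^f_N \mat{H}^\Transpose + \mat{\Gamma}\bigr)^{-1}
\]
the empirical Kalman gain. Drawing i.i.d.\ perturbations $\vect{\eta}^{(n)} \sim \gN(\vect{0}, \mat{\Gamma})$, the EnKF analysis ensemble is
\[
\vx^{(n),a} := \hat{\vx}^{(n)} + \mat{K}^N\bigl(\vy - \mat{H}\hat{\vx}^{(n)} - \vect{\eta}^{(n)}\bigr),
\]
and, conditional on $\{\hat{\vx}^{(n)},\vect{\eta}^{(n)}\}_{n=1}^N$, EnKF's output law is the empirical measure $\rho_\mathrm{EnKF}^N(\dif\vz;\vy) := \tfrac{1}{N}\sum_{n=1}^N \delta_{\vx^{(n),a}}(\dif\vz)$.

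For step two, I would replace the targets $\hat{\vx}^{(n)}$ in the MC marginal VF \eqref{eq:marginal-vf-mc} with the EnKF-updated targets $\vx^{(n),a}$, producing
\[
\vu^\mathrm{EnKF}_t(\vz) := \sum_{n=1}^N \tilde{w}_n(\vz)\, \vu_t\bigl(\vz \mid \vz_0^{(n)}, \vx^{(n),a}\bigr), \quad \tilde{w}_n(\vz) := \frac{p_t(\vz \mid \vz_0^{(n)}, \vx^{(n),a})}{\sum_{m=1}^N p_t(\vz \mid \vz_0^{(m)}, \vx^{(m),a})}.
\]
Applying Proposition~\ref{prop:equi_empirical} verbatim with these new targets, the flow of $\vu^\mathrm{EnKF}_t$ transports $\rho_0$ to the mixture $\tfrac{1}{N}\sum_n \gN(\vz \mid \vx^{(n),a}, \sigmamin^2 \mat{I})$ at time $t=1$, which converges weakly to $\rho_\mathrm{EnKF}^N$ as $\sigmamin \to 0$ by Lemma~\ref{lem:weak-convergence}. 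I would then define the specially designed guidance as the difference $\vg_t(\vz) := \vu^\mathrm{EnKF}_t(\vz) - \vu_t(\vz)$, where $\vu_t(\vz)$ is the unguided MC marginal VF built from $\{\hat{\vx}^{(n)}\}$. Since $\vu_t' = \vu_t + \vg_t = \vu^\mathrm{EnKF}_t$, the convergence in distribution of EnFF's output samples to $\rho_\mathrm{EnKF}^N$ follows by the same closing argument (pushforward plus Portmanteau/Lemma~\ref{lem:weak-convergence}) used at the end of Theorem~\ref{thm:equi_BPF}.

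The main obstacle is conceptual rather than technical. The guidance mechanism of Section~\ref{sec:mc_guidance} was derived from the exact Bayesian decomposition $p(\vx_j \mid \vy_{1:j}) \propto p(\vy_j \mid \vx_j)\, p(\vx_j \mid \vy_{1:j-1})$, while EnKF's analysis is only a Gaussian/affine approximation to this posterior. The construction above therefore cannot arise from plugging any choice of $J$ into \eqref{eq:guidance-vf}; it instead exploits the fact that EnKF's update is an explicit affine map on the predictive samples, which we encode by changing the target endpoints. A secondary subtlety is that the weak convergence must be interpreted conditional on the realizations of $\{\vz_0^{(n)}, \hat{\vx}^{(n)}, \vect{\eta}^{(n)}\}_{n=1}^N$, which is the natural analog of the conditional argument already used in Theorem~\ref{thm:equi_BPF}.
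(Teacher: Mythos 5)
Your proposal is correct, but it takes a genuinely different route from the paper. The paper also begins by splitting the stochastic EnKF update into the deterministic affine map $\tilde{\vx}^{(n)} = (\mat{I}-\mat{K}_j\mat{H})\hat{\vx}^{(n)} + \mat{K}_j\vy_j$ plus the noise $\mat{K}_j\vect{\eta}^{(n)}$, but it then \emph{marginalizes} over the perturbations, identifying EnKF's output as the Gaussian mixture $\tfrac1N\sum_n\gN(\tilde{\vx}^{(n)},\mat{K}_j\mat{\Gamma}\mat{K}_j^\Transpose)$, and it builds the guided VF intrinsically: Lemma~\ref{lem:invertibility_I_minus_tKH} gives invertibility of $\mat{I}-t\mat{K}_j\mat{H}$, Lemma~\ref{lem:affine_flow} produces a VF whose time-one map composes the original flow with that affine transform (so the target becomes $\tfrac1N\sum_n\gN(\tilde{\vx}^{(n)},\sigmamin^2\mat{I})$), and Lemma~\ref{lem:conv_smoothed_path} convolution-smooths the path to re-inject the observation-noise spread through a score-type correction as in \eqref{eq:tilde_u_correct}. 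You instead condition on the realized perturbations, absorb the entire EnKF analysis into the \emph{targets} of the MC marginal VF, and reuse Proposition~\ref{prop:equi_empirical} and Lemma~\ref{lem:weak-convergence} verbatim. Your route is substantially more elementary---it needs none of the three auxiliary lemmas---and its limiting object $\tfrac1N\sum_n\delta_{\vx^{(n),a}}$ reproduces the output of Algorithm~\ref{alg:EnKF} exactly; averaging over $\vect{\eta}^{(n)}$ recovers a mixture with covariance $\mat{K}_j\mat{\Gamma}\mat{K}_j^\Transpose$, so the two statements agree at the level of the marginal law. What you give up is that your guidance $\vg_t=\vu_t^{\mathrm{EnKF}}-\vu_t$ is a random object depending on the drawn perturbations, and the equivalence is stated conditionally on those draws, whereas the paper's $\vu_t^{\mat{\Gamma}}$ is a deterministic functional of the ensemble, the observation, and $\mat{\Gamma}$. (A minor technical point in your favor: Proposition~\ref{prop:equi_empirical} is stated for i.i.d.\ pairs, and your targets $\vx^{(n),a}$ are coupled through the empirical gain $\mat{K}^N$; this is harmless because the proof of that proposition only averages continuity equations over a fixed collection of targets, and the paper's own Theorem~\ref{thm:equi_BPF} relies on the same reading.) Your closing caveats---that the guidance cannot arise from any choice of $J$ in \eqref{eq:guidance-vf}, and that the convergence must be read conditionally---are exactly right and consistent with the paper's implicit stance.
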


\begin{proof}
    According to Appendix~\ref{app:EnKF}, with particles from the predictive distribution $\hat{\vx}_j^{(n)}\sim p(\vx_j|\vy_{1:j-1}),n\in [N]$, the analysis step of EnKF is given by
    \begin{equation}
        \vx_j^{(n)} = \hat{\vx}_j^{(n)} + \mat{K}_j (\vy_j - \mat{H}\hat{\vx}_j^{(n)} - \vect{\eta}_j^{(n)}),\quad \vect{\eta}_j^{(n)}\sim\gN(\mat{0},\mat{\Gamma}),\quad n\in [N],
    \end{equation}
    where $\mat{K}_j\in\EDIT{\R^{d\times d_y}}$ is the Kalman gain and $\mat{H}\in\EDIT{\R^{d_y\times d}}$ is the observation matrix. We split the analysis step into a deterministic part $\tilde{\vx}_j^{(n)} = (\mat{I} - \mat{K}_j \mat{H})\hat{\vx}_j^{(n)} + \mat{K}_j \vy_j $ and a stochastic part $\mat{K}_j\vect{\eta}_j^{(n)}\sim\gN(\vect{0},\mat{K}_j\mat{\Gamma} \mat{K}_j^\Transpose).$ Therefore, the output probability measure given by EnKF is $\rho_{\rm EnKF}^N = \left(\frac{1}{N}\sum_{n=1}^N\delta_{\tilde{\vx}_j^{(n)}}\right)\ast \gN(\vect{0},\mat{K}_j\mat{\Gamma} \mat{K}_j^\Transpose) =
        \frac{1}{N}\sum_{n=1}^N \gN(\tilde{\vx}_j^{(n)}, \mat{K}_j\mat{\Gamma} \mat{K}_j^\Transpose),$
    where $\ast$
    denotes convolution of measures.

    Let $\vect{\phi}_t$ be the flow corresponding to the MC VF $\vu_t(\vz)$
    based on
    sample pairs $(\vz_0^{(n)}, \vz_1^{(n)}) = (\vz_0^{(n)}, \hat{\vx}_j^{(n)}), n\in [N]$ in \eqref{eq:marginal-vf-mc}, where $\vz_0^{(n)}\sim \rho_0, n\in [N]$.
    By Proposition~\ref{prop:equi_empirical},
    we know that $(\vect{\phi}_1)_\sharp \rho_0 = \rho_1 = \frac{1}{N}\sum_{n=1}^N \gN(\vz_1^{(n)},\sigmamin^2 \mat{I}).$
    Now define $\mat{A} = \mat{I} - \mat{K}_j \mat{H}$, $\vb = \mat{K}_j \vy_j$. Then from Lemma~\ref{lem:invertibility_I_minus_tKH}, we know that the matrix $\mat{A}_t := (1-t)\mat{I} + t\mat{A} = \mat{I} - t\mat{K}_j\mat{H}$ is invertible. Next, \EDIT{with $\vect{b}_t = t\vect{b}$,} we introduce a vector field $\tilde{\vu}_t$ defined by
    \begin{equation}\label{eq:tilde_u_t}
        \tilde{\vu}_t(\vz) = (\mat{A}-\mat{I})\mat{A}_t^{-1}(\vz - \vect{b}_t) + \mat{A}_t \vu_t(\mat{A}_t^{-1}(\vz - \vect{b}_t)) + \vect{b}.
    \end{equation}
    According to Lemma~\ref{lem:affine_flow}, we know that the flow $\tilde{\vect{\phi}}_t$ corresponding to $\tilde{\vu}_t$ has the property that
    \begin{equation}
        \tilde{\vect{\phi}}_0(\vz) = \vect{\phi}_0(\vz) = \vz,\quad \tilde{\vect{\phi}}_1(\vz) = \mat{A}\vect{\phi}_1(\vz) + \vect{b}.
    \end{equation}
    Thus, the target distribution corresponding to $\tilde{\vu}_t$ is
    \begin{equation}
    \begin{split}
        \tilde{\rho}_1 &:= (\tilde{\vect{\phi}}_1)_\sharp \rho_0 = ((\mat{A}(\cdot) + \vb)\circ \vect{\phi}_1)_\sharp\rho_0 = (\mat{A}(\cdot) + \vb)_\sharp \rho_1 \\
        &= \frac{1}{N}\sum_{n=1}^N \gN(\mat{A}\vz_1^{(n)} + \vb,\EDIT{\sigmamin^2 \mat{A}\mat{A}^\Transpose}) = \frac{1}{N}\sum_{n=1}^N \gN(\tilde{\vx}_j^{(n)},\EDIT{\sigmamin^2 \mat{A}\mat{A}^\Transpose})
    \end{split}
    \end{equation}
    According to Lemma~\ref{lem:conv_smoothed_path}, and noting that $\mat{\Pi} = \mat{K}_j\mat{\Gamma} \mat{K}_j^\Transpose$ is symmetric positive definite, we can modify the VF $\tilde{\vu}_t$ to $\vu^{\mat{\Pi}}_t$ so that
    the target distribution $\tilde{\rho}_1$ becomes
    $\rho^{\mat{\Pi}}_1 = \tilde{\rho}_1\ast \gN(\vect{0}, \mat{\Pi}) = \frac{1}{N}\sum_{n=1}^N \gN(\tilde{\vx}_j^{(n)},\EDIT{\sigmamin^2 \mat{A}\mat{A}^\Transpose} + \mat{K}_j\mat{\Gamma} \mat{K}_j^\Transpose).$
The design of $\vu^{\mat{\Pi}}_t$ depends on both the VF $\tilde{\vu}_t$ and the probability path $\tilde{p}_t := \EDIT{(\tilde{\vect{\phi}}_t)_\sharp} \rho_0$, which can be explicitly calculated.
By comparing $\rho_{\rm EnKF}^N$ and $\rho^{\mat{\Pi}}_1$, we see that as \(\sigmamin \rightarrow 0\), EnFF with the specially designed VF \(\vu_t^{\mat{\Pi}}\) yields samples from the same distribution as EnKF.
\end{proof}

\begin{remark}\label{rmk:experimental_gap}
\rev{Theorems~\ref{thm:equi_BPF} and \ref{thm:equi_EnKF} establish the theoretical generality of the guided flow framework, showing that EnFF recovers classical filters under specific guidance designs and the limit $\sigmamin \to 0$. In the experiments, however, we implement the localized guidance defined in \eqref{eq:local_guidance} in Section~\ref{sec:mc_guidance}. This practical configuration deviates deliberately from the assumptions of the equivalence results; it employs neither the pure MC guidance required to match the BPF nor the specialized affine construction required for the EnKF. The EnFF algorithm evaluated in the experiments therefore operates differently from these classical baselines.}
\end{remark}

\section{Numerical Experiments}\label{sec:Numerics}
We validate EnFF on benchmark DA tasks, demonstrating: (1) stability in high-dimensional, nonlinear settings; (2) improved efficiency over EnSF, achieving higher accuracy with fewer sampling steps; and (3) flexibility in conditional VF design.

\paragraph{Benchmark Tasks, Baseline Methods, and Evaluation Metrics}

\rev{
We evaluate EnSF and EnFF on (i) the Lorenz '63 ODE \cite{lorenz63}, (ii) the Lorenz '96 ODE \cite{lorenz96}, (iii) the 1D Kuramoto--Sivashinsky (KS) PDE \cite{kuramoto1978diffusion,michelson1977nonlinear}, and (iv) the 2D Navier--Stokes (NS) PDE \cite{temam2001navier}. \RefTable*[tab:experiment:experiment-details] provides more detail about each dynamical system and the \emph{Identity} and \emph{Arctan} observation models. We compare two EnFF variants, \emph{EnFF-OT} and \emph{EnFF-F2P}, using OT and F2P conditional VFs. Performance is measured by the root mean squared error (RMSE) between the ensemble mean and the ground truth and the Energy Score (ES) with power 1, a generalization of the continuous ranked probability score to multiple dimensions. Specifically, at DA time step $j$, let $\vx_j \in \mathbb{R}^d$ denote the true state and let $\{\vx_j^{(n)}\}_{n=1}^N$ denote the ensemble
with the mean $\bar{\vx}_j =\frac{1}{N}\sum_{n=1}^{N}\vx_j^{(n)}.$
For a single trajectory, the time-averaged RMSE and ES over $J$ DA steps are computed as \begin{align}
\operatorname{RMSE}
& =
\frac{1}{J}
\sum_{j=1}^{J}
\left(
\frac{1}{d}
\left\|
\bar{\vx}_j - \vx_j
\right\|_2^2
\right)^{1/2}
\\
\text{Energy Score}
& =
\frac{1}{J}
\sum_{j=1}^{J}
\left[
\frac{1}{N}
\sum_{n=1}^{N}
\left\|
\vx_j^{(n)} - \vx_j
\right\|_2
-
\frac{1}{2N^2}
\sum_{m=1}^{N}
\sum_{n=1}^{N}
\left\|
\vx_j^{(m)} - \vx_j^{(n)}
\right\|_2
\right].
\end{align}
}

\begin{table}[t]
    \centering
    \resizebox{0.8\textwidth}{!}{
    \begin{tabular}{lrrrr}
        \toprule
         & \textbf{Lorenz '63} & \textbf{Lorenz '96} & \textbf{KS} & \textbf{NS} \\
         \midrule
         Dimension $d$ & $3$ & $10^6$ & $256, 512, 1024$ & $3 \cdot s^2$, $s \in \set{16, 64, 256}$ \\
         Timestep size $\Delta t$ & $0.05$ & $0.01$ & $0.25$ & $10^{-4}$ \\
         Burn-in timesteps & $2000$ & $1000$ & $2150$ & $1$ \\
         Dynamics solver & RK4 & RK4 & ETD-RK4 \cite{cox2002exponential} & Chorin's projection \cite{chorin1968numerical} \\
         DA steps & $2000$ & $80$ & $1000$ & \EDIT{$200$ ($60$ for $s = 256$)} \\
         Obs timestep size & $2^i\Delta t, i \in \spn{4}$ & $10\Delta t$ & $4\Delta t$ & $100\Delta t$ \\
         True state init. & $\Gaussian\pn{\vect{0},\mat{I}}$ & $\Gaussian\pn{\vect{0},3^2\mat{I}}$ & \RefAppendix*[appendix:experimental-details:ks] & \RefAppendix*[appendix:experimental-details:ns] \\
         Ensemble init. & $\Gaussian\pn{\vect{x}_0,\mat{I}}$ & $\Gaussian\pn{\vect{0},\mat{I}}$ & $\Gaussian\pn{\vect{x}_0,\mat{I}}$ & $\Gaussian\pn{\vect{x}_0,\mat{I}}$ \\
         \bottomrule
    \end{tabular}
    }
    \caption{
        Data assimilation setting details.
        KS and NS were each solved on three spatial discretizations.
        There are two observation models: Identity, $\vy = \vx + \vect{\eta}$ where $\vect{\eta}\sim\Gaussian\pn{\vect{0},\sigma_y^2\mat{I}}$ \EDIT{with} $\sigma_y = 2$ for Lorenz '63 and $\sigma_y = 0.5$ otherwise; Arctan, $\vy = \arctan\pn{\vx} + \vect{\eta}$ where $\vect{\eta}\sim\Gaussian\pn{\vect{0},\sigma_y^2\mat{I}}$ \EDIT{with} $\sigma_y = 0.1$.
    }
    \label{tab:experiment:experiment-details}
\end{table}

\rev{In practice, we discard an initial set of DA steps and compute the above time averages only over the remaining evaluation window; in our experiments, this corresponds to averaging over the last 50 DA steps. For each method and each experimental setting, these metrics are computed independently over five trajectories, and we report the mean, min, and max across trajectories. All methods use $N = 20$ ensemble members, and their hyperparameters are tuned using Optuna \cite{akiba2019optuna}; see \RefAppendix[appendix:experimental-details] for more details. EnSF and the EnFF variants use Euler-Maruyama and Euler solvers, respectively, for sampling.}

\subsection{Comparison of EnSF and EnFF in High-Dimensional Settings}
\label{sec:numerics:ensf-vs-enff}

\rev{High-dimensional settings cause EnKF and local ensemble transform KF (LETKF) \cite{hunt2007efficient} to struggle due to their costly tuning of sensitive hyperparameters (inflation and localization) \cite{bao2024ensemble}, and high dimensionality makes BPF prone to mode collapse. In \RefFig[fig:rmse-and-crps], three high-dimensional settings with Identity and Arctan observations are used to compare EnSF and the EnFF variants, and to illustrate the performance-cost tradeoff, each method is evaluated for $T \in \set{5,10,20,50,100}$. The sublegends of each plot show the slope of each curve, computed using log-log linear regression, to approximate the convergence rate in RMSE and ES as $T$ increases. At $T =100$, EnSF performs its best for both Identity and Arctan observations, but both EnFF variants either match or improve on the plotted metrics. As $T$ decreases, the slopes show that the gap between EnSF and the EnFF variants grows, implying that the EnFF variants are always more accurate than EnSF in the tested linear and nonlinear observation settings. Visualizations of the estimated states of EnSF and EnFF-F2P for KS at $T = 5$ and NS at $T = 10$ are given in \RefFig[fig:experiments:kuramoto-sivashinsky:trajectory-and-error] and \RefFig[fig:experiments:navier-stokes:256:trajectory:pressure], respectively. EnSF's estimate captures the general pattern of KS, but overestimates the magnitudes, and for NS, it only loosely captures the ground truth's features. In contrast, EnFF-F2P's estimate has virtually negligible error. Comparing the EnFF variants, EnFF-F2P matches or further improves on EnFF-OT's metrics. In addition, \EDIT{the near-zero slopes of EnFF-F2P indicate} that EnFF-F2P frequently attains nearly the same performance when tuning its hyperparameters using $T = 5$ or $T = 100$. EnFF-F2P is orders of magnitude cheaper computationally to tune to obtain the best performance among EnSF and EnFF-OT.}

\begin{figure}[!ht]
    \centering
    \scriptsize
    \begin{tabular}{cccc}
        \multicolumn{4}{c}{
        \includegraphics[scale=.28]{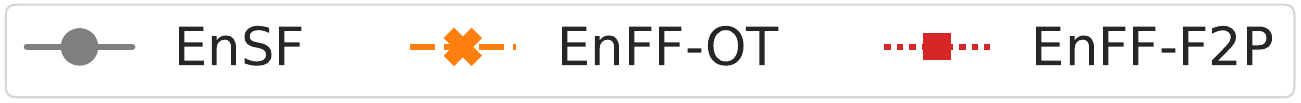}}
        \\
        \multirow{1}{*}[6em]{\rotatebox{90}{Lorenz '96}}
        \includegraphics[scale=.22]{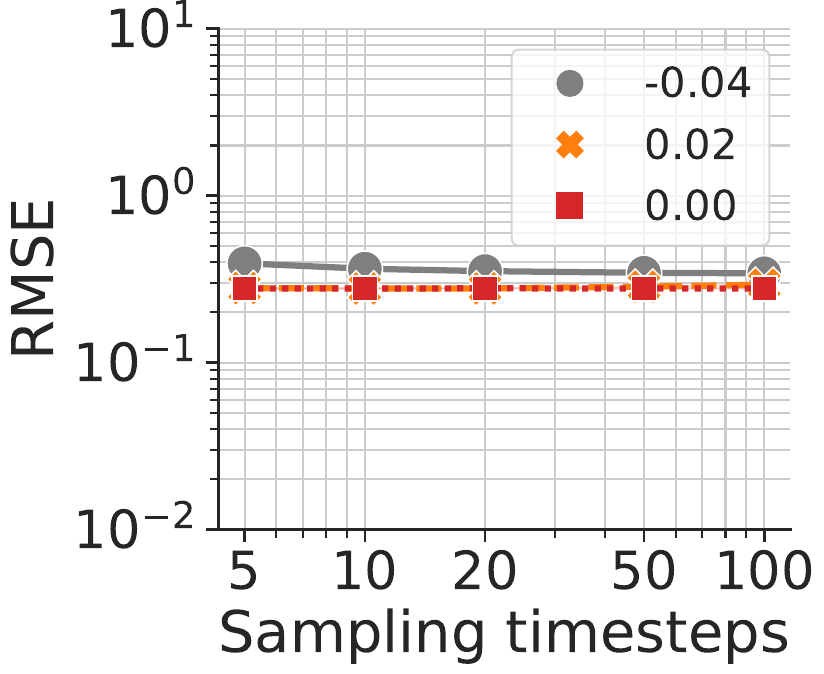}
        &
        \includegraphics[scale=.22]{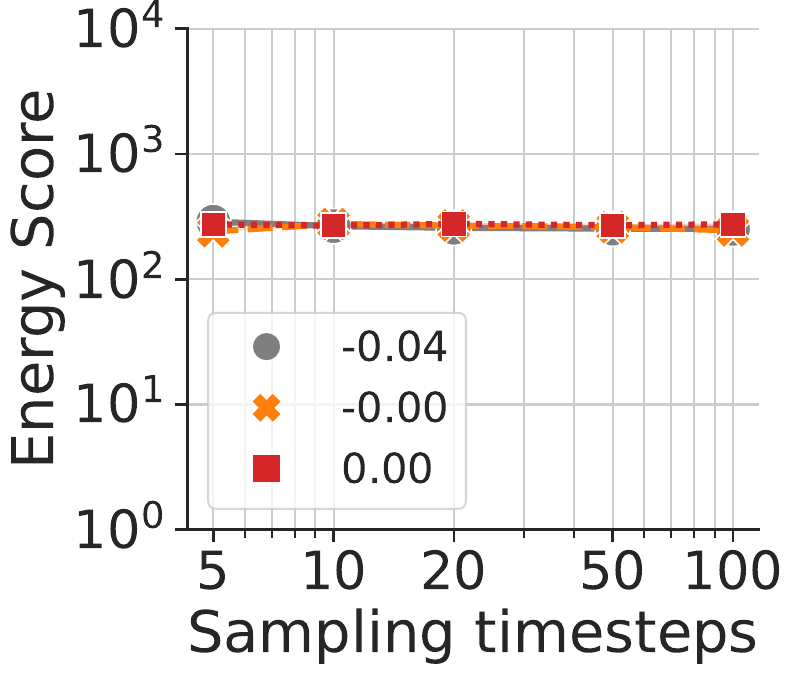}
        &
        \includegraphics[scale=.22]{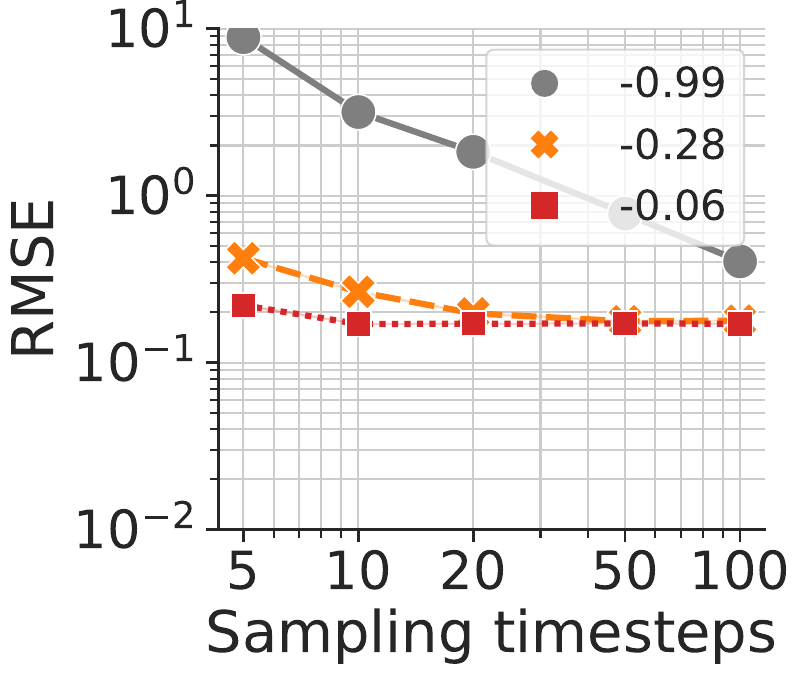}
        &
        \includegraphics[scale=.22]{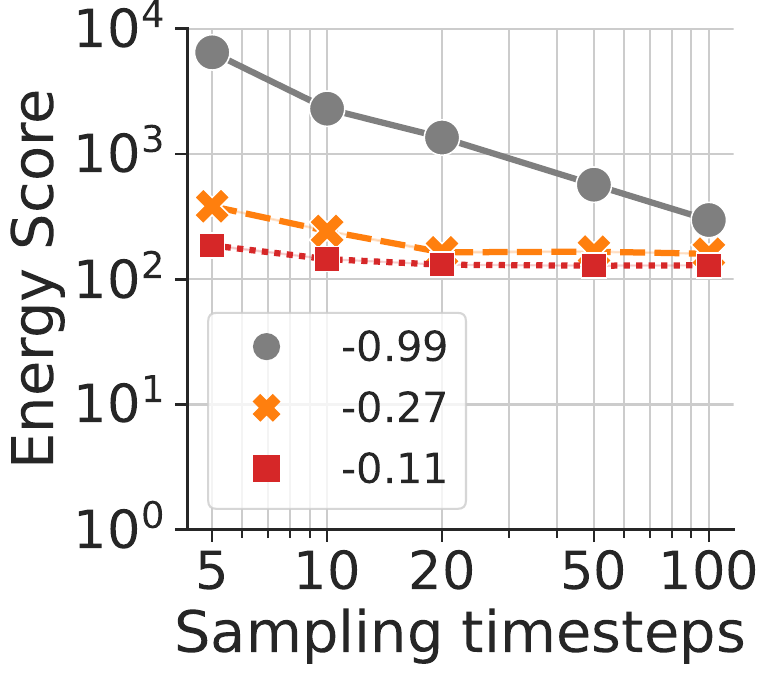}
        \\
        \multirow{1}{*}[4.5em]{\rotatebox{90}{KS}}
        \includegraphics[scale=.22]{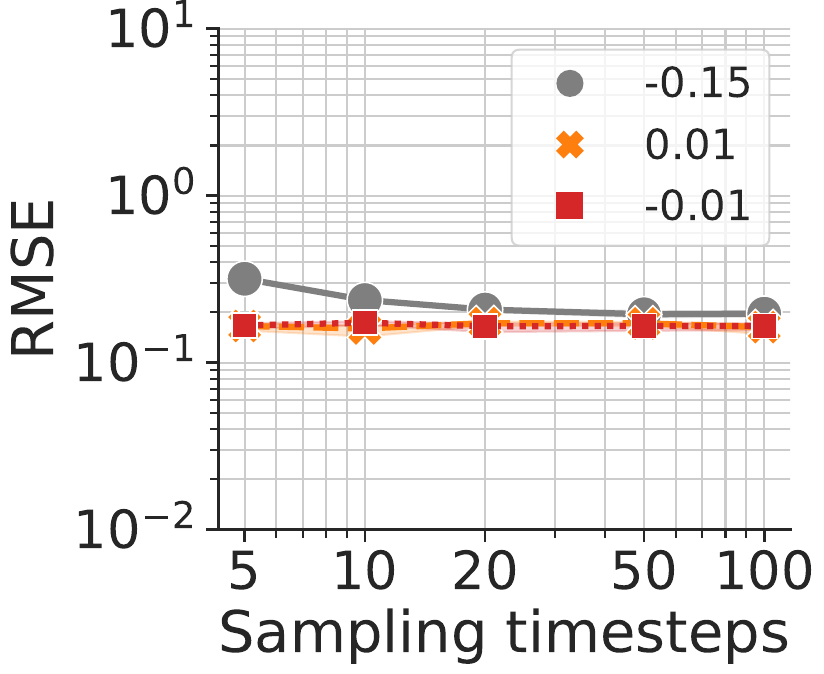}
        &
        \includegraphics[scale=.22]{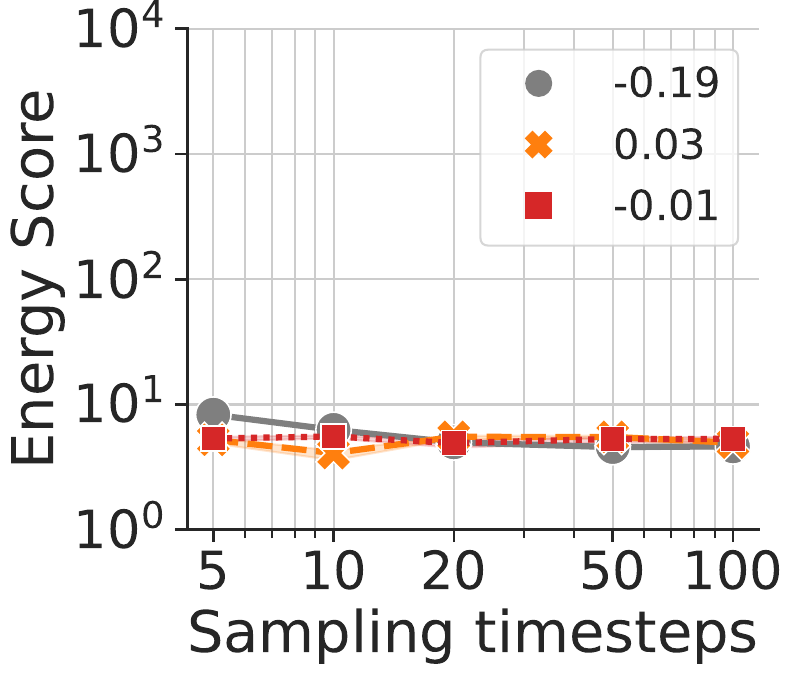}
        &
        \includegraphics[scale=.22]{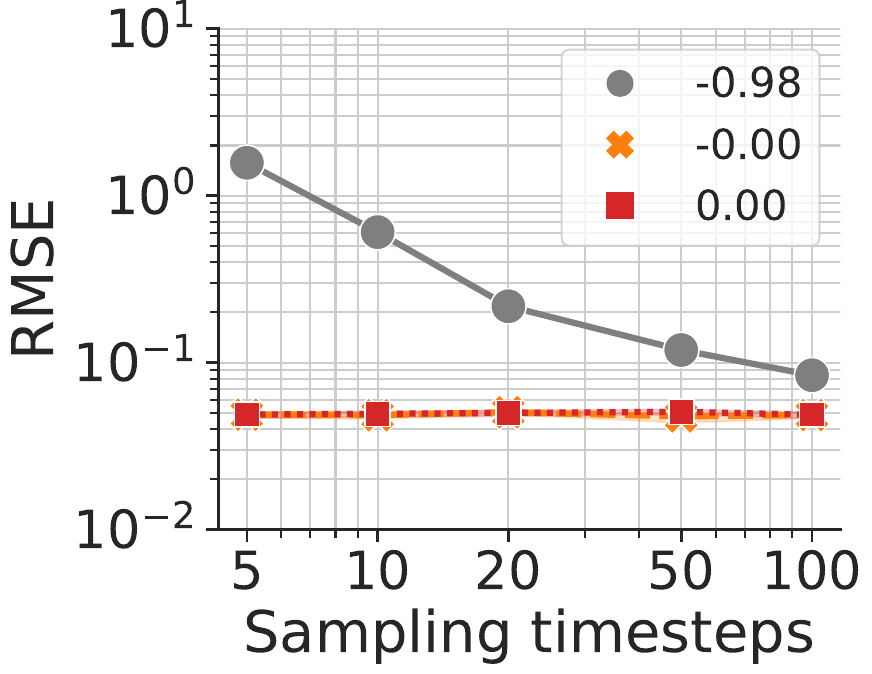}
        &
        \includegraphics[scale=.22]{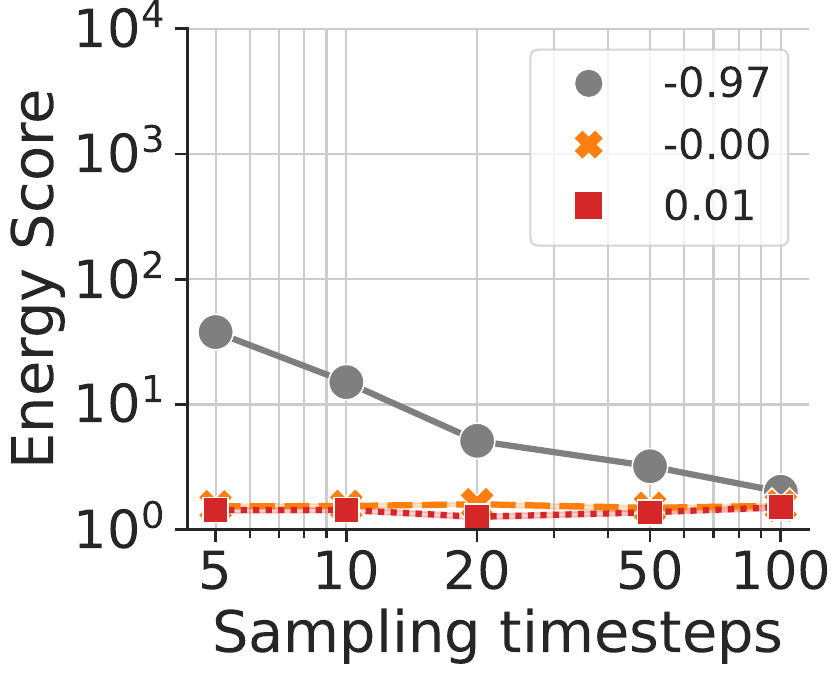}
        \\
        \multirow{1}{*}[5.5em]{\rotatebox{90}{NS-256}}
        \includegraphics[scale=.22]{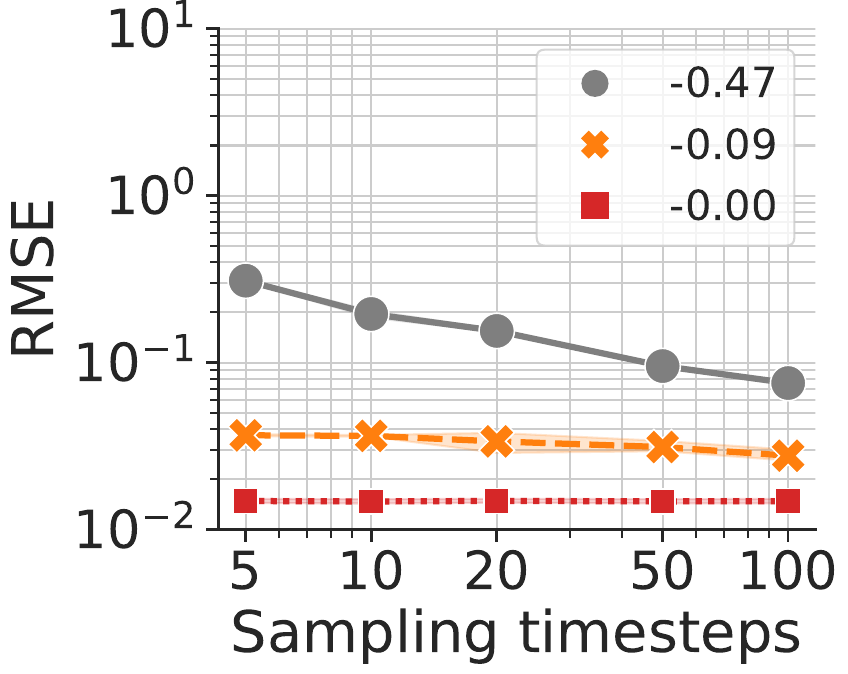}
        &
        \includegraphics[scale=.22]{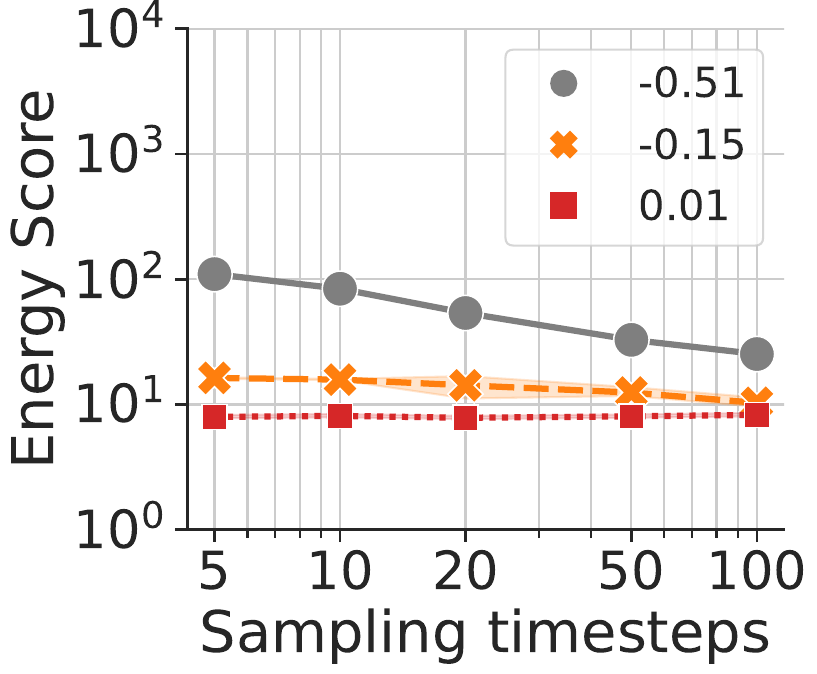}
        &
        \includegraphics[scale=.22]{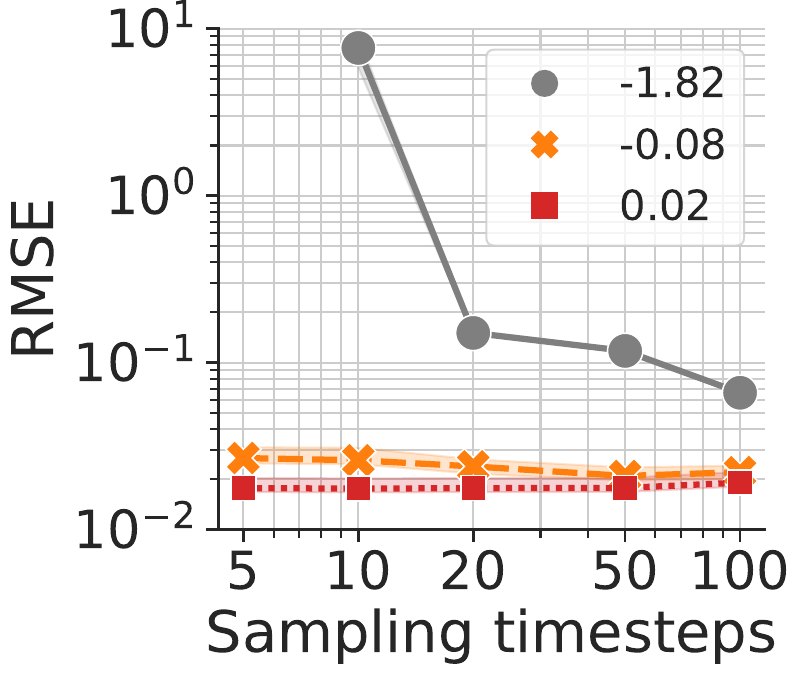}
        &
        \includegraphics[scale=.22]{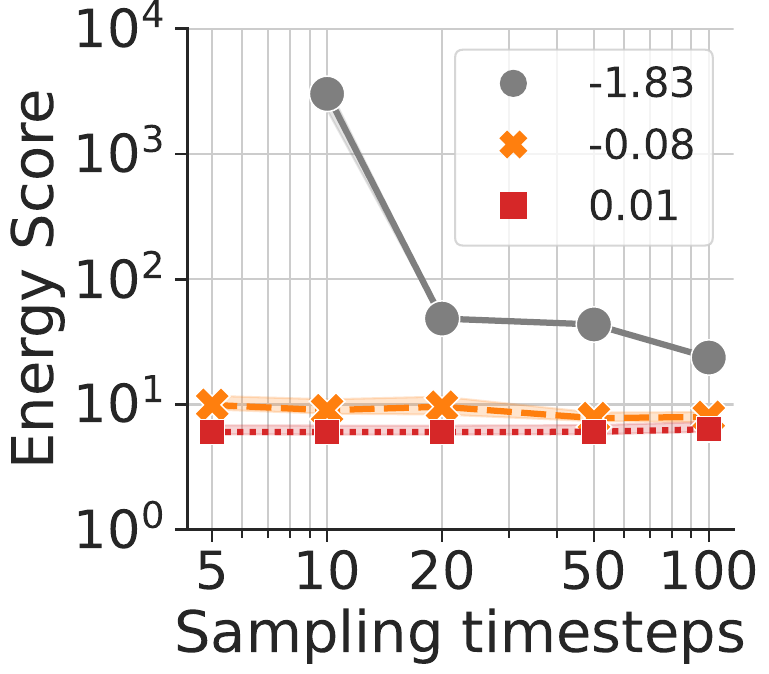}
        \\
        (a) RMSE (Identity)
         &
        (b) ES (Identity)
         &
        (c) RMSE (Arctan)
         &
        (d) ES (Arctan)
    \end{tabular}\vspace{-0.25cm}
    \caption{
        Metrics averaged over five runs varying timesteps $T$ with error bands showing the min and max.
        Hyperparameters are chosen using Optuna \cite{akiba2019optuna} to minimize RMSE for each $T$.
        The sublegends show the slopes from log-log linear regression of the plotted values.
    }
    \label{fig:rmse-and-crps}
\end{figure}

\begin{figure}[!ht]
    \centering
    \begin{tabular}{llccc}
        \multirow{2}{*}[5.27em]{\includegraphics[scale=.25]{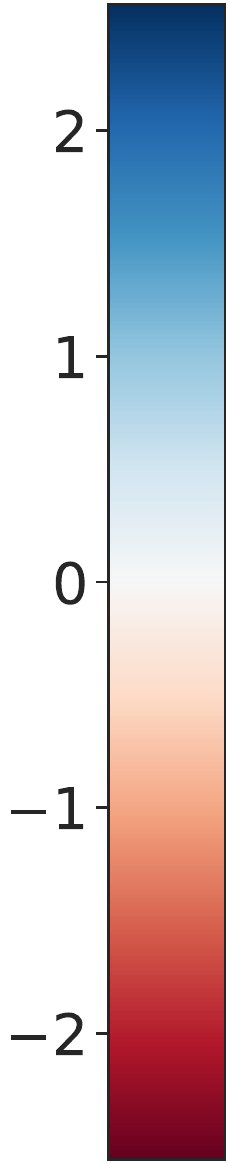}}
        &
        \multirow{2}{*}[5.40em]{\includegraphics[scale=.209]{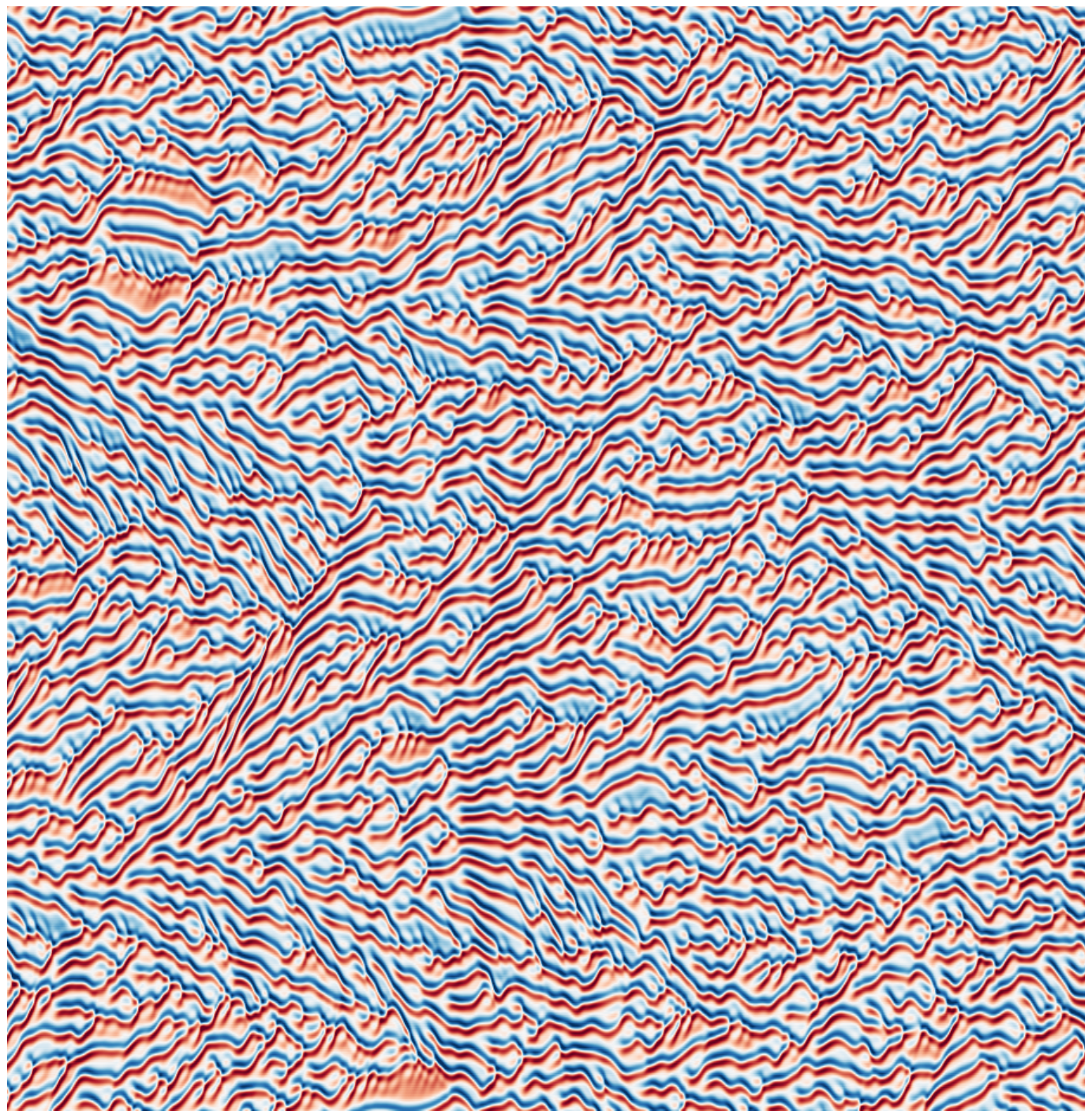}}
        &
        \multirow{1}{*}[3.5em]{\rotatebox{90}{\scriptsize EnSF \cite{bao2024ensemble}}}
        &
        \includegraphics[scale=.1]{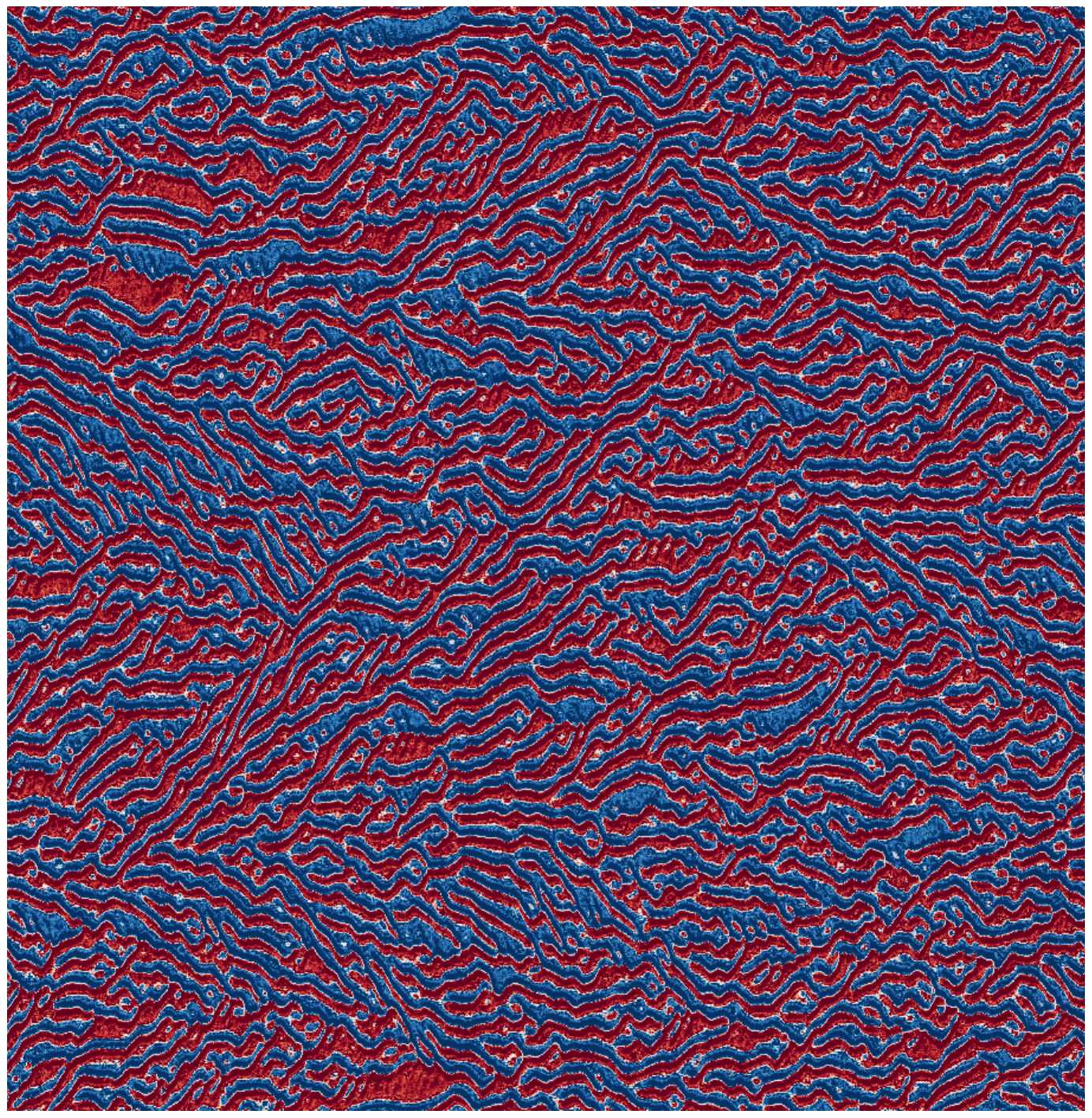}
        &
        \includegraphics[scale=.1]{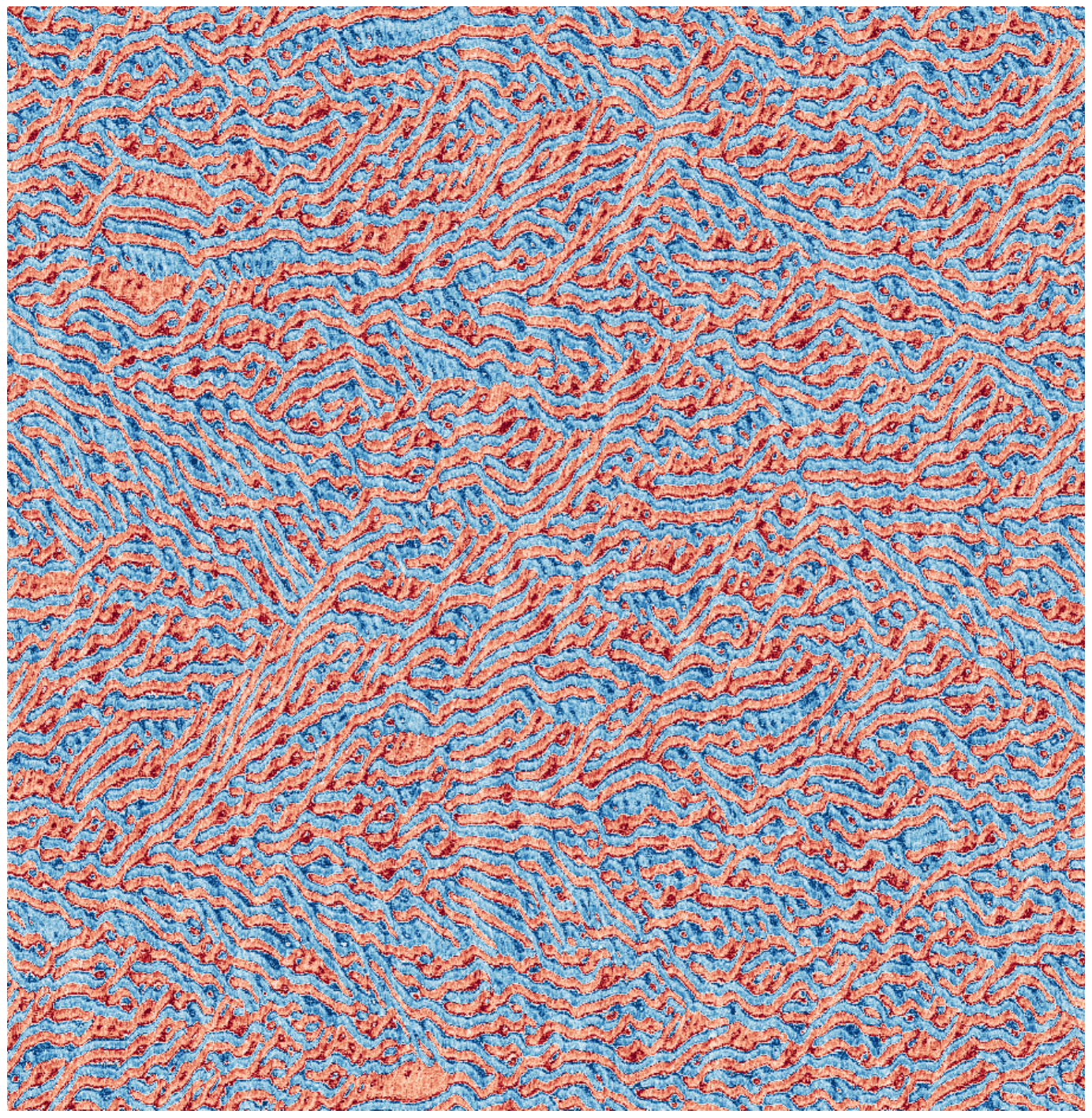}
        \\
        &
        &
        \rotatebox{90}{\scriptsize EnFF-F2P (Ours)}
        &
        \includegraphics[scale=.1]{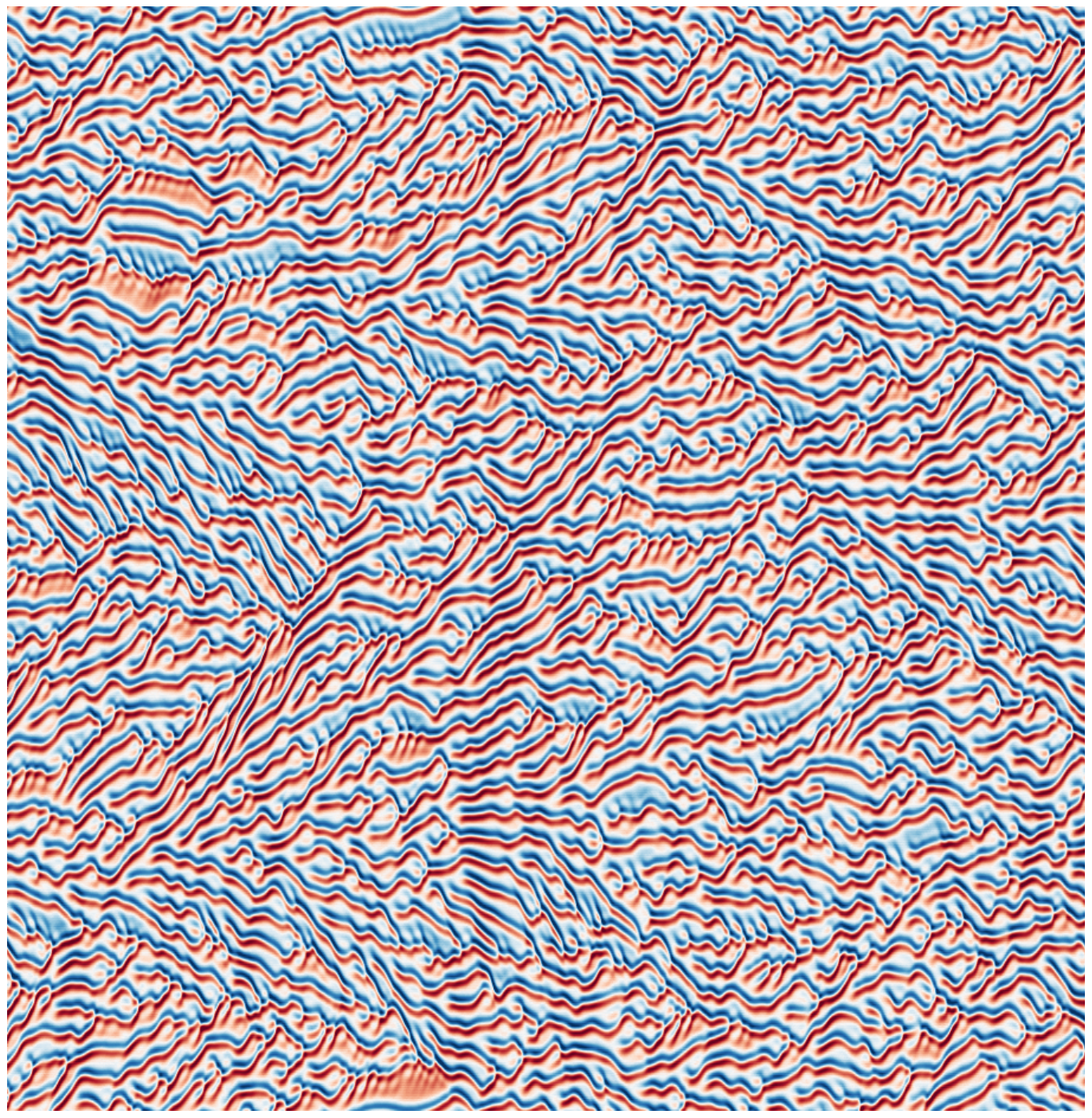}
        &
        \includegraphics[scale=.1]{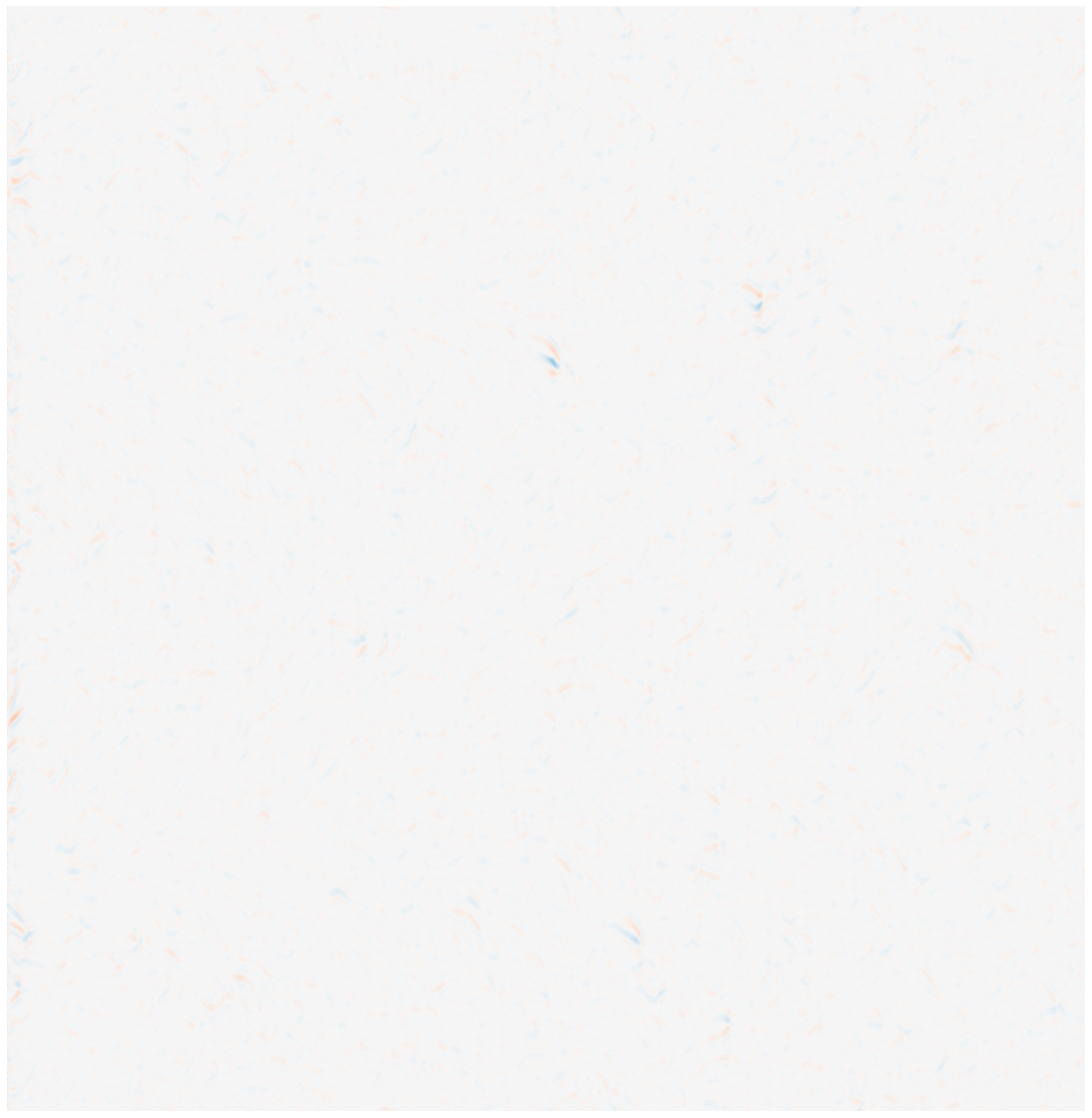}
        \\
        \multicolumn{2}{c}{{\scriptsize\hspace{4.3em} Ground truth}}
        &
        &
        {\scriptsize (a) Prediction}
        &
        {\scriptsize (b) Error}
    \end{tabular}
    \vspace{-0.3cm}
    \caption{
        \footnotesize
        Predicted states (a) and errors (b) of EnSF (top) and EnFF-F2P (bottom) for the 1,024-dimensional KS system with Arctan observations after 1,000 DA steps.
        Time is on the horizontal axis, space on the vertical.
        The filters use $T =5$ sampling steps.
        The colorbar covers the ground truth's value range.
    }
    \label{fig:experiments:kuramoto-sivashinsky:trajectory-and-error}\vspace{-0.15cm}
\end{figure}

\begin{figure}[!ht]
    \centering
    \begin{tabular}{llccc}
        \multirow{2}{*}[5.22em]{\includegraphics[scale=.245]{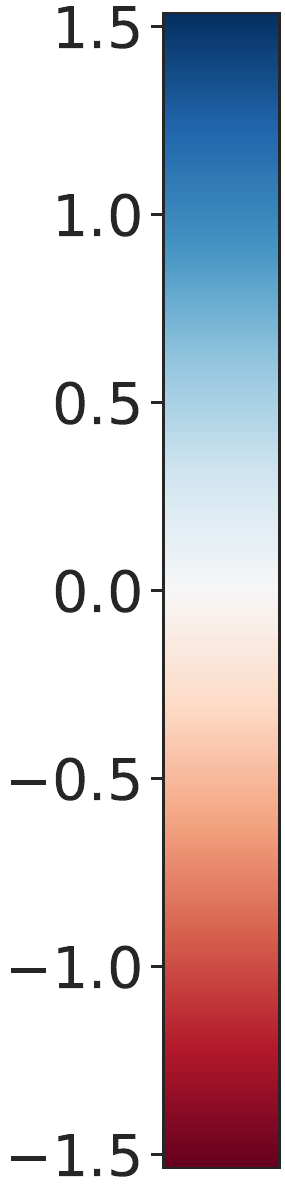}}
        &
        \multirow{2}{*}[5.27em]{\includegraphics[scale=.201]{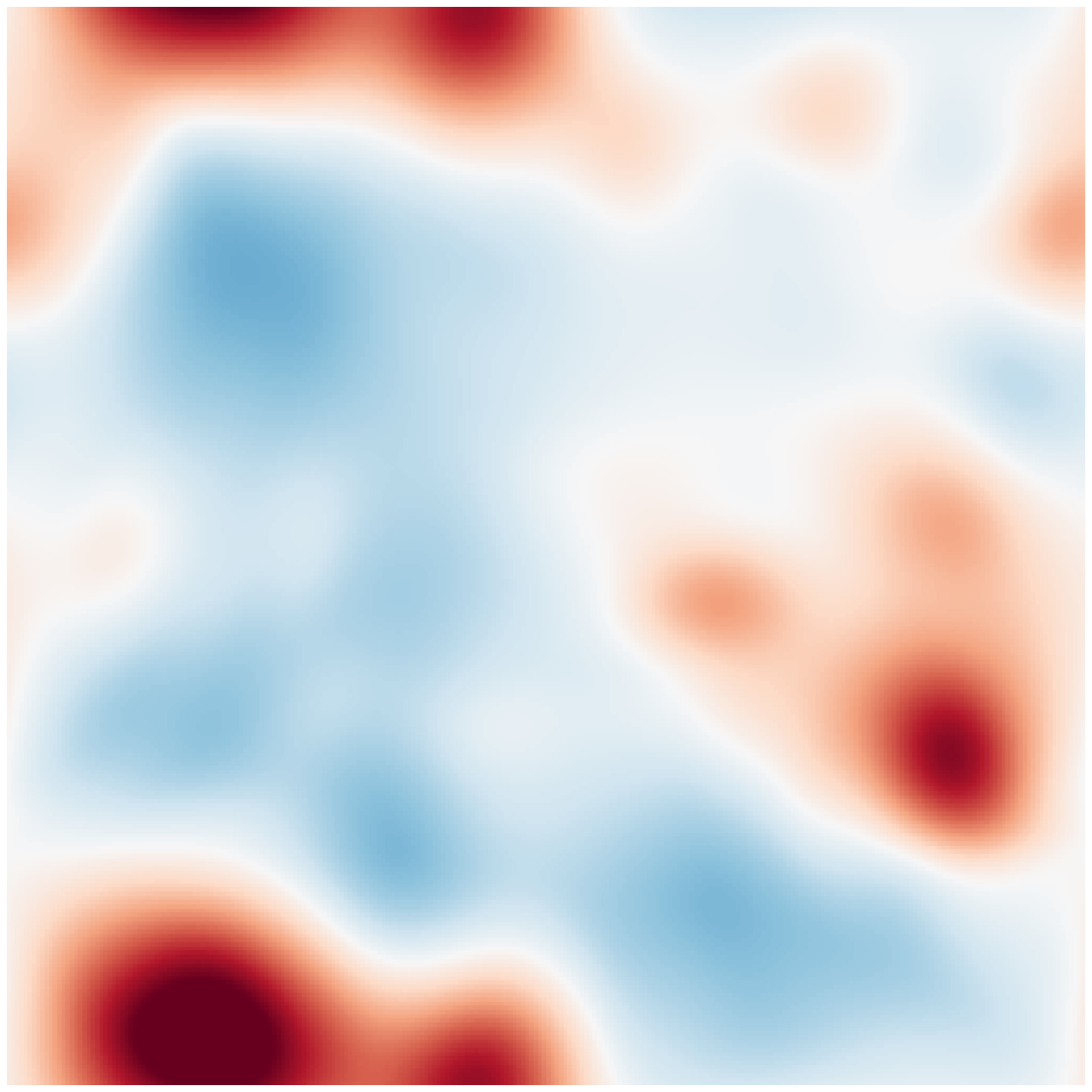}}
        &
        \multirow{1}{*}[3.5em]{\rotatebox{90}{\scriptsize EnSF \cite{bao2024ensemble}}}
        &
        \includegraphics[scale=.097]{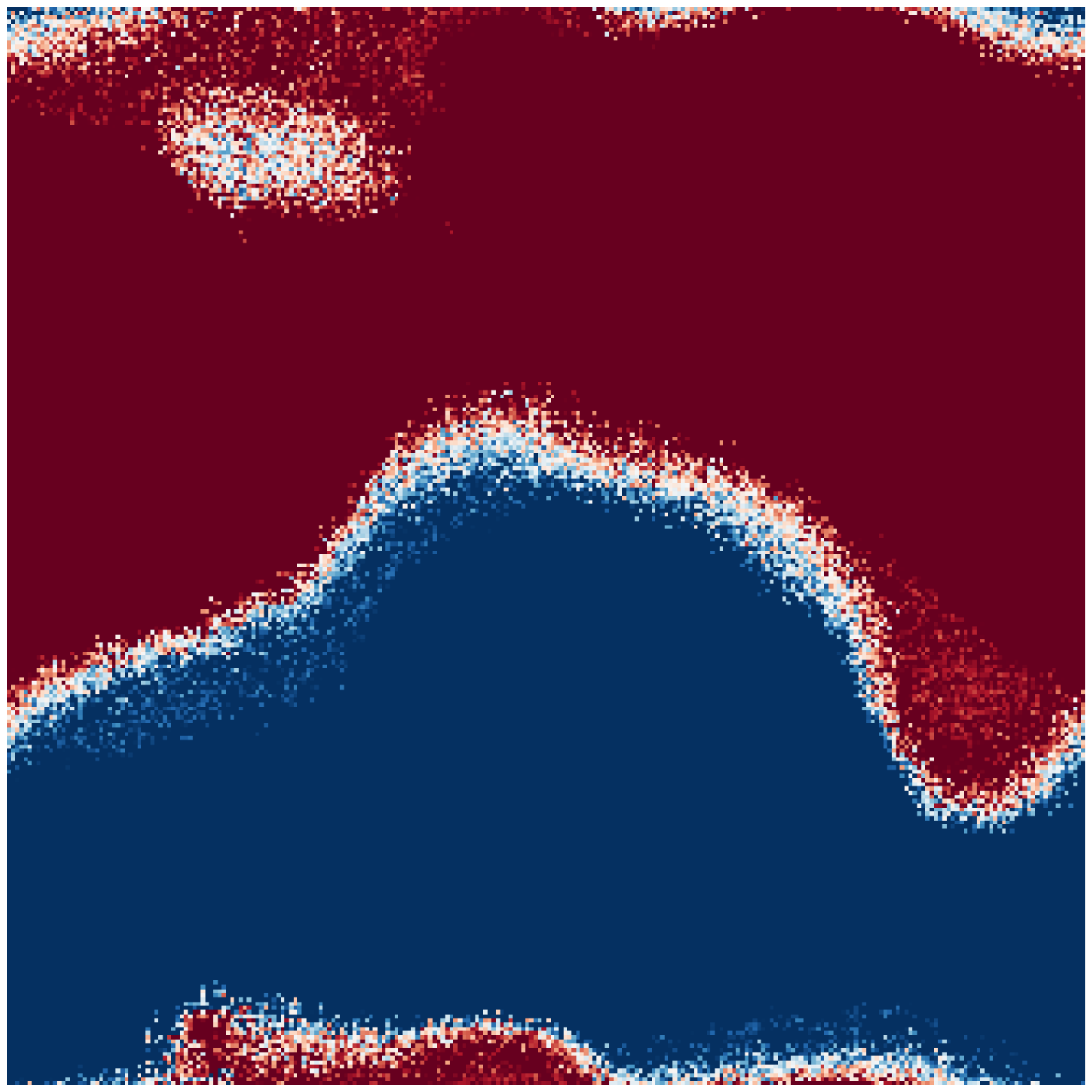}
        &
        \includegraphics[scale=.097]{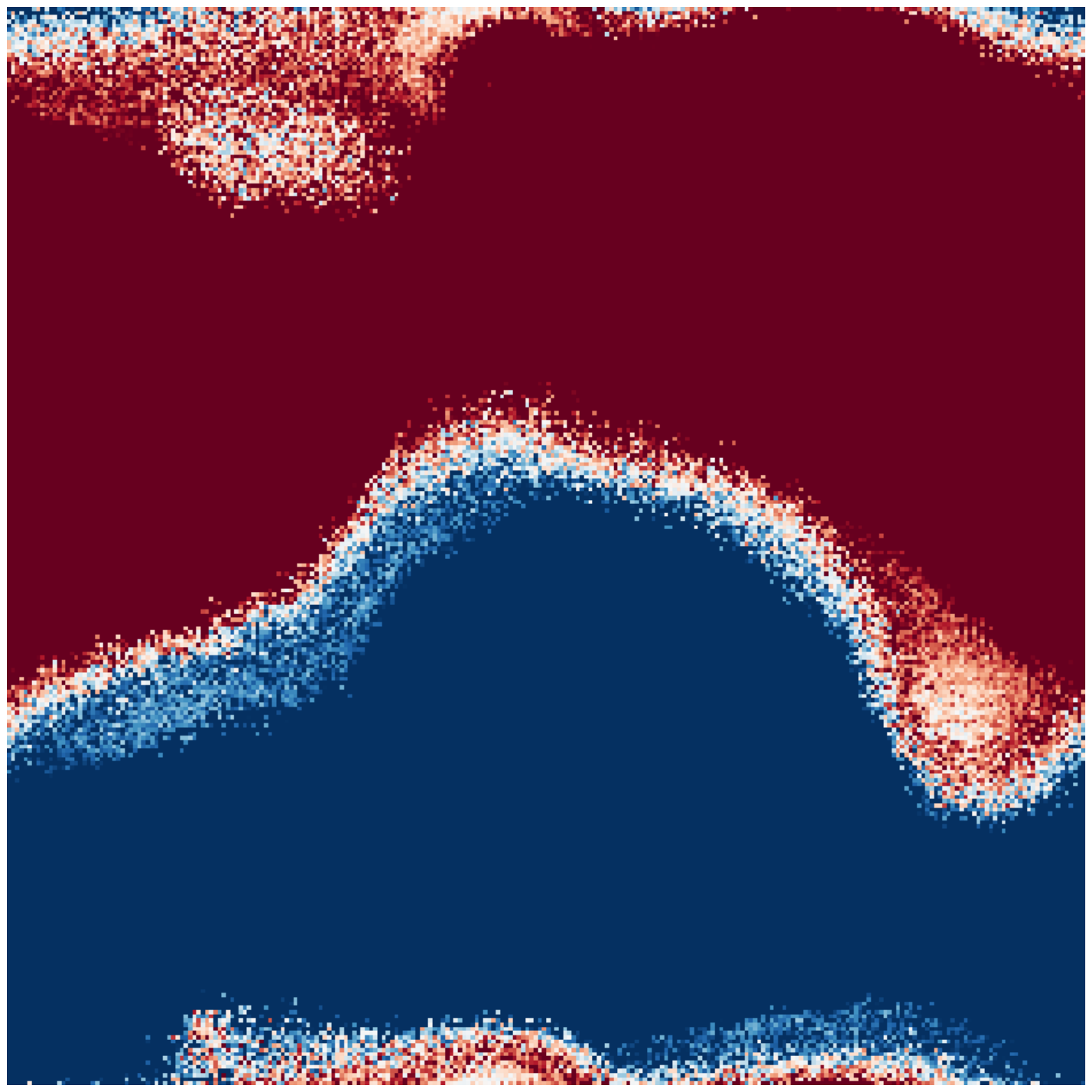}
        \\
        &
        &
        \rotatebox{90}{\scriptsize EnFF-F2P (Ours)}
        &
        \includegraphics[scale=.097]{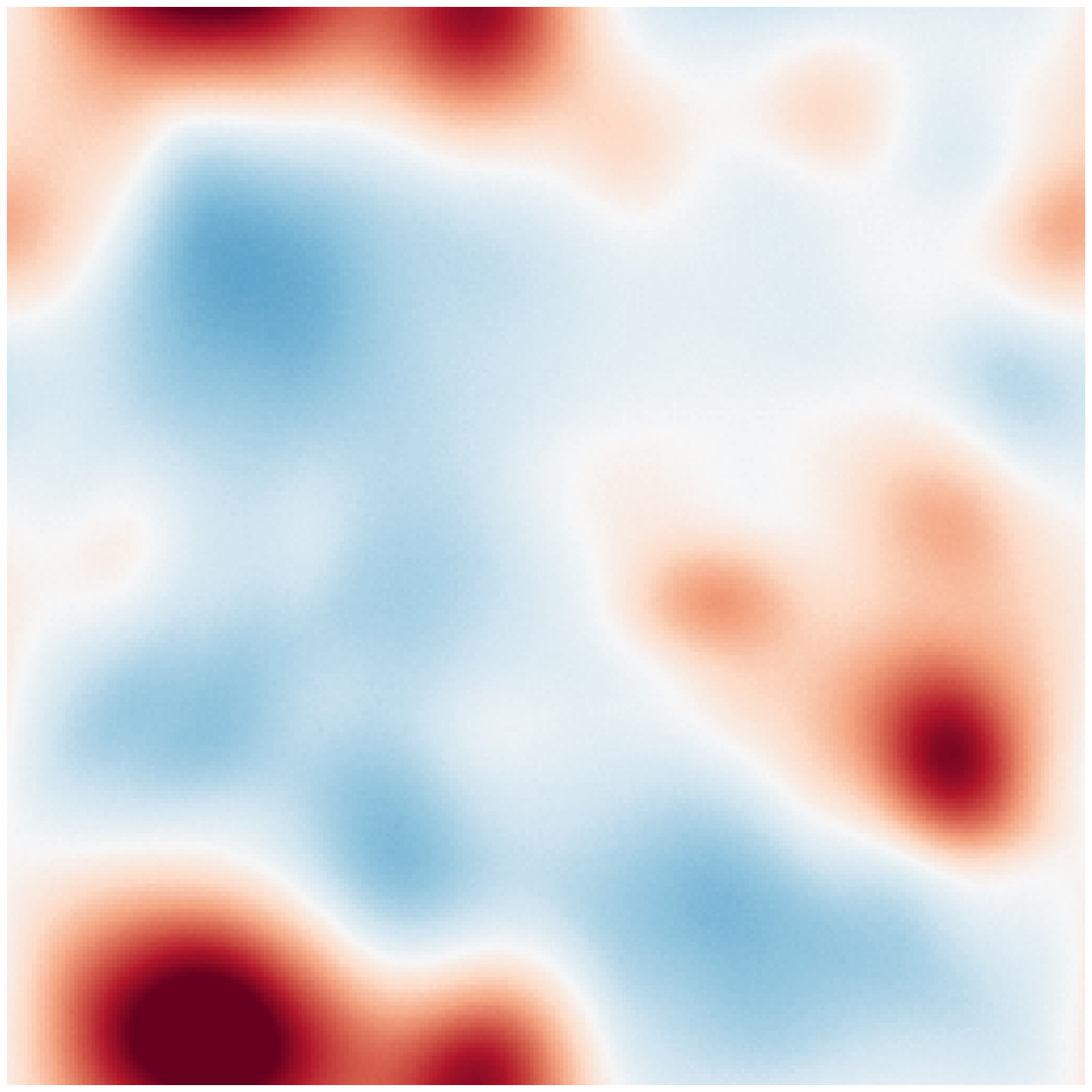}
        &
        \includegraphics[scale=.097]{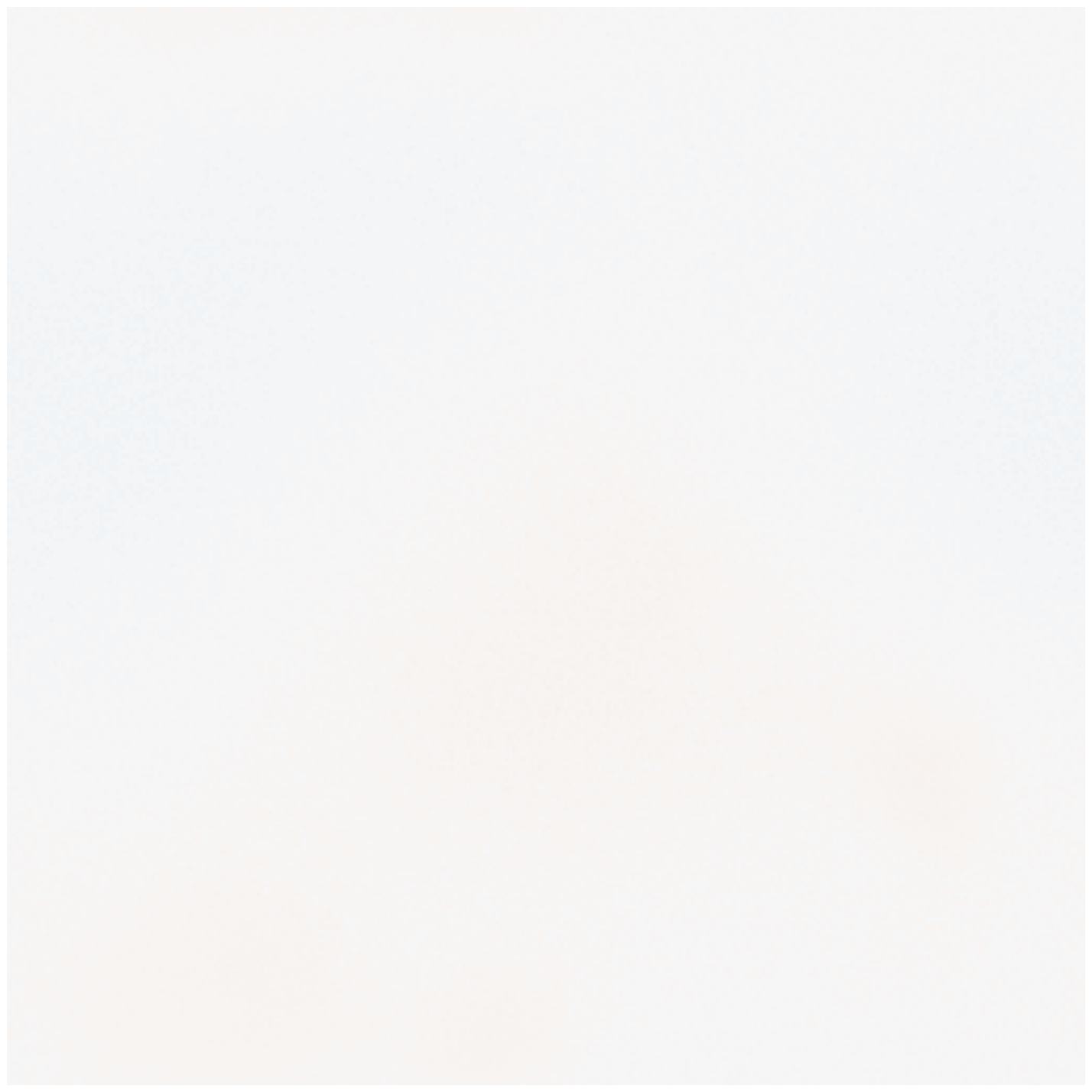}
        \\
        \multicolumn{2}{c}{{\scriptsize\hspace{4.3em} Ground truth}}
        &
        &
        {\scriptsize (a) Prediction}
        &
        {\scriptsize (b) Error}
    \end{tabular}\vspace{-0.2cm}
    \caption{\footnotesize
    Predicted pressure fields of EnSF (top) and EnFF-F2P (bottom) for NS on a $256 \times 256$ grid after 60 DA steps.
    The filters use $T = 10$ sampling steps since EnSF diverges when $T = 5$.
    The colorbar covers the ground truth's value range.
    }
    \label{fig:experiments:navier-stokes:256:trajectory:pressure}\vspace{-0.3cm}
\end{figure}

\subsection{Comparison with Classical DA Methods}
Varying the dimension of KS and NS, we compare EnSF, EnFF-OT, and EnFF-F2P with the following classical DA methods:
\begin{itemize}[leftmargin=5mm]
    \item \textbf{EnKF Perturbed Observation (EnKF-PO)} \cite{burgers1998analysis, anderson2001ensemble}:
    The classic EnKF implementation with stochastic updates, where predicted observations are perturbed by noise sampled from the observation distribution; see Appendix~\ref{app:EnKF}.
    \item \textbf{Iterative EnKF Perturbed Observation (iEnKF-PO)} (\cite{sakov2012iterative}): A variational scheme based on EnKF for strong nonlinearity, solving a time‐adjacent minimization and iteratively refining the analysis.
\item \textbf{Ensemble Square Root Filter (ESRF)} \cite{tippett2003ensemble}:
A deterministic EnKF variant that updates the ensemble via matrix square roots to match EnKF second-order statistics.
    \item \textbf{Local Ensemble Transform KF (LETKF)} \cite{hunt2007efficient}:
    An ESRF-based method that applies spatial localization via the Gaspari--Cohn (GC) function \cite{gaspari1999construction}.
\end{itemize}

\rev{All methods' hyperparameters are tuned on grids of size $512$ for KS and $64 \times 64$ for NS; tuning the classical methods on the grid sizes in \RefSec[sec:numerics:ensf-vs-enff] is prohibitively expensive. The hyperparameters of the classical methods are inflation and localization, except for BPF, iEnKF, and ESRF, where localization is not applied. \EDIT{For KS and NS, EnSF and EnFF} hyperparameters are tuned using $T = 5$ sampling timesteps.}

\subsubsection{Lorenz '63}
\rev{This benchmark provides a controlled comparison of EnSF and the EnFF variants with EnKF as localization is not needed in this low-dimensional setting. The experimental setting, based on \cite{spantiniCouplingTechniquesNonlinear2022}, is given in \RefTable[tab:experiment:experiment-details]. EnSF and the EnFF variants use $T = 10$ sampling timesteps, and EnKF does not use inflation. \RefFig[fig:rmse-crps:lorenz63spantini2022] tests each method varying the time between observations where the metrics are computed using a reference BPF with $10^6$ particles and process noise $\vect{\xi}\sim\Gaussian\pn{\vect{0},\pn{0.01}^2\mat{I}}$ as the ground truth. For Identity observations with $\sigma_y = 2$, both EnFF variants and EnKF perform better than EnSF in each case. Strong performance of EnKF when the time between observations is $0.1$ is expected; however, as the time between observations grows, EnKF and the EnFF variants perform similarly. With Arctan observations and $\sigma_y = 0.1$, EnKF has the lowest mean but diverges when the time between observations is $0.8$. EnSF and the EnFF variants are stable in this strongly nonlinear setting.}

\begin{figure}
    \centering
    \scriptsize
    \begin{tabular}{cccc}
         \multicolumn{4}{c}{\includegraphics[scale=.28]{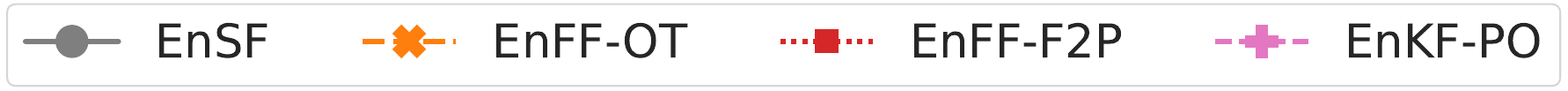}}
         \\
         \includegraphics[scale=.23]{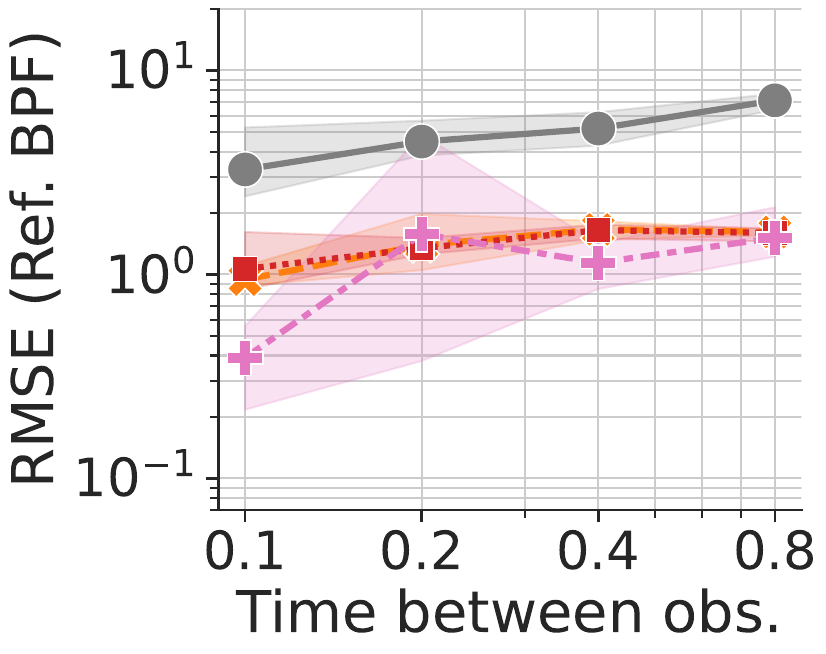}
         &
         \includegraphics[scale=.23]{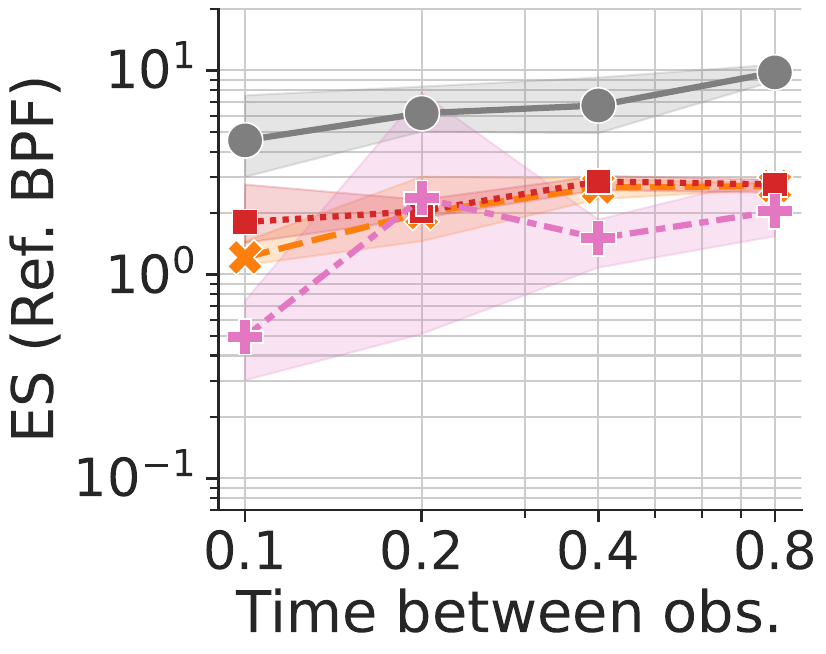}
         &
         \includegraphics[scale=.23]{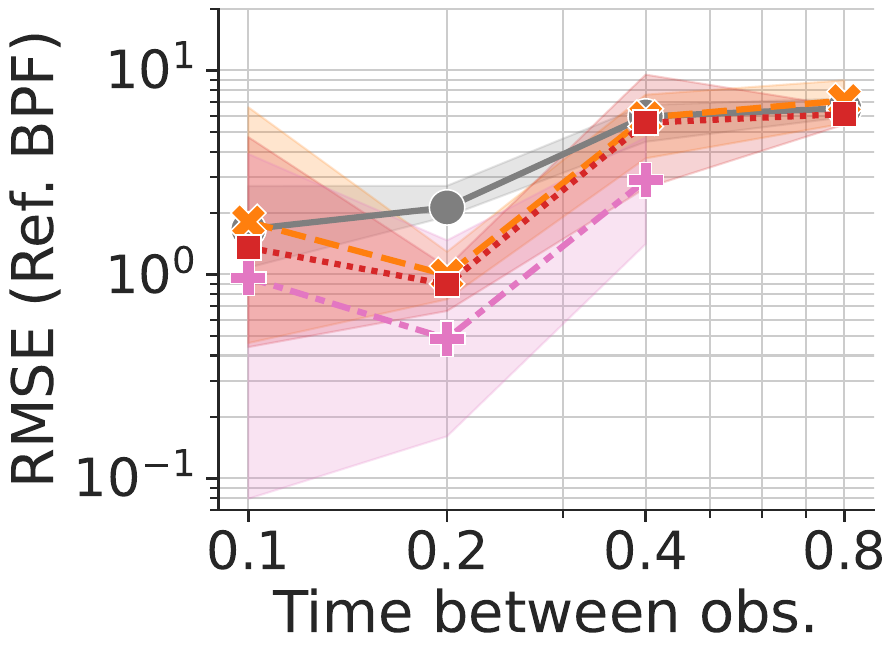}
         &
         \includegraphics[scale=.23]{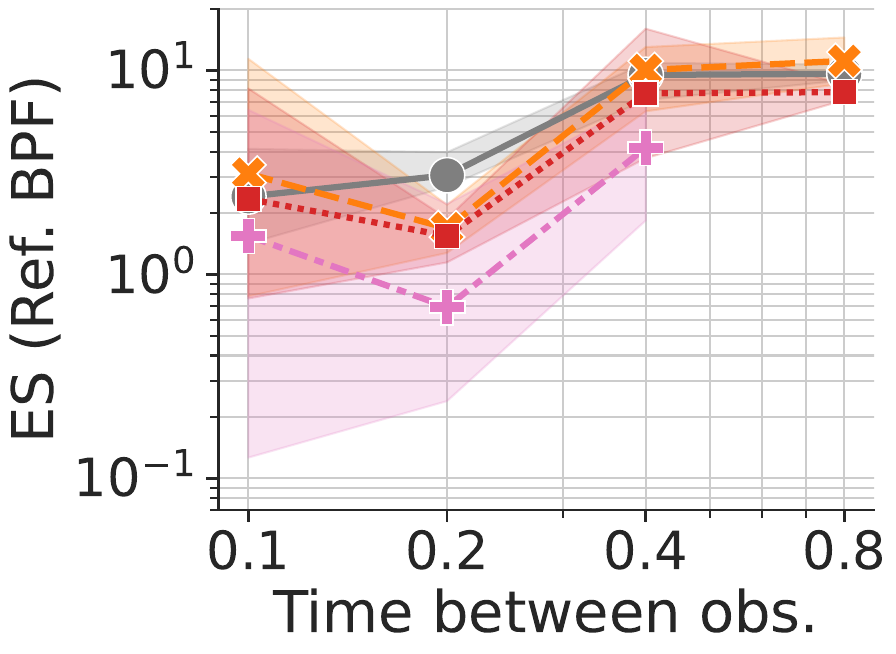}
         \\
         (a) RMSE (Identity)
         &
         (b) ES (Identity)
         &
         (c) RMSE (Arctan)
         &
         (d) ES (Arctan)
    \end{tabular}
    \caption{
        Metrics averaged over five runs varying the time between observations on Lorenz '63 with Identity and Arctan observations.
        The error bands show the min and max.
        The metrics are computed using a reference BPF with $10^6$ particles as the ground truth.
        EnSF and EnFF use $T = 10$ sampling timesteps, and EnKF does not use inflation or localization.
    }
    \label{fig:rmse-crps:lorenz63spantini2022}
\end{figure}

\subsubsection{High-Dimensional Settings}

\rev{\RefFig[fig:classical-comparison:rmse-by-dimension] compares EnSF, the EnFF variants, and the classical DA baselines on both KS and NS, across different state dimensions and for Identity and Arctan observations. Complementary to that, \RefFig[fig:classical-comparison:benchmark-timings] compares their computational times. Overall, the EnFF variants behave very similarly and consistently outperform EnSF across these experiments. Although they are not uniformly the best method on every metric and every problem setting, they provide a strong and stable accuracy--efficiency trade-off, especially as the dimension increases. Among the classical baselines, LETKF is generally the strongest performer in terms of RMSE and ES, but this comes at a substantially higher computational cost. \RefFig[fig:classical-comparison:benchmark-timings] shows that LETKF is roughly two orders of magnitude slower than the EnFF variants on two datasets for all dimensions. Moreover, the EnFF variants outperform LETKF for NS on the $256 \times 256$ grid. EnKF-PO is competitive in some settings, but its performance varies considerably with dimension, and for NS on the $256 \times 256$ grid we could not obtain a convergent run with ensemble size $N=20$. The remaining classical baselines, namely ESRF, iEnKF-PO, and BPF, are in most cases clearly worse than the EnFF variants, and in some settings no stable hyperparameter configuration could be found. Thus, while the EnFF variants are not always the single best method in raw accuracy, they deliver the most favorable overall trade-off between performance, stability, and computational cost among the methods considered here.}

\begin{figure}[!ht]
    \centering
    \scriptsize
    \includegraphics[scale=.25]{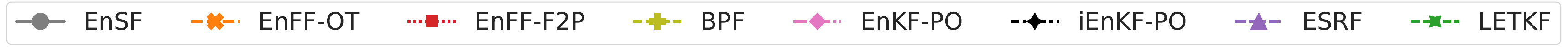}
    \begin{tabular}{cccc}
        \multirow{1}{*}[4.0em]{\rotatebox{90}{KS}}
        \includegraphics[scale=.22]{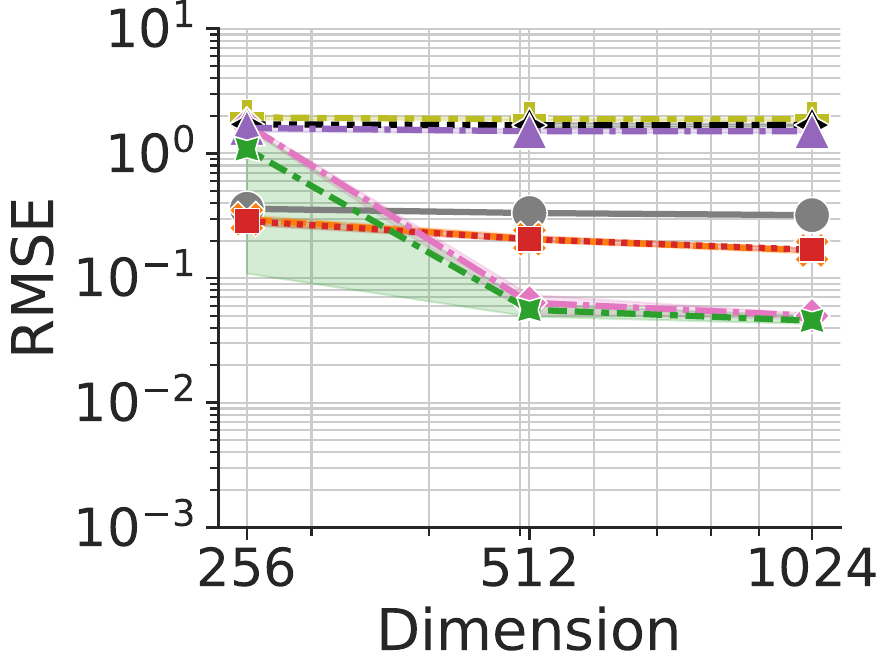}
        &
        \includegraphics[scale=.22]{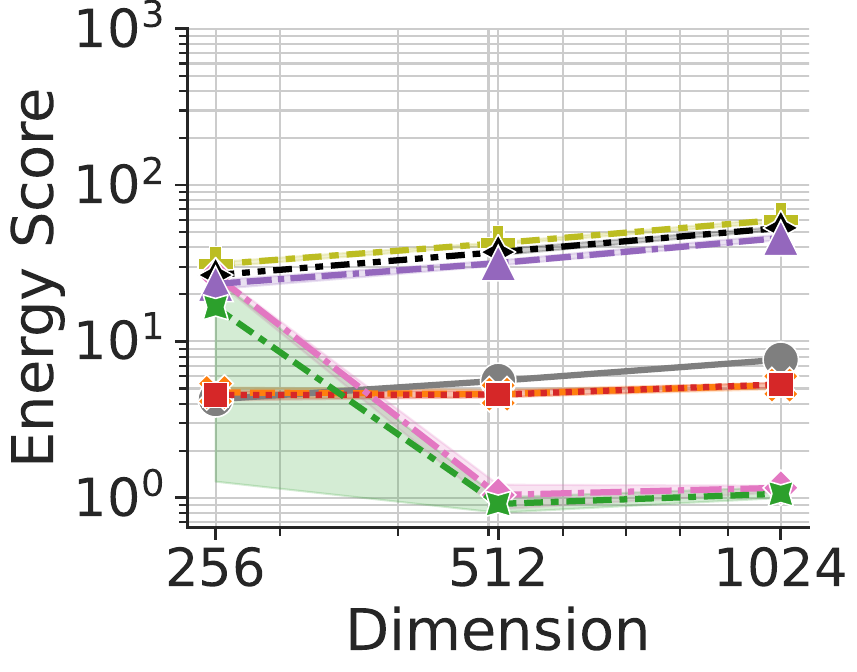}
        &
        \includegraphics[scale=.22]{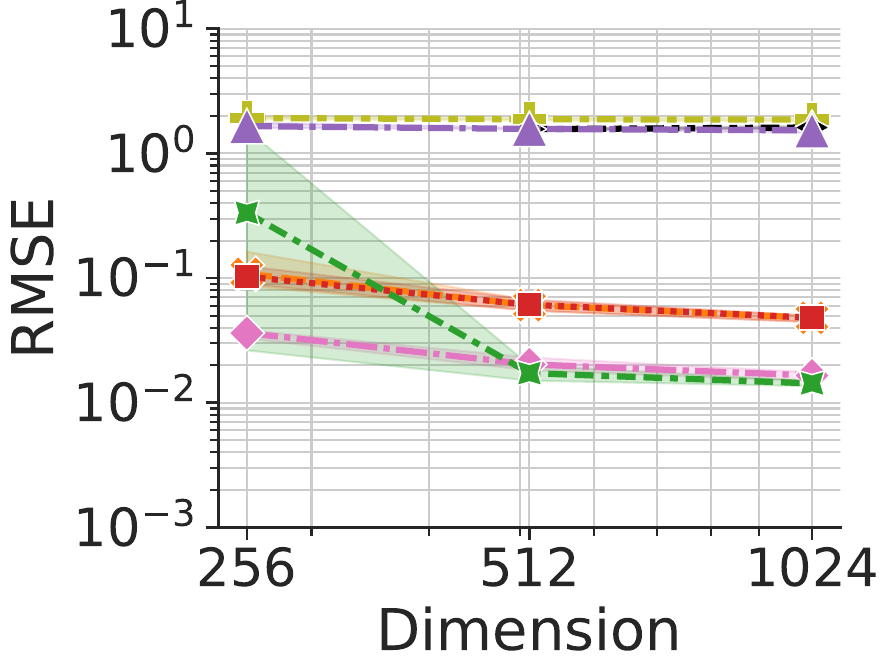}
        &
        \includegraphics[scale=.22]{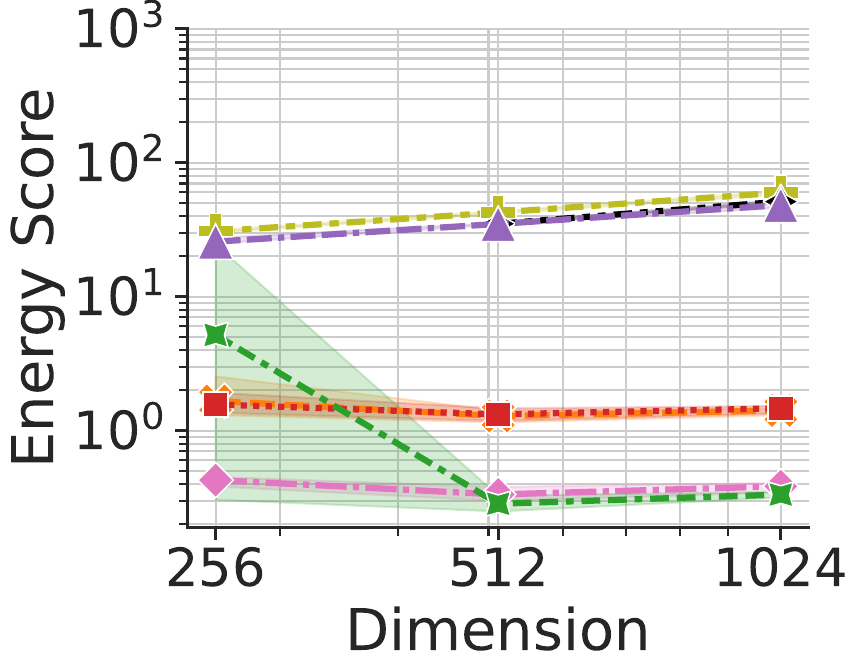}
        \\
        \multirow{1}{*}[4.0em]{\rotatebox{90}{NS}}
        \includegraphics[scale=.22]{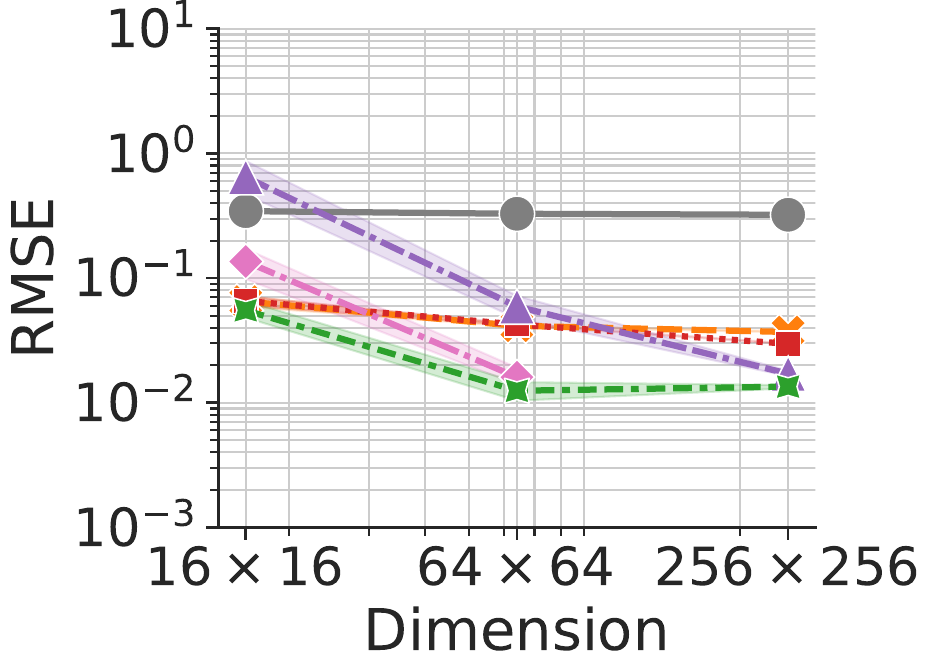}
        &
        \includegraphics[scale=.22]{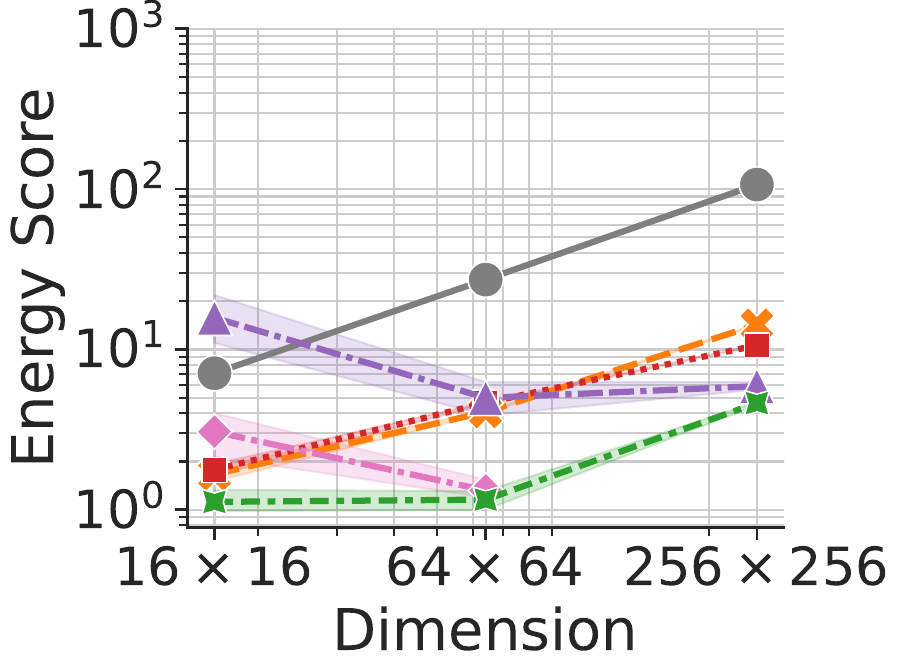}
        &
        \includegraphics[scale=.22]{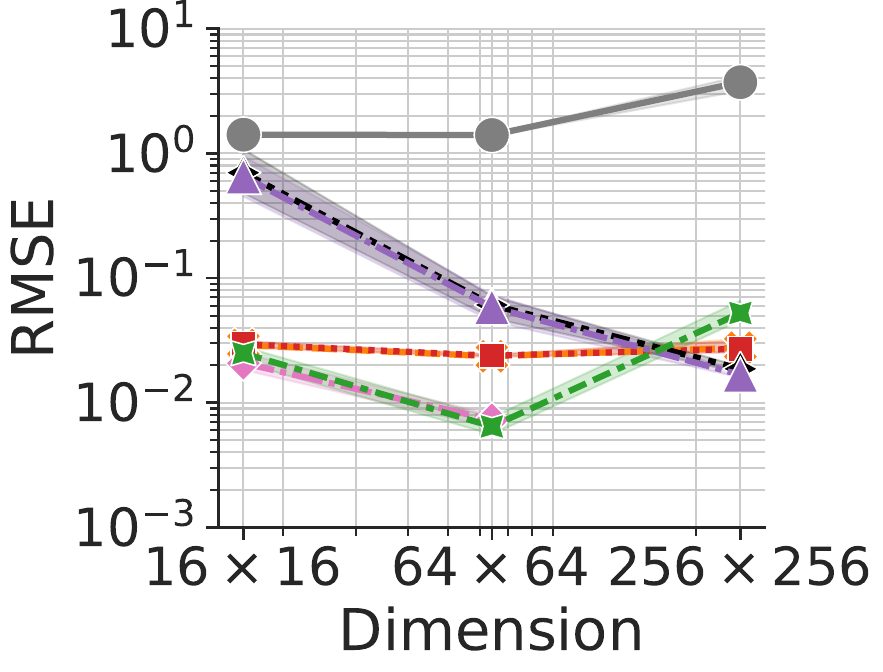}
        &
        \includegraphics[scale=.22]{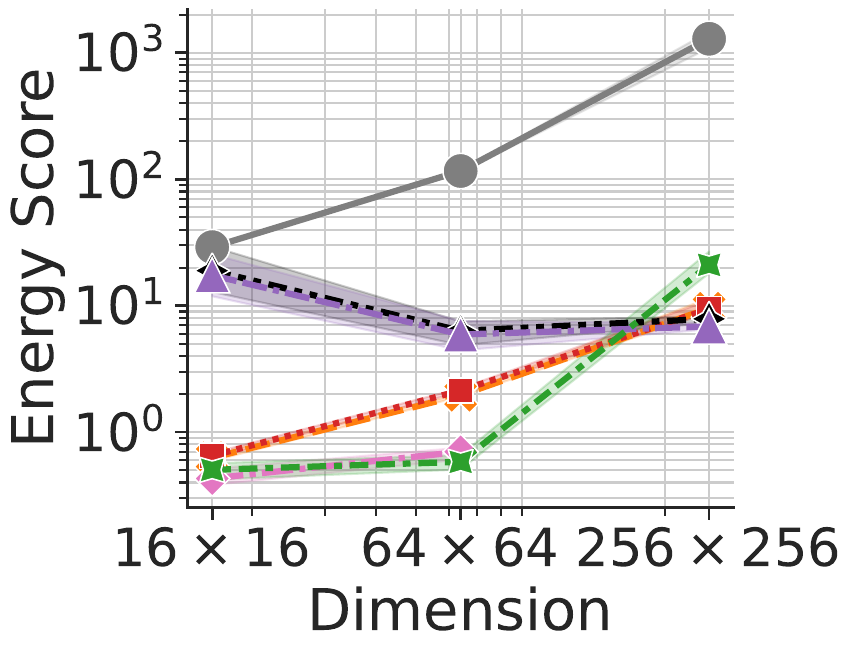}
        \\
        (a) RMSE (Identity)
         &
        (b) ES (Identity)
         &
        (c) RMSE (Arctan)
         &
        (d) ES (Arctan)
    \end{tabular}\vspace{-0.2cm}
    \caption{
    Comparison of EnSF and EnFF with classical DA baselines for KS and NS. EnSF, EnFF-OT and EnFF-F2P use 5 sampling time steps.
    The line shows the mean, and the shaded band shows the minimum and maximum \EDIT{value} observed.
    For methods not plotted, we could not find hyperparameters for which they did not diverge.
    }
    \label{fig:classical-comparison:rmse-by-dimension}
\end{figure}

\begin{figure}[!ht]
    \centering
    \scriptsize
    \includegraphics[scale=.25]{Figs/AblationSamplingTimeSteps/ClassicalComparision/KuramotoSivashinsky/ATanObs/Ablation.legend.pdf}
        \\
    \begin{tabular}{cc}
        \includegraphics[scale=.25]{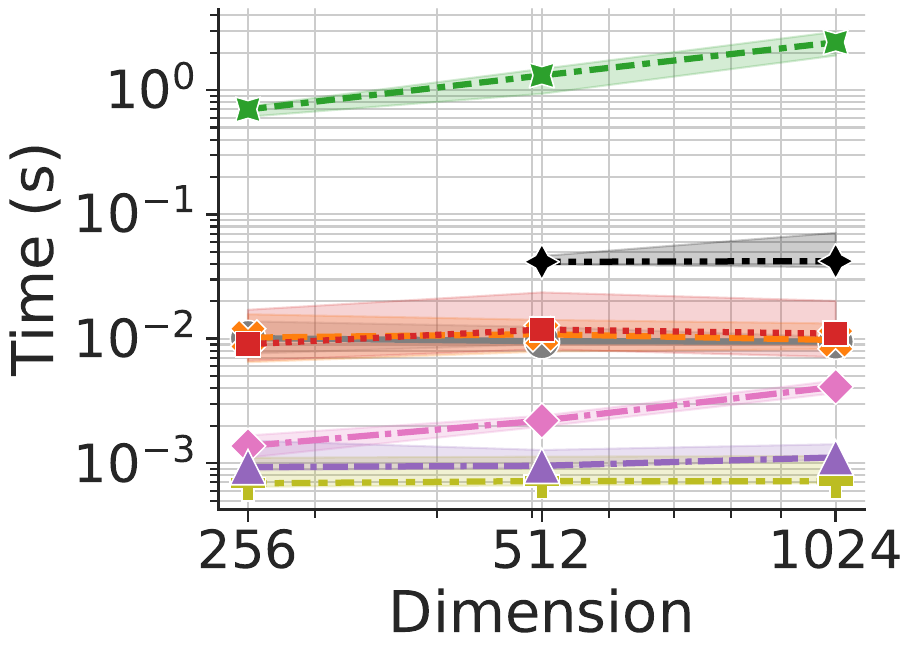}
        &
        \includegraphics[scale=.25]{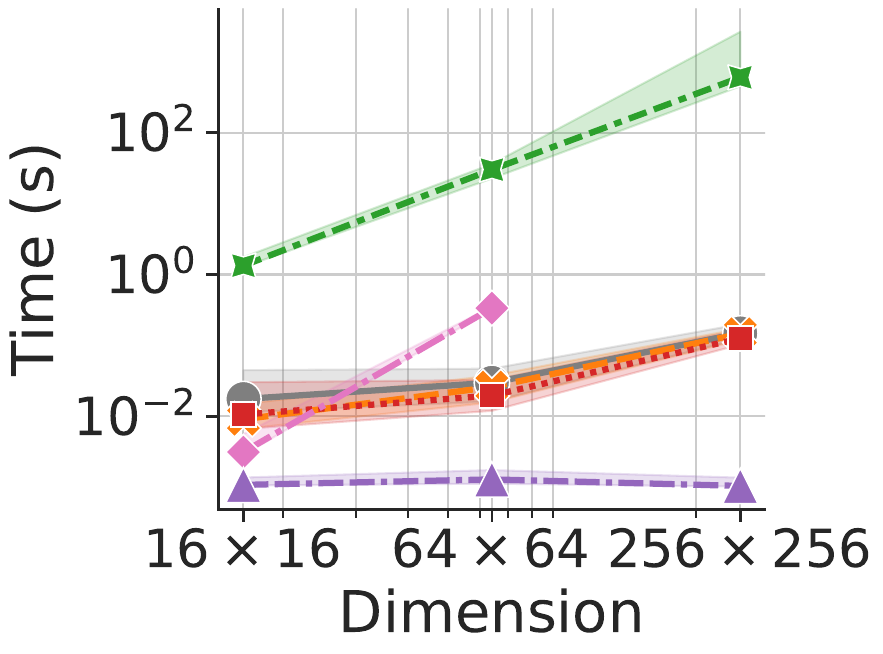}
        \\
        (a) KS
        &
        (b) NS
    \end{tabular}
    \caption{
        Time per DA step for the methods on KS and NS. EnSF and EnFF use 5 sampling time steps.
        Time is measured in seconds per DA step and is averaged over 50 DA steps. The line shows the mean, and the shaded band shows the minimum and maximum time observed.
        For methods not plotted, we could not find hyperparameters for which they did not diverge.
    }
    \label{fig:classical-comparison:benchmark-timings}
\end{figure}

\subsection{Ablation Studies}
\rev{We study the robustness of EnFF with respect to the number of sampling timesteps $T$ and its hyperparameters $\pn{\sigmamin,\lambda}$ introduced in \RefSec[sec:mc-approximation][sec:mc_guidance]. We consider two complementary views of hyperparameter robustness. First, \RefFig[fig:hyperparameter-robustness:line] shows the optimal choices of $\lambda$ and $\sigmamin$ for EnFF-OT and EnFF-F2P as functions of $T$. For each method and $T$, the optimal pair $\pn{\sigmamin,\lambda}$ is selected by minimizing the mean RMSE over the last 50 DA steps.}

\rev{The optimal hyperparameters of EnFF-F2P vary much less with $T$ than those of EnFF-OT. For Lorenz '96 and KS, the optimal values of both $\lambda$ and $\sigmamin$ for EnFF-F2P are nearly unchanged across the tested sampling timesteps. For NS, the optimal $\lambda$ of EnFF-F2P is also stable, while the optimal $\sigmamin$ exhibits some fluctuation. In contrast, the optimal hyperparameters of EnFF-OT change noticeably more as $T$ increases, especially for $\lambda$ on Lorenz '96 and KS. This indicates that EnFF-F2P has more stable hyperparameter choices across different sampling budgets. In practice, this stability suggests an efficient tuning strategy for EnFF-F2P: one can search for hyperparameters using a small number of sampling timesteps, where each trial is inexpensive, and then reuse the selected hyperparameters at larger $T$ to obtain improved performance.}

\rev{Second, \RefFig[fig:hyperparameter-robustness:contour] visualizes the hyperparameter landscape at $T = 5$ where darker regions correspond to smaller RMSE. Our hyperparameter search is implemented with Optuna \cite{akiba2019optuna}, which is not exhaustively evaluating a Cartesian grid. A larger dark region indicates that a method attains good performance for a wider range of hyperparameter configurations, and is thus more robust to hyperparameter selection. Across the tested systems, EnFF-F2P generally has broader low-RMSE regions than EnFF-OT. This difference is particularly clear for Lorenz '96, where EnFF-F2P remains accurate over a much wider portion of the hyperparameter domain. Together with the stability of the optimal hyperparameters in \RefFig[fig:hyperparameter-robustness:line], these results show that EnFF-F2P is more robust to the choice of $\sigmamin$, $\lambda$, and $T$ than EnFF-OT.}

\begin{figure}[!ht]
    \centering
    \scriptsize
    \begin{tabular}{ccc}
         \multicolumn{3}{c}{\includegraphics[scale=.3]{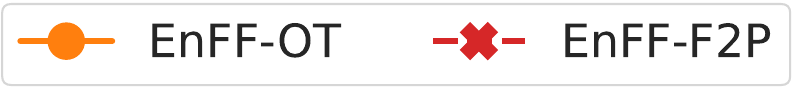}}
         \\
         \includegraphics[scale=.27]{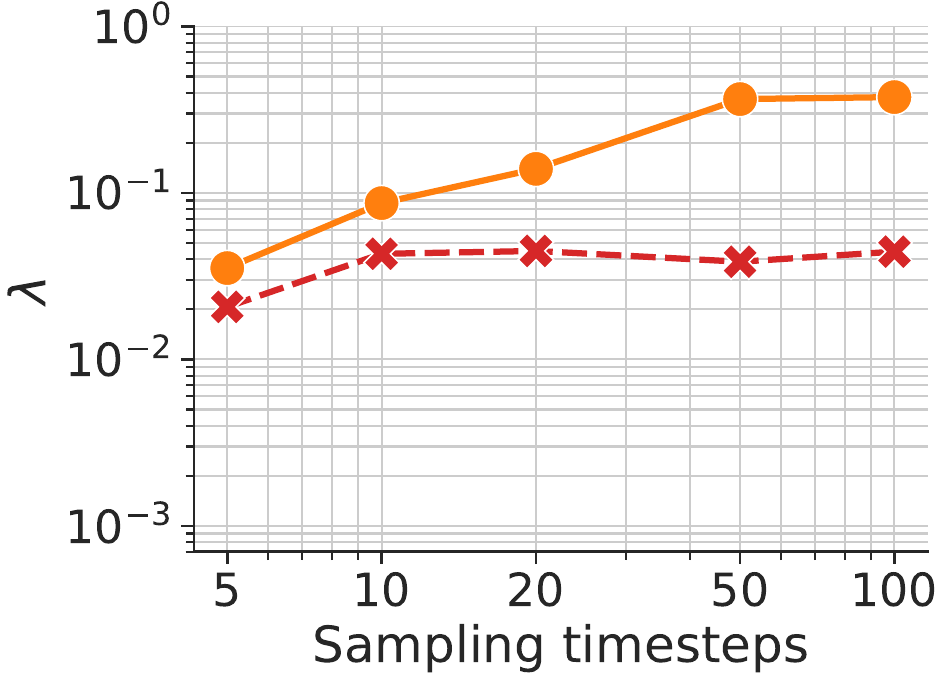}
         &
         \includegraphics[scale=.27]{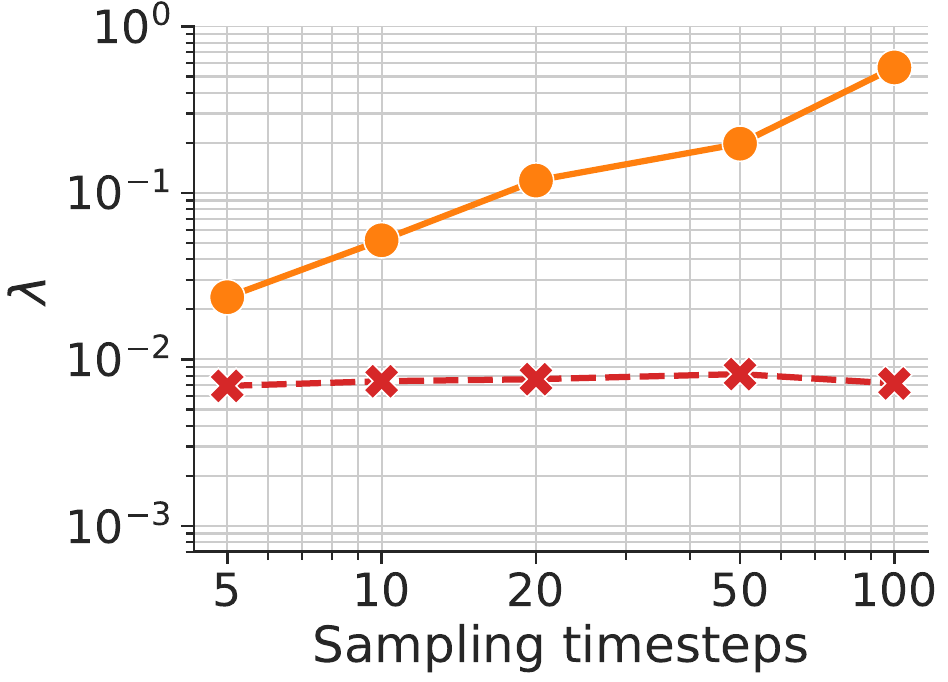}
         &
         \includegraphics[scale=.27]{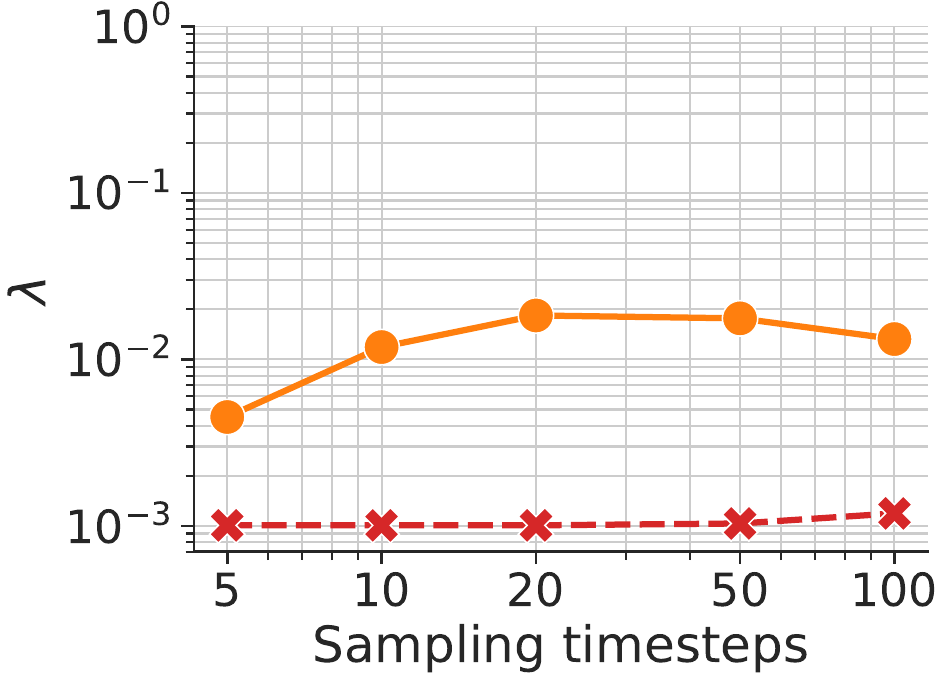}
         \\
         \includegraphics[scale=.27]{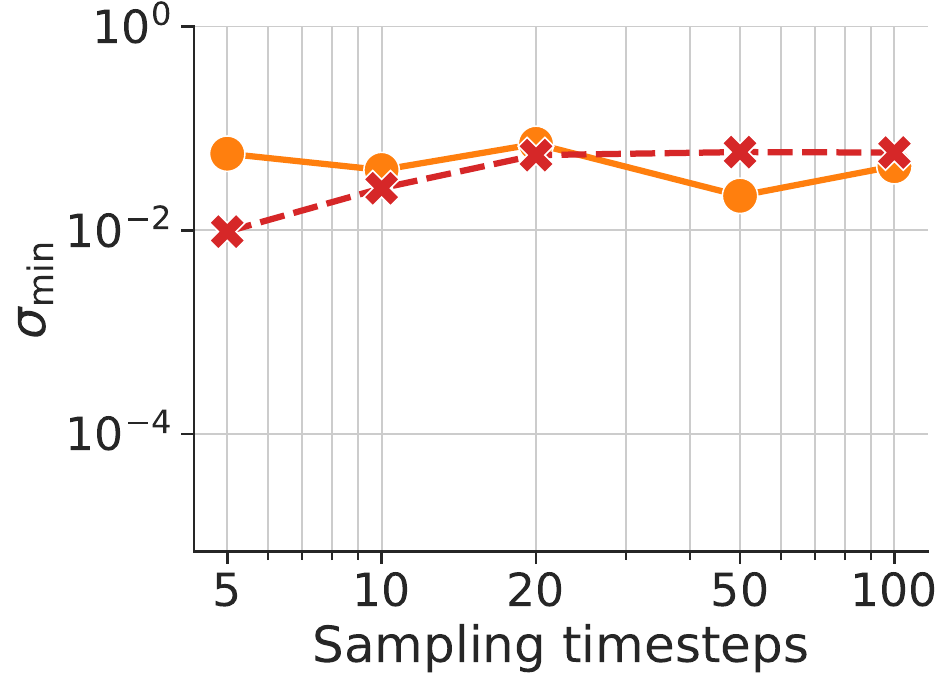}
         &
         \includegraphics[scale=.27]{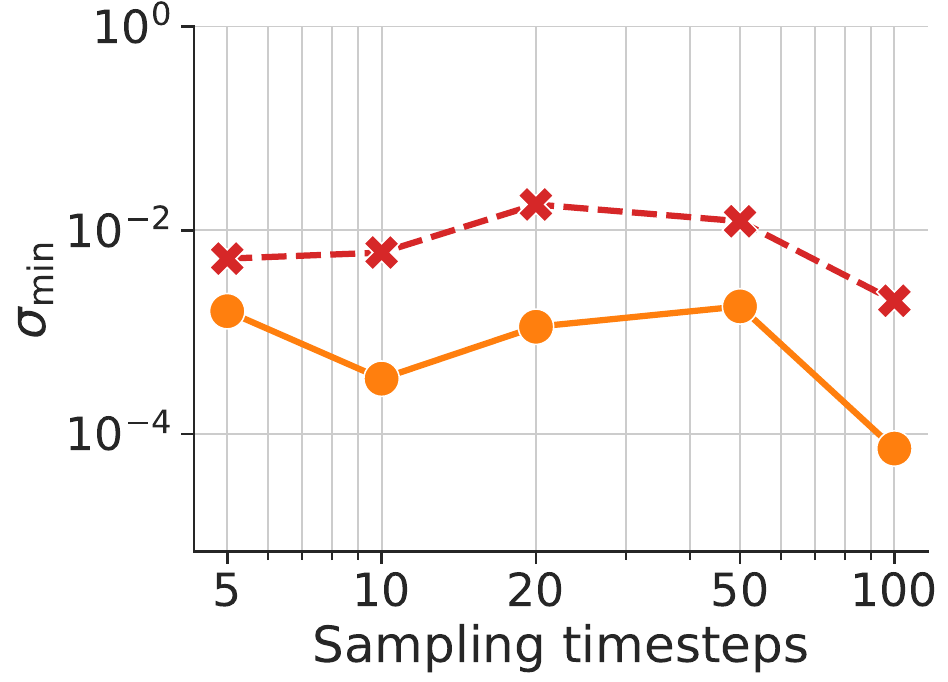}
         &
         \includegraphics[scale=.27]{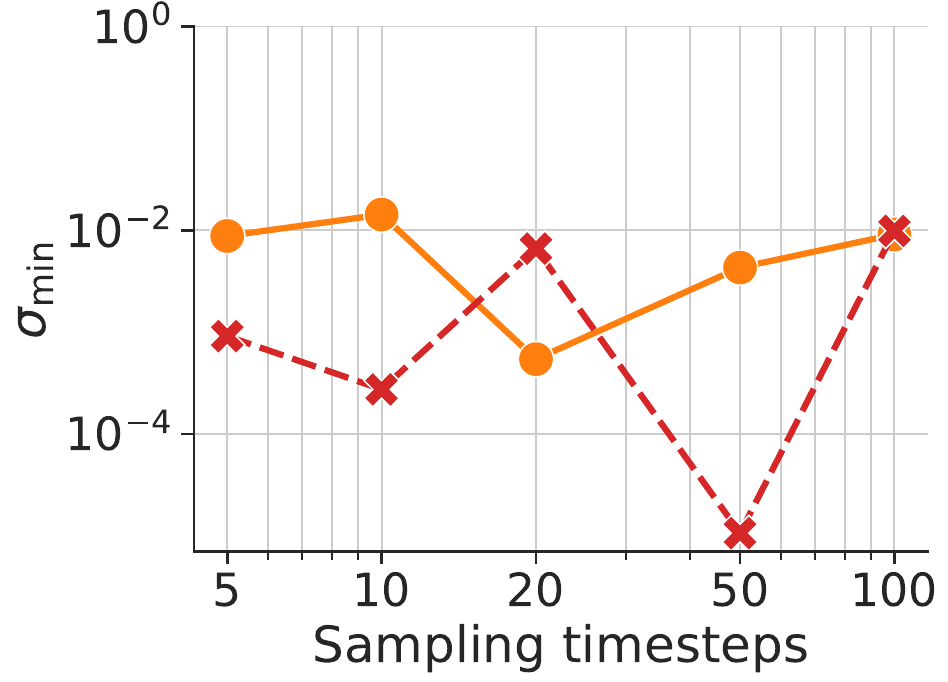}
         \\
         (a) Lorenz '96 (Arctan)
         &
         (b) KS 1,024 (Arctan)
         &
         (c) NS-256 (Arctan)
    \end{tabular}
    \caption{
        Optimal hyperparameters $\lambda$ and $\sigmamin$ of the EnFF variants varying sampling timesteps $T$.
    }
    \label{fig:hyperparameter-robustness:line}
\end{figure}

\begin{figure}[!ht]
    \centering
    \scriptsize
    \begin{tabular}{cccc}
        \multirow{1}{*}[5.5em]{\rotatebox{90}{\scriptsize EnFF-OT}}
        \includegraphics[scale=.3]{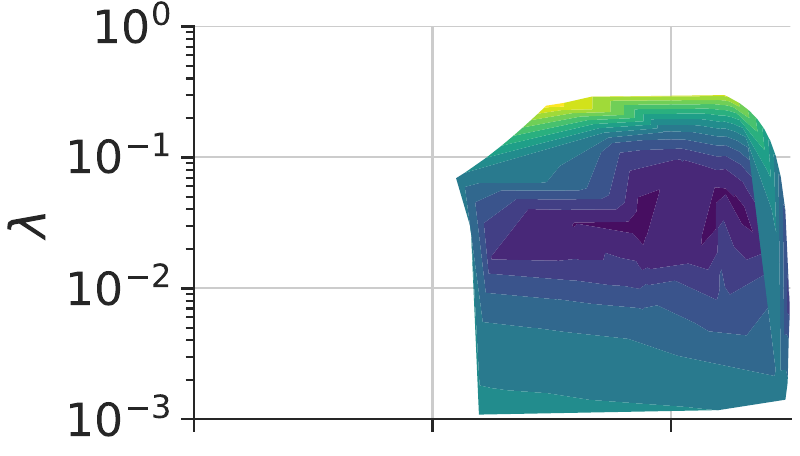}
        &
        \includegraphics[scale=.3]{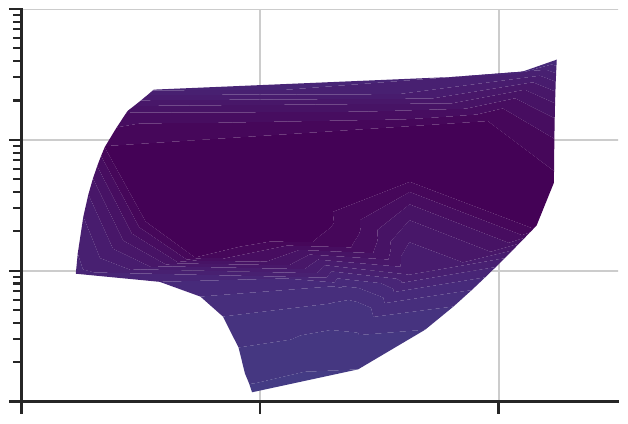}
        &
        \includegraphics[scale=.3]{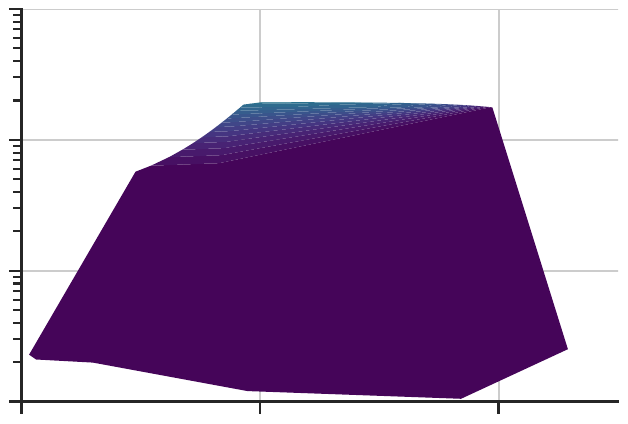}
        &
        \multirow{2}{*}[6.5em]{\includegraphics[scale=.3]{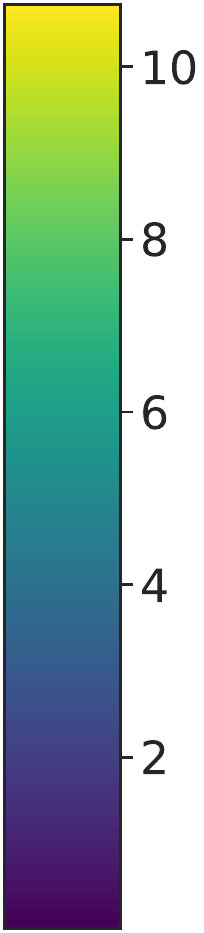}}
        \\
        \multirow{1}{*}[7.0em]{\rotatebox{90}{\scriptsize EnFF-F2P}}
        \includegraphics[scale=.3]{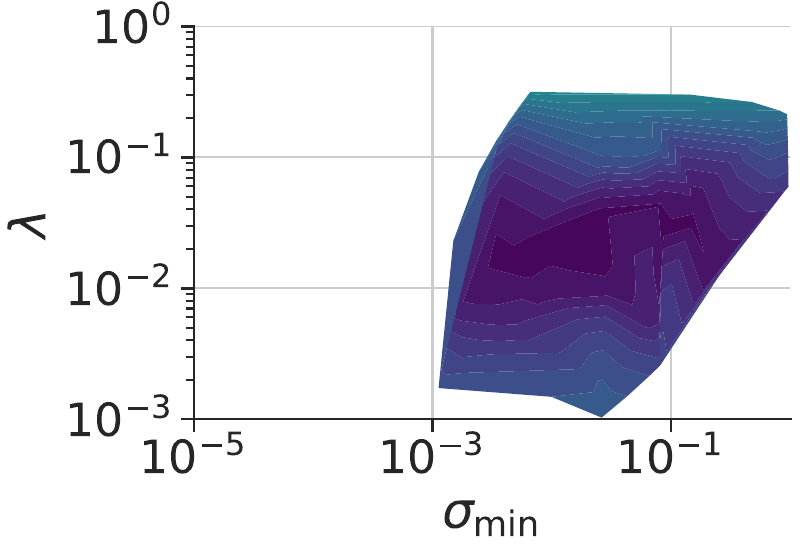}
        &
        \includegraphics[scale=.3]{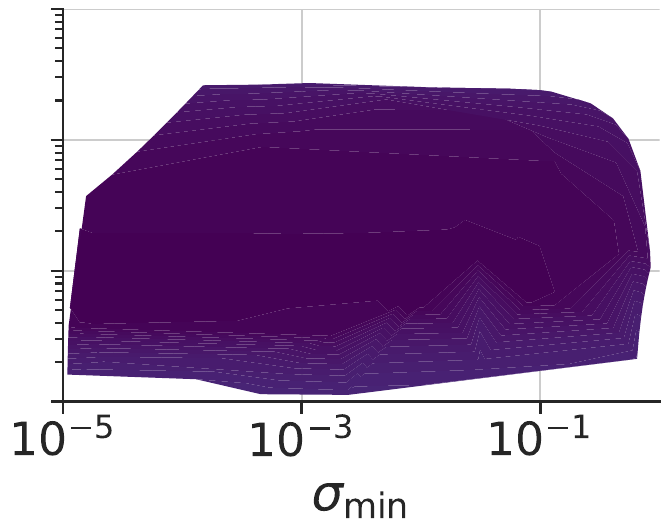}
        &
        \includegraphics[scale=.3]{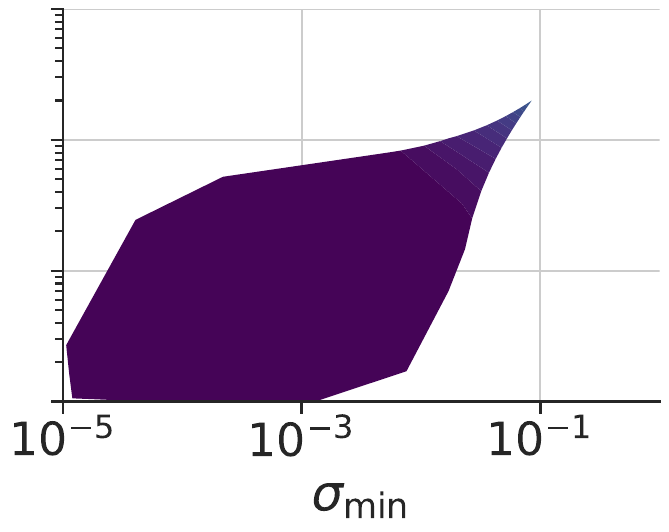}
        \\
        (a) Lorenz '96 (Arctan)
        &
        (b) KS 1,024 (Arctan)
        &
        (c) NS-256 (Arctan)
    \end{tabular}
    \caption{
        Mean RMSE over the last 50 DA steps varying $\lambda$ and $\sigmamin$ of the EnFF variants at $T = 5$.
    }
    \label{fig:hyperparameter-robustness:contour}
\end{figure}

\section{Concluding Remarks}\label{sec:Conclusion}
We introduced EnFF, a training-free DA framework based on FM principles, using MC estimators for the marginal VF and a localized guidance strategy for filtering. EnFF offers faster sampling and greater flexibility in VF design compared to score-based methods like EnSF. Theoretically, we show that it generalizes classical filters such as BPF and EnKF. Empirical results on high-dimensional benchmarks confirm EnFF’s superior accuracy-cost tradeoff, highlighting its robustness and scalability.
For future work, to further improve scalability, we plan to reduce the state dimension by performing DA in a latent space, similar to the works \cite{si2024latent, xiao2024ld}.

\bibliographystyle{siamplain}
\bibliography{references}

@article{lipman2022flow,
  title={Flow matching for generative modeling},
  author={Lipman, Yaron and Chen, Ricky TQ and Ben-Hamu, Heli and Nickel, Maximilian and Le, Matt},
  journal={arXiv preprint arXiv:2210.02747},
  year={2022}
}

@article{huang2024diffda,
  title={Diffda: a diffusion model for weather-scale data assimilation},
  author={Huang, Langwen and Gianinazzi, Lukas and Yu, Yuejiang and Dueben, Peter D and Hoefler, Torsten},
  journal={arXiv preprint arXiv:2401.05932},
  year={2024}
}

@article{bao2024ensemble,
  title={An ensemble score filter for tracking high-dimensional nonlinear dynamical systems},
  author={Bao, Feng and Zhang, Zezhong and Zhang, Guannan},
  journal={Computer Methods in Applied Mechanics and Engineering},
  volume={432},
  pages={117447},
  year={2024},
  publisher={Elsevier}
}

@article{bao2024score,
title = {A score-based filter for nonlinear data assimilation},
journal = {Journal of Computational Physics},
volume = {514},
pages = {113207},
year = {2024},
issn = {0021-9991},
author = {Feng Bao and Zezhong Zhang and Guannan Zhang},
keywords = {Nonlinear filtering, Diffusion model, Score-based models, Stochastic dynamical systems, Kalman filter, Particle filter},
abstract = {We propose a score-based generative sampling method for solving the nonlinear filtering problem with superior accuracy. A major drawback of existing nonlinear filtering methods, e.g., particle filters, is the low accuracy in handling high-dimensional nonlinear problems. To overcome this issue, we incorporate the score-based diffusion model into the recursive Bayesian filter framework to develop a novel score-based filter (SF). The key idea of SF is to store the information of the recursively updated filtering density function in the score function, instead of storing the information in a set of finite Monte Carlo samples (used in particle filters and ensemble Kalman filters). By leveraging the reverse-time diffusion process, SF can generate unlimited samples to characterize the filtering density. An essential aspect of SF is its analytical update step, gradually incorporating data information into the score function. This step is crucial in mitigating the degeneracy issue faced when dealing with very high-dimensional nonlinear filtering problems. Three benchmark problems are used to demonstrate the performance of our method. In particular, SF provides surprisingly impressive performance in reliably capturing/tracking the 100-dimensional stochastic Lorenz system that is a well-known challenging problem for existing filtering methods.}
}

@article{feng2025guidance,
  title={On the Guidance of Flow Matching},
  author={Feng, Ruiqi and Wu, Tailin and Yu, Chenglei and Deng, Wenhao and Hu, Peiyan},
  journal={arXiv preprint arXiv:2502.02150},
  year={2025}
}

@article{chen2025flowdas,
  title={FlowDAS: A Flow-Based Framework for Data Assimilation},
  author={Chen, Siyi and Jia, Yixuan and Qu, Qing and Sun, He and Fessler, Jeffrey A},
  journal={arXiv preprint arXiv:2501.16642},
  year={2025}
}

@article{chung2022diffusion,
  title={Diffusion posterior sampling for general noisy inverse problems},
  author={Chung, Hyungjin and Kim, Jeongsol and Mccann, Michael T and Klasky, Marc L and Ye, Jong Chul},
  journal={arXiv preprint arXiv:2209.14687},
  year={2022}
}

@article{si2024latent,
  title={Latent-EnSF: A latent ensemble score filter for high-dimensional data assimilation with sparse observation data},
  author={Si, Phillip and Chen, Peng},
  journal={arXiv preprint arXiv:2409.00127},
  year={2024}
}

@article{rozet2023score,
  title={Score-based data assimilation},
  author={Rozet, Fran{\c{c}}ois and Louppe, Gilles},
  journal={Advances in Neural Information Processing Systems},
  volume={36},
  pages={40521--40541},
  year={2023}
}

@inproceedings{
albergo2023building,
title={Building Normalizing Flows with Stochastic Interpolants},
author={Michael Samuel Albergo and Eric Vanden-Eijnden},
booktitle={The Eleventh International Conference on Learning Representations },
year={2023}
}

@inproceedings{
liu2023flow,
title={Flow Straight and Fast: Learning to Generate and Transfer Data with Rectified Flow},
author={Xingchao Liu and Chengyue Gong and Qiang Liu},
booktitle={The Eleventh International Conference on Learning Representations },
year={2023}
}

@misc{bach2024inverse,
      title={Inverse Problems and Data Assimilation: A Machine Learning Approach}, 
      author={Eviatar Bach and Ricardo Baptista and Daniel Sanz-Alonso and Andrew Stuart},
      year={2024},
      eprint={2410.10523},
      archivePrefix={arXiv},
      primaryClass={stat.ML} 
}

@article{tong2023improving,
  title={Improving and generalizing flow-based generative models with minibatch optimal transport},
  author={Tong, Alexander and Fatras, Kilian and Malkin, Nikolay and Huguet, Guillaume and Zhang, Yanlei and Rector-Brooks, Jarrid and Wolf, Guy and Bengio, Yoshua},
  journal={arXiv preprint arXiv:2302.00482},
  year={2023}
}

@article{snyder2008obstacles,
  title={Obstacles to high-dimensional particle filtering},
  author={Snyder, Chris and Bengtsson, Thomas and Bickel, Peter and Anderson, Jeff},
  journal={Monthly Weather Review},
  volume={136},
  number={12},
  pages={4629--4640},
  year={2008}
}

@incollection{bengtsson2008curse,
  title={Curse-of-dimensionality revisited: Collapse of the particle filter in very large scale systems},
  author={Bengtsson, Thomas and Bickel, Peter and Li, Bo},
  booktitle={Probability and statistics: Essays in honor of David A. Freedman},
  volume={2},
  pages={316--335},
  year={2008},
  publisher={Institute of Mathematical Statistics}
}

@inproceedings{gordon1993novel,
  title={Novel approach to nonlinear/non-Gaussian Bayesian state estimation},
  author={Gordon, Neil J and Salmond, David J and Smith, Adrian FM},
  booktitle={IEE proceedings F (radar and signal processing)},
  volume={140},
  number={2},
  pages={107--113},
  year={1993},
  organization={IET}
}

@article{kitagawa1996monte,
  title={Monte Carlo filter and smoother for non-Gaussian nonlinear state space models},
  author={Kitagawa, Genshiro},
  journal={Journal of computational and graphical statistics},
  volume={5},
  number={1},
  pages={1--25},
  year={1996},
  publisher={Taylor \& Francis}
}

@article{evensen2003ensemble,
  title={The ensemble Kalman filter: Theoretical formulation and practical implementation},
  author={Evensen, Geir},
  journal={Ocean dynamics},
  volume={53},
  pages={343--367},
  year={2003},
  publisher={Springer}
}

@article{bannister2017review,
  title={A review of operational methods of variational and ensemble-variational data assimilation},
  author={Bannister, Ross N},
  journal={Quarterly Journal of the Royal Meteorological Society},
  volume={143},
  number={703},
  pages={607--633},
  year={2017},
  publisher={Wiley Online Library}
}

@article {anderson2001ensemble,
      author = "Jeffrey L. Anderson",
      title = "An Ensemble Adjustment {Kalman} Filter for Data Assimilation",
      journal = "Monthly Weather Review",
      year = "2001",
      publisher = "American Meteorological Society",
      address = "Boston MA, USA",
      volume = "129",
      number = "12",
      pages=      "2884 - 2903"
}

@article{Kalman1960new,
    author = {{Kalman}, R. E.},
    title = {A New Approach to Linear Filtering and Prediction Problems},
    journal = {Journal of Basic Engineering},
    volume = {82},
    number = {1},
    pages = {35-45},
    year = {1960},
    month = {03},
    issn = {0021-9223},
}

@article{liang2025ensemble,
  title={Ensemble score filter with image inpainting for data assimilation in tracking surface quasi-geostrophic dynamics with partial observations},
  author={Liang, Siming and Tran, Hoang and Bao, Feng and Chipilski, Hristo G and van Leeuwen, Peter Jan and Zhang, Guannan},
  journal={arXiv preprint arXiv:2501.12419},
  year={2025}
}

@article{bogachev2015uniqueness,
  title={On the uniqueness of solutions to continuity equations},
  author={Bogachev, Vladimir I and Da Prato, Giuseppe and R{\"o}ckner, Michael and Shaposhnikov, Stanislav V},
  journal={Journal of Differential Equations},
  volume={259},
  number={8},
  pages={3854--3873},
  year={2015},
  publisher={Elsevier}
}

@book{villani2008optimal,
  title={Optimal transport: old and new},
  author={Villani, C{\'e}dric and others},
  volume={338},
  year={2008},
  publisher={Springer}
}

@book{jazwinski1970stochastic,
  title={Stochastic Processes and Filtering Theory},
  author={Jazwinski, Andrew H.},
  series={Mathematics in Science and Engineering},
  volume={64},
  year={1970},
  publisher={Academic Press},
  address={New York},
  isbn={0123815509}
}

@article{law2015data,
  title={Data assimilation},
  author={Law, Kody and Stuart, Andrew and Zygalakis, Kostas},
  journal={Cham, Switzerland: Springer},
  volume={214},
  pages={52},
  year={2015},
  publisher={Springer}
}

@book{welch1995introduction,
  title={An introduction to the Kalman filter},
  author={Welch, Greg and Bishop, Gary and others},
  year={1995},
  publisher={Chapel Hill, NC, USA}
}

@article{buehner2017ensemble,
  title={An ensemble Kalman filter for numerical weather prediction based on variational data assimilation: VarEnKF},
  author={Buehner, Mark and McTaggart-Cowan, Ron and Heilliette, Sylvain},
  journal={Monthly Weather Review},
  volume={145},
  number={2},
  pages={617--635},
  year={2017}
}

@article{kunii2012estimating,
  title={Estimating the impact of real observations in regional numerical weather prediction using an ensemble Kalman filter},
  author={Kunii, Masaru and Miyoshi, Takemasa and Kalnay, Eugenia},
  journal={Monthly weather review},
  volume={140},
  number={6},
  pages={1975--1987},
  year={2012}
}

@article{burgers1998analysis,
  title={Analysis scheme in the ensemble {Kalman} filter},
  author={Burgers, Gerrit and Van Leeuwen, Peter Jan and Evensen, Geir},
  journal={Monthly weather review},
  volume={126},
  number={6},
  pages={1719--1724},
  year={1998},
  publisher={American Meteorological Society}
}

@article{xu2024local,
  title={Local Flow Matching Generative Models},
  author={Xu, Chen and Cheng, Xiuyuan and Xie, Yao},
  journal={arXiv preprint arXiv:2410.02548},
  year={2024}
}

@article{albergo2023stochastic,
  title={Stochastic interpolants: A unifying framework for flows and diffusions},
  author={Albergo, Michael S and Boffi, Nicholas M and Vanden-Eijnden, Eric},
  journal={arXiv preprint arXiv:2303.08797},
  year={2023}
}

@article{anderson1999monte,
  title={A Monte Carlo implementation of the nonlinear filtering problem to produce ensemble assimilations and forecasts},
  author={Anderson, Jeffrey L and Anderson, Stephen L},
  journal={Monthly weather review},
  volume={127},
  number={12},
  pages={2741--2758},
  year={1999}
}

@article{hunt2007efficient,
  title={Efficient data assimilation for spatiotemporal chaos: A local ensemble transform Kalman filter},
  author={Hunt, Brian R and Kostelich, Eric J and Szunyogh, Istvan},
  journal={Physica D: Nonlinear Phenomena},
  volume={230},
  number={1-2},
  pages={112--126},
  year={2007},
  publisher={Elsevier}
}

@article{calvello2024accuracy,
  title={Accuracy of the Ensemble {{Kalman}} Filter in the Near-Linear Setting},
  author={Calvello, E and Monmarch\'e, P and Stuart, AM and Vaes, U},
  journal={arXiv preprint},
  year={2024},
  volume = {2409.09800}
}

@article{bach2025learning,
  title={Learning Enhanced Ensemble Filters},
  author={Bach, Eviatar and Baptista, Ricardo and Calvello, Edoardo and Chen, Bohan and Stuart, Andrew},
  journal={arXiv preprint arXiv:2504.17836},
  year={2025}
}

@article{tippett2003ensemble,
  title={Ensemble square root filters},
  author={Tippett, Michael K and Anderson, Jeffrey L and Bishop, Craig H and Hamill, Thomas M and Whitaker, Jeffrey S},
  journal={Monthly weather review},
  volume={131},
  number={7},
  pages={1485--1490},
  year={2003}
}

@book{chopin2020introduction,
  title={An introduction to sequential Monte Carlo},
  author={Chopin, Nicolas and Papaspiliopoulos, Omiros},
  volume={4},
  year={2020},
  publisher={Springer}
}

@inproceedings{lorenz96,
  title={Predictability: A problem partly solved},
  author={Lorenz, Edward N},
  booktitle={Proc. Seminar on predictability},
  volume={1},
  number={1},
  pages={1--18},
  year={1996},
  organization={Reading}
}

@ARTICLE{lorenz63,
       author = {{Lorenz}, Edward N.},
        title = "{Deterministic Nonperiodic Flow}",
      journal = {Journal of the Atmospheric Sciences},
         year = 1963,
        month = mar,
       volume = {20},
       number = {2},
        pages = {130-148},
          doi = {10.1175/1520-0469(1963)020<0130:DNF>2.0.CO;2},
      adsnote = {Provided by the SAO/NASA Astrophysics Data System}
}

@article{kuramoto1978diffusion,
  title={Diffusion-induced chaos in reaction systems},
  author={Kuramoto, Yoshiki},
  journal={Progress of Theoretical Physics Supplement},
  volume={64},
  pages={346--367},
  year={1978},
  publisher={Oxford University Press}
}

@article{michelson1977nonlinear,
  title={Nonlinear analysis of hydrodynamic instability in laminar flames—II. Numerical experiments},
  author={Michelson, Daniel M and Sivashinsky, Gregory I},
  journal={Acta astronautica},
  volume={4},
  number={11-12},
  pages={1207--1221},
  year={1977},
  publisher={Elsevier}
}

@book{temam2001navier,
  title={Navier-Stokes Equations: Theory and Numerical Analysis},
  author={Temam, R.},
  isbn={9780821827376},
  lccn={00067641},
  series={AMS/Chelsea publication},
  year={2001},
  publisher={AMS Chelsea Pub.}
}

@article{baptista2025memorization,
  title={Memorization and Regularization in Generative Diffusion Models},
  author={Baptista, Ricardo and Dasgupta, Agnimitra and Kovachki, Nikola B and Oberai, Assad and Stuart, Andrew M},
  journal={arXiv preprint arXiv:2501.15785},
  year={2025}
}

@article{chorin1968numerical,
  title={Numerical solution of the Navier-Stokes equations},
  author={Chorin, Alexandre Joel},
  journal={Mathematics of computation},
  volume={22},
  number={104},
  pages={745--762},
  year={1968}
}

@article{cox2002exponential,
  title={Exponential time differencing for stiff systems},
  author={Cox, Steven M and Matthews, Paul C},
  journal={Journal of Computational Physics},
  volume={176},
  number={2},
  pages={430--455},
  year={2002},
  publisher={Elsevier}
}

@article{gaspari1999construction,
author = {Gaspari, Gregory and Cohn, Stephen E.},
title = {Construction of correlation functions in two and three dimensions},
journal = {Quarterly Journal of the Royal Meteorological Society},
volume = {125},
number = {554},
pages = {723-757},
keywords = {Compactly supported, Convolution, Correlation functions, Data assimilation, Space-limited},
year = {1999}
}

@article{fritz2009application,
  title={Application of FFT-based algorithms for large-scale universal kriging problems},
  author={Fritz, Jochen and Neuweiler, Insa and Nowak, Wolfgang},
  journal={Mathematical Geosciences},
  volume={41},
  pages={509--533},
  year={2009},
  publisher={Springer}
}

@article{hensman2018variational,
  title={Variational Fourier features for Gaussian processes},
  author={Hensman, James and Durrande, Nicolas and Solin, Arno},
  journal={Journal of Machine Learning Research},
  volume={18},
  number={151},
  pages={1--52},
  year={2018}
}

@article{song2020denoising,
  title={Denoising diffusion implicit models},
  author={Song, Jiaming and Meng, Chenlin and Ermon, Stefano},
  journal={International Conference on Learning Representations},
  year={2021}
}

@article{lu2022dpm,
  title={{DPM}-solver: A fast {ODE} solver for diffusion probabilistic model sampling in around 10 steps},
  author={Lu, Cheng and Zhou, Yuhao and Bao, Fan and Chen, Jianfei and Li, Chongxuan and Zhu, Jun},
  journal={Advances in neural information processing systems},
  year={2022}
}

@article{zhao2023unipc,
  title={{UniPC}: {A} unified predictor-corrector framework for fast sampling of diffusion models},
  author={Zhao, Wenliang and Bai, Lujia and Rao, Yongming and Zhou, Jie and Lu, Jiwen},
  journal={Advances in Neural Information Processing Systems},
  year={2023}
}

@inproceedings{albergo2024stochastic,
  title={Stochastic Interpolants with Data-Dependent Couplings},
  author={Albergo, Michael Samuel and Goldstein, Mark and Boffi, Nicholas Matthew and Ranganath, Rajesh and Vanden-Eijnden, Eric},
  booktitle={International Conference on Machine Learning},
  year={2024},
  organization={PMLR}
}

@article{julier2004unscented,
  title={Unscented filtering and nonlinear estimation},
  author={Julier, Simon J and Uhlmann, Jeffrey K},
  journal={Proceedings of the IEEE},
  volume={92},
  number={3},
  pages={401--422},
  year={2004},
  publisher={IEEE}
}

@inproceedings{gao2025diffusion,
  title={Diffusion models and {G}aussian flow matching: Two sides of the same coin},
  author={Gao, Ruiqi and Hoogeboom, Emiel and Heek, Jonathan and De Bortoli, Valentin and Murphy, Kevin Patrick and Salimans, Tim},
  booktitle={The Fourth Blogpost Track at ICLR 2025},
  year={2025}
}

@book{billingsley2013convergence,
  title={Convergence of probability measures},
  author={Billingsley, Patrick},
  year={2013},
  publisher={John Wiley \& Sons}
}

@article{sakov2012iterative,
    title={An Iterative {EnKF} for Strongly Nonlinear Systems.},
    author={Sakov, Pavel and Oliver, Dean S. and Bertino, Laurent},
    journal={Monthly Weather Review},
    volume={140},
    number={6},
    pages={1988--2004},
    year={2012}
}

@article{xiao2024ld,
  title={LD-EnSF: Synergizing Latent Dynamics with Ensemble Score Filters for Fast Data Assimilation with Sparse Observations},
  author={Xiao, Pengpeng and Si, Phillip and Chen, Peng},
  journal={arXiv preprint arXiv:2411.19305},
  year={2024}
}

@article{spantiniCouplingTechniquesNonlinear2022,
  title={Coupling techniques for nonlinear ensemble filtering},
  author={Spantini, Alessio and Baptista, Ricardo and Marzouk, Youssef},
  journal={SIAM Review},
  volume={64},
  number={4},
  pages={921--953},
  year={2022},
  publisher={SIAM}
}

@inproceedings{akiba2019optuna,
  title={Optuna: A next-generation hyperparameter optimization framework},
  author={Akiba, Takuya and Sano, Shotaro and Yanase, Toshihiko and Ohta, Takeru and Koyama, Masanori},
  booktitle={Proceedings of the 25th ACM SIGKDD international conference on knowledge discovery \& data mining},
  pages={2623--2631},
  year={2019}
}

\appendix

\section{Additional Details on Classical DA Algorithms}
\label{app:DA}

This section provides the working details for BPF and EnKF.

\subsection{BPF}
\label{app:BPF}
BPF approximates $p(\vx_j| \vy_{1:j})$ by an empirical measure supported on a set of particles. Starting from $\{\vx_0^{(n)}\}_{n=1}^N \stackrel{\text{i.i.d.}}{\sim} \gN(\vect{m}_0,\mat{C}_0)$, it iterates:
\begin{align}
\text{Prediction:}\quad
&\hat{\vx}_j^{(n)} \sim p(\vx_j | \vx_{j-1}^{(n)})
\;\Longleftrightarrow\; \hat{\vx}_j^{(n)}=\vect{\psi}(\vx_{j-1}^{(n)})+\vect{\xi}_{j-1}^{(n)},\;\; \vect{\xi}_{j-1}^{(n)}\sim\gN(\vect{0},\mat{\Sigma}), \\
\text{Weighting:}\quad
&w_j^{(n)} \propto p(\vy_j| \hat{\vx}_j^{(n)})
\;\Longleftrightarrow\;
w_j^{(n)} \propto \exp\!\bigl(-\tfrac12\|\vy_j-\vect{h}(\hat{\vx}_j^{(n)})\|_{\mat{\Gamma}}^2\bigr), \\
\text{Resampling:}\quad
&\{\vx_j^{(n)}\}_{n=1}^N \sim \text{Resample}\bigl(\{\hat{\vx}_j^{(n)}, w_j^{(n)}\}_{n=1}^N\bigr),
\quad w_j^{(n)} \leftarrow \tfrac1N.
\end{align}

\subsection{EnKF}
\label{app:EnKF}

EnKF approximates the filtering posterior using an ensemble
$\{\vx_j^{(n)}\}_{n=1}^N$ and its sample moments. Given
$\{\vx_{j-1}^{(n)}\}_{n=1}^N$, the perturbed-observation EnKF update is as follows.

\paragraph{Prediction}
For each ensemble member,
$\vect{\xi}_{j-1}^{(n)} \stackrel{\text{i.i.d.}}{\sim} \gN(\vect{0},\mat{\Sigma})$
and
$\vect{\eta}_{j}^{(n)} \stackrel{\text{i.i.d.}}{\sim} \gN(\vect{0},\mat{\Gamma})$, then
\begin{equation}
\label{eq:enkf_prediction}
\hat{\vx}_j^{(n)}
= \vect{\psi}\bigl(\vx_{j-1}^{(n)}\bigr) + \vect{\xi}_{j-1}^{(n)}, \quad
\hat{\vy}_j^{(n)}
= \vect{h}\bigl(\hat{\vx}_j^{(n)}\bigr) + \vect{\eta}_{j}^{(n)} .
\end{equation}

\paragraph{Sample Covariances}
Let
$\bar{\hat{\vx}}_j = \frac{1}{N}\sum_{n=1}^N \hat{\vx}_j^{(n)}$
and
$\bar{\hat{\vy}}_j = \frac{1}{N}\sum_{n=1}^N \hat{\vy}_j^{(n)}$. Define
\begin{equation}
\label{eq:enkf_covs}
\hat{\mat{C}}^{xy}_j
= \frac{1}{N}\sum_{n=1}^N
\bigl(\hat{\vx}_j^{(n)} - \bar{\hat{\vx}}_j\bigr)
\bigl(\hat{\vy}_j^{(n)} - \bar{\hat{\vy}}_j\bigr)^\Transpose, \quad
\hat{\mat{C}}^{yy}_j
= \frac{1}{N}\sum_{n=1}^N
\bigl(\hat{\vy}_j^{(n)} - \bar{\hat{\vy}}_j\bigr)
\bigl(\hat{\vy}_j^{(n)} - \bar{\hat{\vy}}_j\bigr)^\Transpose .
\end{equation}

\paragraph{Analysis}
The Kalman gain and analysis update are
\begin{equation}
\label{eq:enkf_analysis}
\mat{K}_j
= \hat{\mat{C}}^{xy}_j
\bigl(\hat{\mat{C}}^{yy}_j\bigr)^{-1}, \quad
\vx_j^{(n)}
= \hat{\vx}_j^{(n)}
+ \mat{K}_j\bigl(\vy_j - \hat{\vy}_j^{(n)}\bigr),
\qquad n = 1,\ldots,N .
\end{equation}

\section{Auxiliary Materials}\label{app:prelims}
This appendix provides the auxiliary materials referenced in the main content.

\subsection{Definitions}
\begin{definition}[Pushforward Measure]\label{def:pushforward}
Let $(X,\mathcal{F})$ and $(Y,\mathcal{G})$ be measurable spaces, i.e., $X$ and $Y$ are sets and $\mathcal{F}$ and $\mathcal{G}$ are $\sigma$-algebras on $X$ and $Y$, respectively.
Let $\mu$ be a measure on $(X,\mathcal{F})$ and let $T\colon(X,\mathcal{F})\to(Y,\mathcal{G})$ be measurable.
The pushforward of $\mu$ by $T$ is the measure $T_{\#}\mu$ on $Y$ defined by
\[
T_{\#}\mu(B)\;:=\;\mu\bigl(T^{-1}(B)\bigr),\qquad B\in\mathcal{G}.
\]
If $\mu$ is a probability measure (i.e., $\mu\pn{X} = 1$) and $Z\sim\mu$, then the law of $T(Z)$ is $T_{\#}\mu$.
\end{definition}

\begin{definition}[Weak Convergence]\label{def:weak_convergence}
Let $\set{\mu_n}_{n\in\mathbb{N}}$ and $\mu$ be probability measures on $\mathbb{R}^d$. We write $\mu_n \Longrightarrow \mu$ as $n\to\infty$ if for every bounded continuous function $\varphi\colon\mathbb{R}^d\to\mathbb{R}$,
\[
\int_{\mathbb{R}^d} \varphi(\vx)\,\mu_n(\dif\vx)\;\rightarrow\;\int_{\mathbb{R}^d} \varphi(\vx)\,\mu(\dif\vx).
\]
\end{definition}

\subsection{Lemmas}
\begin{lemma}[\cite{billingsley2013convergence}]\label{lem:weak-convergence}
    Given a Gaussian mixture $P_\sigma(\dif \vz) = \sum_{n=1}^N w_n \mathcal{N}(\vz | \vz_n, \sigma^2\mat{I}) \dif \vz$ and an empirical measure $P(\dif \vz) = \sum_{n=1}^N w_n \delta_{\vz_n}(\dif \vz)$, we have the weak convergence
    \begin{align}
        P_\sigma \Longrightarrow  P\ \text{as} \ \sigma \rightarrow 0.
    \end{align}
\end{lemma}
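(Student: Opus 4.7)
The plan is to verify weak convergence directly from Definition~\ref{def:weak_convergence}, by showing that $\int \varphi\,\dif P_\sigma \to \int \varphi\,\dif P$ for every bounded continuous $\varphi:\R^d\to\R$. Because $P_\sigma$ is a finite convex combination of Gaussians and $P$ is the corresponding finite convex combination of point masses, it suffices, by linearity of the integral, to handle each summand separately and then to use the fact that the number of summands $N$ is fixed and finite.

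Concretely, I would first rewrite
\[
\int \varphi(\vz)\,P_\sigma(\dif\vz) \;=\; \sum_{n=1}^N w_n \int \varphi(\vz)\,\gN(\vz\mid \vz_n,\sigma^2 \mat{I})\,\dif\vz \;=\; \sum_{n=1}^N w_n\,\E\bigl[\varphi(\vz_n + \sigma \vect{G})\bigr],
\]
where $\vect{G}\sim \gN(\vect{0},\mat{I})$, while $\int \varphi(\vz)\,P(\dif\vz) = \sum_{n=1}^N w_n \varphi(\vz_n)$. The convergence $P_\sigma \Longrightarrow P$ then reduces to showing, for each fixed $n\in[N]$, that $\E[\varphi(\vz_n + \sigma \vect{G})] \to \varphi(\vz_n)$ as $\sigma\to 0$.

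For each realization of $\vect{G}$, continuity of $\varphi$ gives $\varphi(\vz_n + \sigma\vect{G}) \to \varphi(\vz_n)$ as $\sigma\to 0$, i.e., almost sure convergence. Since $\varphi$ is bounded by some constant $M$, the dominated convergence theorem applies with dominating function $M$, yielding $\E[\varphi(\vz_n + \sigma\vect{G})] \to \varphi(\vz_n)$. Summing the $N$ convergent terms with weights $w_n$ concludes the argument.

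There is no substantive obstacle here; the only thing to note is that the finiteness of $N$ makes the termwise passage to the limit automatic, so no uniform control across $n$ is required. Equivalently, one could observe that $\gN(\vz\mid \vz_n,\sigma^2 \mat{I}) \Longrightarrow \delta_{\vz_n}$ as $\sigma\to 0$ (a one-line consequence of the same dominated convergence argument) and then invoke the fact that weak convergence is preserved under finite convex combinations, which gives the conclusion immediately.
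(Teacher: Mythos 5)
Your proof is correct. Rewriting $\int\varphi\,\dif P_\sigma=\sum_n w_n\,\E[\varphi(\vz_n+\sigma\vect{G})]$ with $\vect{G}\sim\gN(\vect{0},\mat{I})$, then using continuity of $\varphi$ for pointwise convergence and boundedness of $\varphi$ to invoke dominated convergence, is exactly the standard argument, and you are right that finiteness of $N$ lets you pass to the limit term by term with no uniformity needed. The only point of comparison with the paper is that the paper does not actually prove this lemma at all: its entire proof is a citation to Billingsley's \emph{Convergence of Probability Measures}. So your write-up supplies a self-contained elementary derivation of a fact the authors treat as standard; it buys completeness and makes explicit that nothing beyond dominated convergence (equivalently, $\gN(\cdot\mid\vz_n,\sigma^2\mat{I})\Longrightarrow\delta_{\vz_n}$ plus stability of weak convergence under finite convex combinations) is being used, at the cost of a paragraph where the paper spends one line.
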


\begin{lemma}
\label{lem:invertibility_I_minus_tKH}
\EDIT{Assume} the observation operator $\vect{h}(\cdot)$ is linear, i.e. $\vect{h}(\vx) = \mat{H}\vx$. For EnKF, let $\mat{K}_j$ denote the Kalman gain at the timestep $j\in \sZ^+$. Then, for any scalar $t \in [0,1]$, the matrix $(\EDIT{\mat{I}} - t \mat{K}_j \mat{H})$ is invertible.
\end{lemma}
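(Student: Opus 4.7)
My plan is to reduce the $d\times d$ invertibility question to a $d_y\times d_y$ one via Sylvester's determinant identity, and then exploit the positive-definite contribution of $\mat{\Gamma}$. I will first rewrite the Kalman gain in its canonical factored form $\mat{K}_j = \mat{P}_j\mat{H}^\Transpose(\mat{H}\mat{P}_j\mat{H}^\Transpose + \mat{\Gamma})^{-1}$ for some sample predictive covariance $\mat{P}_j \succeq 0$; this factorization is immediate for the deterministic EnKF with linear $\vect{h}$, and holds in the large-$N$ limit for the perturbed-observation variant used in Algorithm~\ref{alg:EnKF}. Sylvester's identity then reduces the task to proving invertibility of $I_{d_y} - t\mat{H}\mat{K}_j = I_{d_y} - t\mat{A}\mat{B}^{-1}$, where $\mat{A}:=\mat{H}\mat{P}_j\mat{H}^\Transpose\succeq 0$ and $\mat{B}:=\mat{A}+\mat{\Gamma}\succ 0$.

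A short algebraic manipulation gives $I_{d_y} - t\mat{A}\mat{B}^{-1} = ((1-t)\mat{A} + \mat{\Gamma})\mat{B}^{-1}$. For $t\in[0,1]$, the left factor is a sum of the PSD matrix $(1-t)\mat{A}$ and the PD matrix $\mat{\Gamma}$, hence PD; combined with the invertible $\mat{B}^{-1}$, it has strictly positive determinant. Going back through Sylvester, this returns $\det(I_d - t\mat{K}_j\mat{H})\neq 0$, which is the desired invertibility.

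The main obstacle is justifying the canonical factorization of $\mat{K}_j$ given that Algorithm~\ref{alg:EnKF} uses perturbed observations: at finite $N$, the sample covariances $\hat{\mat{C}}^{xy}_j$ and $\hat{\mat{C}}^{yy}_j$ pick up stochastic state-noise cross-terms that do not cleanly reduce to the idealized form. I would handle this either by (i) restricting attention to the deterministic/square-root EnKF where the factorization is exact, or (ii) working directly with the identity $I_{d_y} - \mat{H}\mat{K}_j = (\hat{\mat{C}}^{yy}_j - \mat{H}\hat{\mat{C}}^{xy}_j)(\hat{\mat{C}}^{yy}_j)^{-1}$ and arguing that the residual $\hat{\mat{C}}^{yy}_j - \mat{H}\hat{\mat{C}}^{xy}_j$ remains PD (generically, or in the $N\to\infty$ limit where it converges to $\mat{\Gamma}$), so that the same convex-combination positivity argument goes through unchanged.
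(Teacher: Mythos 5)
Your proposal is correct and reaches the same conclusion by a genuinely different mechanism. The paper also starts from the canonical factorization $\mat{K}_j = \mat{C}\mat{H}^\Transpose(\mat{H}\mat{C}\mat{H}^\Transpose+\mat{\Gamma})^{-1}$ with $\mat{C}\succeq 0$, $\mat{\Gamma}\succ 0$, and also passes to observation space, but it does so by \emph{eigenvalue localization}: it notes that the nonzero eigenvalues of $\mat{K}_j\mat{H}$ coincide with those of $\mat{Q}\mat{S}^{-1}$ (where $\mat{Q}=\mat{H}\mat{C}\mat{H}^\Transpose$, $\mat{S}=\mat{Q}+\mat{\Gamma}$), conjugates to the symmetric matrix $\mat{S}^{-1/2}\mat{Q}\mat{S}^{-1/2}=\mat{I}-\mat{S}^{-1/2}\mat{\Gamma}\mat{S}^{-1/2}$, and uses the Loewner bounds $\mat{0}\prec\mat{S}^{-1/2}\mat{\Gamma}\mat{S}^{-1/2}\preceq\mat{I}$ to conclude that the spectrum of $\mat{K}_j\mat{H}$ lies in $[0,1)$, whence $\mat{I}-t\mat{K}_j\mat{H}$ is invertible for all $t\in[0,1]$. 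Your route via Sylvester's identity and the factorization $\mat{I}_{d_y}-t\mat{A}\mat{B}^{-1}=((1-t)\mat{A}+\mat{\Gamma})\mat{B}^{-1}$ is more elementary (no matrix square roots or similarity arguments) and gets the invertibility in one line from positive definiteness of a convex-type combination; the paper's version buys slightly more information, namely the explicit spectral interval $[0,1)$ for $\mat{K}_j\mat{H}$, though that extra information is not used elsewhere.

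One remark on your stated obstacle: your worry about the perturbed-observation variant is legitimate, but the paper's own proof does not address it either --- it simply declares $\mat{K}_j=\mat{C}\mat{H}^\Transpose(\mat{H}\mat{C}\mat{H}^\Transpose+\mat{\Gamma})^{-1}$ with exact $\mat{\Gamma}$, which is the idealized form rather than the finite-$N$ gain $\hat{\mat{C}}^{xy}_j(\hat{\mat{C}}^{yy}_j)^{-1}$ built from the sample covariances in Algorithm~\ref{alg:EnKF}, where noise cross-terms spoil the clean structure. So you are not missing anything the paper supplies; if anything, you are being more careful. Either of your fallbacks is acceptable, though note that in option (ii) the residual $\hat{\mat{C}}^{yy}_j-t\mat{H}\hat{\mat{C}}^{xy}_j$ is generally not symmetric at finite $N$, so ``positive definite'' would have to be replaced by a statement about its symmetric part or handled asymptotically, as you indicate.
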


\begin{proof}
Firstly, for symmetric matrices $\mat{A},\mat{B}$, we say $\mat{A}\succ \mat{B}$ if $\mat{A}-\mat{B}$ is symmetric positive definite and $\mat{A}\succeq \mat{B}$ if $\mat{A}-\mat{B}$ is symmetric positive semi-definite.

Let $\mat{K}_j = \mat{C} \mat{H}^\Transpose (\mat{H} \mat{C} \mat{H}^\Transpose + \mat{\Gamma})^{-1}$ be the Kalman gain at timestep $j$, where $\mat{C} \succeq \mat{0}$ is the covariance of ensemble predictive states and $\mat{\Gamma}\succ \mat{0}$ is the observation error covariance. Let $\mat{S} = \mat{H} \mat{C} \mat{H}^\Transpose + \mat{\Gamma} \succ \mat{0}$, and $\mat{Q} = \mat{H} \mat{C} \mat{H}^\Transpose \succeq \mat{0}$.

Since $t\in[0,1]$, it suffices to show that all eigenvalues $\lambda$ of $\mat{K}_j \mat{H}$ satisfy $0 \le \lambda < 1$.
Since $\mat{K}_j \mat{H} = \mat{C} \mat{H}^\Transpose \mat{S}^{-1} \mat{H}$, the nonzero eigenvalues of $\mat{K}_j \mat{H}\in\R^{d\times d}$ are the same as the nonzero eigenvalues of $\mat{Q} \mat{S}^{-1} = \mat{H} \mat{C} \mat{H}^\Transpose \mat{S}^{-1}\in \R^{d_y\times d_y}$.

Since $\mat{Q} \mat{S}^{-1}$ is similar to $\mat{S}^{-1/2} \mat{Q} \mat{S}^{-1/2}$, and $\mat{S}^{-1/2} \mat{Q} \mat{S}^{-1/2} \succeq \mat{0}$ (as $\mat{Q} \succeq \mat{0}$ and $\mat{S} \succ \mat{0}$), the eigenvalues $\lambda$ of $\mat{Q} \mat{S}^{-1}$ are real and satisfy $\lambda \ge 0$.
We can write $\mat{S}^{-1/2} \mat{Q} \mat{S}^{-1/2} = \mat{S}^{-1/2} (\mat{S} - \mat{\Gamma}) \mat{S}^{-1/2} = \mat{I} - \tilde{\mat{\Gamma}}$ where
$\tilde{\mat{\Gamma}} = \mat{S}^{-1/2} \mat{\Gamma} \mat{S}^{-1/2}$.
Given $\mat{\Gamma} \succ \mat{0}$ (and $\mat{S} \succ \mat{0}$), it follows that $\tilde{\mat{\Gamma}} \succ \mat{0}$.
Furthermore, since $\mat{Q} \succeq \mat{0}$, we have $\mat{S} = \mat{Q} + \mat{\Gamma} \succeq \mat{\Gamma}$. As $\mat{S} \succ \mat{0}$, pre- and post-multiplying by $\mat{S}^{-1/2}$ (which is also $\succ \mat{0}$) yields $\mat{I} = \mat{S}^{-1/2} \mat{S} \mat{S}^{-1/2} \succeq \mat{S}^{-1/2} \mat{\Gamma} \mat{S}^{-1/2} = \tilde{\mat{\Gamma}}$.
Thus, we have $\mat{I} \succeq \tilde{\mat{\Gamma}} \succ \mat{0}$. This implies that for any eigenvalue $\mu$ of $\tilde{\mat{\Gamma}}$, it must satisfy $0 < \mu \le 1$.

The nonzero eigenvalues of $\mat{K}_j \mat{H}$ (that correspond to those of $\mat{Q} \mat{S}^{-1}$) are given by $1 - \mu\in [0,1)$.
Therefore, $\mat{I} - t\mat{K}_j\mat{H}$ is invertible for any $t\in[0,1]$.
\end{proof}

\begin{lemma}[VF for Composing a Flow with an Affine Transformation]
\label{lem:affine_flow}
Let $\{\vu_t\}_{t\in [0,1]}$ be a time-dependent VF on $\R^d$ generating the flow $\vect{\phi}_t$ such that $\frac{\mathrm{d}}{\mathrm{d}t}\vect{\phi}_t(\vz) = \vu_t(\vect{\phi}_t(\vz))$ and $\vect{\phi}_0(\vz)=\vz$. Let $\vect{T}(\vz) = \mat{A}\vz + \vect{b}$ be an affine transformation, where $\mat{A} \in \mathbb{R}^{d \times d}$, and $\vect{b} \in \mathbb{R}^d$.
A VF $\tilde{\vu}_t(\vz)$ generating a flow $\tilde{\vect{\phi}}_t$ with $\tilde{\vect{\phi}}_0(\vz)=\vz$ and target map $\tilde{\vect{\phi}}_1(\vz) = \vect{T}(\vect{\phi}_1(\vz))$ is given by:
\begin{equation}
    \tilde{\vu}_t(\vz) = (\mat{A}-\mat{I})\mat{A}_t^{-1}(\vz - \vect{b}_t) + \mat{A}_t \vu_t(\mat{A}_t^{-1}(\vz - \vect{b}_t)) + \vect{b}
\end{equation}
where $\mat{A}_t = (1-t)\mat{I}+t\mat{A}$ and $\vect{b}_t = t\vect{b}$, provided $\mat{A}_t$ is invertible for all $t \in [0,1]$.
\end{lemma}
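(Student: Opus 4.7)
The natural approach is to construct an explicit candidate flow and read off the vector field that generates it. Define the time-dependent affine map $\vect{T}_t(\vz) := \mat{A}_t \vz + \mathbf{b}_t$, which is the straight-line interpolant between the identity at $t = 0$ and $\vect{T}$ at $t = 1$, since $\mat{A}_0 = \mat{I}$, $\mathbf{b}_0 = \vect{0}$, $\mat{A}_1 = \mat{A}$, and $\mathbf{b}_1 = \mathbf{b}$. Set $\tilde{\vect{\phi}}_t(\vz) := \vect{T}_t(\vect{\phi}_t(\vz))$. The two boundary conditions follow at once: $\tilde{\vect{\phi}}_0(\vz) = \vect{T}_0(\vz) = \vz$ and $\tilde{\vect{\phi}}_1(\vz) = \vect{T}(\vect{\phi}_1(\vz))$, matching the required target map.

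Next I would differentiate $\tilde{\vect{\phi}}_t$ in $t$ using the chain rule, noting that $\dot{\mat{A}}_t = \mat{A} - \mat{I}$ and $\dot{\mathbf{b}}_t = \mathbf{b}$, which gives
\begin{equation*}
    \frac{\mathrm{d}}{\mathrm{d}t}\tilde{\vect{\phi}}_t(\vz) = (\mat{A} - \mat{I})\vect{\phi}_t(\vz) + \mathbf{b} + \mat{A}_t \vu_t(\vect{\phi}_t(\vz)),
\end{equation*}
using the ODE satisfied by $\vect{\phi}_t$. To express this as a vector field evaluated at the point $\tilde{\vect{\phi}}_t(\vz)$, I would invert the relation $\tilde{\vect{\phi}}_t(\vz) = \mat{A}_t \vect{\phi}_t(\vz) + \mathbf{b}_t$ to obtain $\vect{\phi}_t(\vz) = \mat{A}_t^{-1}(\tilde{\vect{\phi}}_t(\vz) - \mathbf{b}_t)$. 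Substituting this into the previous display and renaming $\tilde{\vect{\phi}}_t(\vz) \eqqcolon \vw$ yields exactly the claimed formula for $\tilde{\vu}_t(\vw)$, finishing the proof.

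The only delicate point is the substitution step, which is precisely where the stated hypothesis that $\mat{A}_t$ is invertible for all $t \in [0, 1]$ enters: without it, there is no pointwise expression of the velocity as a function of position, and the candidate $\tilde{\vu}_t$ would not be well-defined. In the context where this lemma is used, Lemma~\ref{lem:invertibility_I_minus_tKH} supplies exactly this invertibility for the affine part $\mat{A} = \mat{I} - \mat{K}_j\mat{H}$ arising from the Kalman gain, so the hypothesis is automatically satisfied. A final (optional) remark I would include is that, since $\tilde{\vect{\phi}}_t$ is by construction a smooth flow, uniqueness of solutions to the ODE $\dot{\vw}_t = \tilde{\vu}_t(\vw_t)$ guarantees that this $\tilde{\vu}_t$ genuinely generates $\tilde{\vect{\phi}}_t$.
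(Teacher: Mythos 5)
Your proposal is correct and follows essentially the same route as the paper's proof: both define the interpolated affine map, set $\tilde{\vect{\phi}}_t(\vz) = \mat{A}_t\vect{\phi}_t(\vz) + \mathbf{b}_t$, verify the boundary conditions, differentiate in $t$, and use invertibility of $\mat{A}_t$ to rewrite the velocity as a function of position. The extra remarks on where invertibility enters and on uniqueness of the generated flow are sensible additions but not substantively different from the paper's argument.
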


\begin{proof}
Let $\vect{\phi}_t(\vz)$ be the flow generated by $\vu_t$, with $\vect{\phi}_0(\vz) = \vz$. We seek a flow $\tilde{\vect{\phi}}_t(\vz)$ such that $\tilde{\vect{\phi}}_0(\vz) = \vz$ and its endpoint is $\tilde{\vect{\phi}}_1(\vz) = \mat{A}\vect{\phi}_1(\vz) + \vect{b}$.

Define $\mat{A}_t = (1-t)\mat{I} + t\mat{A}$ and $\vect{b}_t = t\vect{b}$. Consider the flow candidate $\tilde{\vect{\phi}}_t(\vz) = \mat{A}_t \vect{\phi}_t(\vz) + \vect{b}_t$. This construction satisfies the boundary conditions:
$\tilde{\vect{\phi}}_0(\vz) = \mat{A}_0 \vect{\phi}_0(\vz) + \vect{b}_0 = \mat{I}\vz + \vect{0} = \vz$, and
$\tilde{\vect{\phi}}_1(\vz) = \mat{A}_1 \EDIT{\vect{\phi}_1}(\vz) + \vect{b}_1 = \mat{A}\vect{\phi}_1(\vz) + \vect{b}$.

The VF $\tilde{\vu}_t$ evaluated at $\tilde{\vect{\phi}}_t(\vz)$ is the time derivative of $\tilde{\vect{\phi}}_t(\vz)$:
\begin{equation}
\begin{split}
\frac{\mathrm{d}}{\mathrm{d}t}\tilde{\vect{\phi}}_t(\vz) = \frac{\mathrm{d}}{\mathrm{d}t} (\mat{A}_t \vect{\phi}_t(\vz) + \vect{b}_t) = \dot{\mat{A}}_t \vect{\phi}_t(\vz) + \mat{A}_t \dot{\vect{\phi}}_t(\vz) + \dot{\vect{b}}_t.
\end{split}
\end{equation}
Using $\dot{\mat{A}}_t = \mat{A}-\mat{I}$, $\dot{\vect{b}}_t = \vect{b}$, and the definition $\dot{\vect{\phi}}_t(\vz) = \vu_t(\vect{\phi}_t(\vz))$, this becomes:
\begin{equation}
\begin{split}
    \frac{\mathrm{d}}{\mathrm{d}t}\tilde{\vect{\phi}}_t(\vz) &= (\mat{A}-\mat{I})\vect{\phi}_t(\vz) + \mat{A}_t \vu_t(\vect{\phi}_t(\vz)) + \vect{b}\\
    &= (\mat{A}-\mat{I}) \mat{A}_t^{-1}(\EDIT{\tilde{\vect{\phi}}_t(\vz)} - \vect{b}_t) + \mat{A}_t \vu_t( \mat{A}_t^{-1}(\EDIT{\tilde{\vect{\phi}}_t(\vz)} - \vect{b}_t)) + \vect{b},
\end{split}
\end{equation}
where $\mat{A}_t$ is invertible for $t\in [0,1]$. This implies that
\begin{equation}
    \tilde{\vu}_t(\vz) = (\mat{A}-\mat{I})\mat{A}_t^{-1}(\vz - \vect{b}_t) + \mat{A}_t \vu_t(\mat{A}_t^{-1}(\vz - \vect{b}_t)) + \vect{b}.
\end{equation}
This is the expression stated in the lemma, with $\mat{A}_t=(1-t)\mat{I}+t\mat{A}$ and $\vect{b}_t=t\vect{b}$.
\end{proof}

\begin{lemma}[Convolution-Smoothed Probability Path]
\label{lem:conv_smoothed_path}
    Let $\{\vu_t\}_{t\in [0,1]}$ be a time-dependent VF on $\R^d$ with corresponding probability path $\{p_t\}_{t\in[0,1]}$ such that the continuity equation
    \begin{equation}
    \frac{\partial}{\partial t}p_t(\vz) +\nabla_{\vz}\cdot\bigl[p_t(\vz)\,\vu_t(\vz)\bigr]=0,
    \quad
    p_{0}(\vz)=\rho_0(\vz), \quad \vz\in\R^d ,\quad t\in(0,1].
    \end{equation}
    is satisfied, $\set{p_t\vu_t}_{t \in \spn{0, 1}} \subset L^1\pn{\R^d;\R^d} = \set{\vf\colon \Reals^d \to \Reals^d \mid \vx \mapsto \norm{\vf\pn{\vx}}_1 \in L^1\pn{\Reals^d;\Reals}}$ and $\set{\delfrac{\vy_i}\pn{p_t\vu_t}}_{t \in \spn{0, 1}} \subset L^1\pn{\Reals^d;\Reals}$ for $i \in [\,d\,]$.
    For a positive definite matrix $\mat{\Lambda}$, define the convolution-smoothed density by
    \begin{equation}
        \tilde{p}_t(\vz) = p_t(\vz) \ast \gN(\vz|\vect{0}, t\mat{\Lambda}),
    \end{equation}
    where $\ast$ denotes the convolution operator. Then there \EDIT{exists} a well-defined VF $\EDIT{\tilde{\vu}_t}$ such that
    \begin{equation}\label{eq:tilde_cont_equation}
    \frac{\partial}{\partial t}\tilde{p}_t(\vz) +\nabla_{\vz}\cdot\bigl[\tilde{p}_t(\vz)\,\tilde{\vu}_t(\vz)\bigr]=0,
    \quad
    \tilde{p}_{0}(\vz)=\rho_0(\vz), \quad \vz\in\R^d ,\quad t\in (0,1].
    \end{equation}
\end{lemma}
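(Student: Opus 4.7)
The plan is to write $\tilde{p}_t = p_t \ast K_t$ where $K_t(\vz) := \gN\pn{\vz|\vect{0}, t\mat{\Lambda}}$, exploit the fact that $K_t$ satisfies the \emph{anisotropic heat equation} $\partial_t K_t = \tfrac{1}{2}\nabla_\vz\cdot\bigl(\mat{\Lambda}\nabla_\vz K_t\bigr)$, and then combine this with the continuity equation for $p_t$ to isolate a divergence of the form $\nabla_\vz\cdot(\tilde p_t \tilde\vu_t)$. Because $\mat{\Lambda}\succ 0$, the kernel $K_t$ is a smooth strictly positive Schwartz function for every $t>0$, so convolving with it will turn the distributional identities we need into classical ones.

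First I would differentiate $\tilde p_t$ in $t$. Using the regularity assumptions on $p_t\vu_t$ and $\nabla\cdot(p_t\vu_t)$ (which justify interchanging $\partial_t$, $\nabla_\vz$, and the convolution), I obtain
\begin{align*}
\partial_t \tilde p_t(\vz)
&= \bigl(\partial_t p_t\bigr)\ast K_t(\vz) + p_t \ast \bigl(\partial_t K_t\bigr)(\vz) \\
&= -\bigl(\nabla_\vz\cdot(p_t\vu_t)\bigr)\ast K_t(\vz) + \tfrac{1}{2}\,p_t \ast \nabla_\vz\cdot\bigl(\mat{\Lambda}\nabla_\vz K_t\bigr)(\vz) \\
&= -\nabla_\vz\cdot\bigl((p_t\vu_t)\ast K_t\bigr)(\vz) + \tfrac{1}{2}\,\nabla_\vz\cdot\bigl(\mat{\Lambda}\nabla_\vz \tilde p_t(\vz)\bigr).
\end{align*}
Rearranging so that the entire right-hand side is a single divergence, I would then define
\begin{equation*}
\tilde\vu_t(\vz) := \frac{(p_t\vu_t)\ast K_t(\vz)}{\tilde p_t(\vz)} - \frac{1}{2}\,\mat{\Lambda}\,\frac{\nabla_\vz \tilde p_t(\vz)}{\tilde p_t(\vz)},
\end{equation*}
which by construction satisfies $\partial_t \tilde p_t + \nabla_\vz\cdot(\tilde p_t \tilde\vu_t) = 0$ for $t\in(0,1]$. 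For the initial condition, I would use the standard fact that $K_t(\dif\vz)\Rightarrow \delta_0(\dif\vz)$ as $t\to 0^+$, so $\tilde p_0 = p_0 = \rho_0$.

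The main obstacle will be \emph{well-definedness} of $\tilde \vu_t$. Since $K_t$ is strictly positive everywhere for $t>0$ and $p_t$ is a probability density, the denominator $\tilde p_t(\vz)>0$ pointwise on $\R^d$, so the two quotients are defined for every $\vz$. I would need to check that the numerator $(p_t\vu_t)\ast K_t$ is well-defined and smooth, which follows from the $L^1$ assumption on $p_t\vu_t$ combined with the smoothness and rapid decay of $K_t$; similarly, $\nabla_\vz\tilde p_t$ is well-defined because $\nabla_\vz K_t$ is Schwartz. A minor technical point I would address is the boundary $t\to 0^+$: although $\tilde \vu_t$ may blow up as $t\downarrow 0$ (the score term $\tfrac{1}{2}\mat{\Lambda}\nabla\log\tilde p_t$ includes the usual $t^{-1}$ singularity from the heat kernel), the continuity equation \eqref{eq:tilde_cont_equation} holds on $(0,1]$ in the classical sense and the initial condition is attained by weak continuity of $\tilde p_t$ at $t=0$. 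One could equivalently start the smoothing at a time-shifted kernel or interpret the equation distributionally on $[0,1]$; either way, the construction above produces the required VF.
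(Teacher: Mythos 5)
Your proposal is correct and follows essentially the same route as the paper's proof: differentiate $\tilde p_t = p_t \ast K_t$ using the anisotropic heat equation for the Gaussian kernel, commute the divergence with the convolution, and define $\tilde\vu_t$ as the convolved flux divided by $\tilde p_t$ minus the score term $\tfrac12\mat{\Lambda}\nabla\log\tilde p_t$, with strict positivity of $\tilde p_t$ for $t>0$ ensuring well-definedness. The only difference is one of detail: where you assert the identity $\bigl(\nabla\cdot(p_t\vu_t)\bigr)\ast K_t = \nabla\cdot\bigl((p_t\vu_t)\ast K_t\bigr)$ from the stated $L^1$ hypotheses, the paper justifies it explicitly via a cutoff function, integration by parts, and dominated convergence.
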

\begin{proof}
    Let $G_t(\vz) = \gN(\vz|\vect{0},t\mat{\Lambda})$, and then $\delfrac{t} G_t = \frac{1}{2}\nabla\cdot(\mat{\Lambda}\nabla G_t)$.
    Now,
    \begin{equation}\label{eq:partial_t_tilde_pt}
    \begin{split}
        &\delfrac{t}\pn{\tilde{p}_t}\pn{\vz} = \delfrac{t}\pn{p_t \ast G_t}\pn{\vz}
        = -[\nabla\cdot(p_t\vu_t)]\ast G_t + \frac{1}{2}\nabla\cdot [p_t \ast \mat{\Lambda}\nabla G_t]
    \end{split}
    \end{equation}
    Then we show that $[\nabla\cdot(p_t\vu_t)]\ast G_t = \nabla\cdot [(p_t\vu_t)\ast G_t].$
    Let $\vect{f}:=p_t \vu_t\in L^1(\R^d;\R^d)$ and $\vect{f}_i$ be the component of dimension $i$. Fix $\vz\in\R^d$, and for $i \in [\,d\,]$, we set
    \begin{equation}
    I_i(\vz):=\int_{\R^d}\spn*{\delfrac{\vy_i}\vect{f}_i(\vy)}\,G_t(\vz-\vy) \wrt{\vy}.
    \end{equation}
    Let $R > 0$ and $\chi_R \in C_c^\infty\pn{\Reals^d}$ be a cutoff function such that $\chi_R \equiv 1$ on $B_R\pn{\vect{0}}$, $\chi_R \equiv 0$ on $\Reals^d \setminus B_{2R}\pn{\vect{0}}$, $\norm{\chi_R}_\infty = 1$, and $\norm{\nabla\chi_R}_\infty \leq C/R$ for some $C > 0$.
    We define
    \begin{equation}
    I_{i,R}(\vz):=\int_{\R^d}\spn*{\delfrac{\vy_i}\vect{f}_i(\vy)}\,\chi_R(\vy)G_t(\vz-\vy) \wrt{\vy}.
    \end{equation}
    Fixing $\vz$,
    \begin{equation}
        \abs*{\spn*{\delfrac{\vy_i}\vect{f}_i(\vy)}\,\chi_R(\vy)G_t(\vz-\vy)}
        \leq \abs*{\delfrac{\vy_i}\vect{f}_i(\vy)}\norm{G_t\pn{\vz-\vy}}_\infty.
    \end{equation}
    By the dominated convergence theorem, we have $I_{i,R}(\vz)\to I_i(\vz)$ as $R\to\infty$. We integrate by parts:
    \begin{equation}
    I_{i,R}(\vz)=-\int_{\R^d}\EDIT{\vect{f}_i(\vy)}\,\delfrac{\vy_i}\bigl(\chi_R(\vy)G_t(\vz-\vy)\bigr) \wrt{\vy}
    =:I_{i,R}^{\pn{1}}(\vz)+I_{i,R}^{\pn{2}}(\vz),
    \end{equation}
    where
    \begin{equation}
    I_{i,R}^{\pn{1}}(\vz)=-\int_{\R^d} \vect{f}_i(\vy)\chi_R(\vy)\,\delfrac{\vy_i}G_t(\vz-\vy) \wrt{\vy}, \quad
    I_{i,R}^{\pn{2}}(\vz)=-\int_{\R^d} \vect{f}_i(\vy)G_t(\vz-\vy)\,\delfrac{\vy_i}\chi_R(\vy) \wrt{\vy}.
    \end{equation}
    Since $\vect{f}_i\in L^1\pn{\Reals^d;\Reals}$ and $G_t$ is a Gaussian \EDIT{kernel}, we estimate,
    \begin{equation}
    |I_{i,R}^{\pn{2}}(\vz)|\le \tfrac{C}{R}\int_{R\le|\vy|\le 2R}\abs{\vect{f}_i(\vy)}\,|G_t(\vz-\vy)| \wrt{\vy}
    \rightarrow 0, \text{ as } R\rightarrow \infty,
    \end{equation}
    By dominated convergence,
    \begin{equation}
    I_{i,R}^{\pn{1}}(\vz)\rightarrow -\int \vect{f}_i(\vy)\,\delfrac{\vy_i}G_t(\vz-\vy) \wrt{\vy}, \text{ as } R\rightarrow \infty.
    \end{equation}
    since $\abs{\vect{f}_i(\vy)\chi_R(\vy)\,\delfrac{\vy_i}G_t(\vz-\vy)} \leq \abs{\vect{f}_i\pn{\vy}}\norm{\delfrac{\vy_i}G_t\pn{\vz - \vy}}_\infty$.
    Because $I_{i,R}(\vz)\to I_i(\vz)$ as $R\to\infty$, we have
    \begin{equation}
    I_i(\EDIT{\vz})=\int_{\R^d}\spn*{\delfrac{\vy_i}\vect{f}_i(\vy)}\,G_t(\vz-\vy) \wrt{\vy}=-\int \vect{f}_i(\vy)\,\delfrac{\vy_i}G_t(\vz-\vy) \wrt{\vy}.
    \end{equation}
    In addition, since $\delfrac{\vy_i}G_t(\vz-\vy)=-\delfrac{\vz_i}G_t(\vz-\vy)$ and summing over $i$, we have
    \begin{equation}\label{eq:first_term_lemma_6}
    \bigl[\nabla\cdot (p_t\vu_t)\bigr]\ast G_t(\vz)=\nabla\cdot\bigl[(p_t\vu_t)\ast G_t\bigr](\vz).
    \end{equation}
    Since $\nabla\tilde{p}_t=\nabla(p_t\ast G_t)=p_t\ast \nabla G_t$, we have
    \begin{equation}\label{eq:second_term_lemma_6}
    \frac{1}{2}\nabla\cdot\bigl[p_t \ast (\mat{\Lambda}\nabla G_t)\bigr]
    =\frac{1}{2}\nabla\cdot\bigl[\mat{\Lambda} (p_t\ast \nabla G_t)\bigr]
    =\frac{1}{2}\nabla\cdot\bigl[\mat{\Lambda}\nabla(p_t\ast G_t)\bigr]
    =\frac{1}{2}\nabla\cdot\bigl[\mat{\Lambda}\nabla\tilde p_t\bigr].
    \end{equation}

    According to \eqref{eq:partial_t_tilde_pt}, \eqref{eq:first_term_lemma_6} and \eqref{eq:second_term_lemma_6}, we have
    \begin{equation}
        \delfrac{t}\tilde{p}_t = -\nabla\cdot\bigl[(p_t\vu_t)\ast G_t\bigr] + \frac{1}{2}\nabla\cdot\bigl[\mat{\Lambda}\nabla\tilde p_t\bigr]
    \end{equation}
    Therefore
    \begin{equation}
        \delfrac{t} \tilde{p}_t + \nabla\cdot\bigl[(p_t\vu_t)\ast G_t - \frac{1}{2}\mat{\Lambda}\nabla\tilde{p}_t\bigr] = 0.
    \end{equation}
    Since $G_t(\cdot)>0$ everywhere for $t>0$ and $p_t\ge0$ with $\int_{\R^d}p_t=1$, we have for every $\vz\in\R^d$,
    \begin{equation}
    \tilde p_t(\vz)
    = \int_{\R^d} p_t(\vy)\,G_t(\vz-\vy) \wrt{\vy} \;>\;0, \qquad t>0.
    \end{equation}
    Hence we define, for $t>0$,
    \begin{equation}\label{eq:tilde_u_correct}
    \tilde{\vu}_t(\vz)
    := \frac{\bigl((p_t\vu_t)\ast G_t\bigr)(\vz)}{\tilde p_t(\vz)}
    \;-\;\tfrac12\,\mat{\Lambda}\nabla_{\vz}\log\tilde p_t(\vz),
    \end{equation}
    which is smooth and well-defined on $(0,1]\times\R^d$. For $t=0$, we can choose $\tilde{\vu}_0(\vz) = \vu_0(\vz)$. Then \eqref{eq:tilde_cont_equation} holds. As for the initial condition at $t=0$, we have
    \begin{equation}
    \tilde p_0 := p_0\ast \mathcal N(\,\cdot\,|\,\vect{0},\mat{0}) \;=\; p_0\ast\delta_0 \;=\; p_0 \;=\; \rho_0.
    \end{equation}
\end{proof}

\section{Additional Experimental Details}
\label{appendix:experimental-details}

\rev{
The hyperparameters of all the filters are tuned using Optuna \cite{akiba2019optuna} for 50 trials to minimize RMSE on the last 50 DA steps.
EnSF and EnFF use all the ensemble members to approximate the score function or FM vector field.
}

\subsection{Lorenz '63}
\label{appendix:experimental-details:lorenz-63}
\rev{
The Lorenz '63 ODE \cite{lorenz63} is defined for $\vect{x} = [x_1,x_2,x_3]$ as
\begin{align*}
    \dot{\vect{x}} = [
        \sigma(x_2 - x_1),\;
        x_1(\rho - x_3) - x_2,\;
        x_1x_2 - \beta x_3
    ]^\Transpose
\end{align*}
where chaotic behavior occurs when $\sigma = 10$, $\rho = 28$, and $\beta = 8/3$.
}

\subsection{Lorenz '96}
\label{appendix:experimental-details:lorenz-96}

The Lorenz '96 ODE \cite{lorenz96} defines a $d\geq 4$-dimensional dynamical system:
\begin{align*}
    \diffrac[x_i]{t} = \pn{x_{i + 1} - x_{i - 2}}x_{i - 1} - x_i + b
\end{align*}
where $i \in [\,d\,]$, $x_{-1} = x_{d - 1}$, $x_0 = x_d$, $x_{d + 1} = x_1$, and $b \in \mathbb{R}$ is a constant forcing.
We use $b = 8$, inducing chaotic behavior.

\subsection{KS}
\label{appendix:experimental-details:ks}

The 1D KS PDE \cite{kuramoto1978diffusion} is a nonlinear fourth-order PDE given by
\begin{align}
    \delfrac[u]{t} + \delfrac[^2u]{x^2} + \delfrac[^4u]{x^4} + \frac{1}{2}\EDIT{\delfrac[\pn{u^2}]{x}} = 0
\end{align}
We consider periodic boundary conditions on a domain $\spn{0, L}$ with $L = 128\pi$.

We discretize the domain into a grid of size $1{,}024$, with the values of $u$ at grid points forming the state of the DA dynamical system. The initial condition $\vect{u}_0$ is generated by evolving the analytical profile $\vect{u}_0(x) = \cos(2x/L)(1 + \sin(2x/L))$ for $150$ timesteps to reach the chaotic attractor.

\subsection{2D NS}
\label{appendix:experimental-details:ns}

The 2D NS PDE \cite{temam2001navier} is given by
\begin{align*}
    \delfrac[\vu]{t} + \pn{\vu \cdot \nabla}\vu = -\frac{1}{\rho}\nabla p + \nu \nabla^2\vu + \frac{1}{\rho}f\pn{t, \vx}, \quad \nabla \cdot \vu = 0
\end{align*}
We consider periodic boundary conditions on $\spn{0, L}^2$ with $L = 2$, $\rho = 1$, and viscosity $\nu = 10^{-3}$. The forcing term is $f\pn{t, \vx} = [0.05\sin\pn{2\pi \cdot 8x_2 / L}, 0]^\Transpose$, where $\nu \geq 0$ is viscosity, $\rho$ is density, and $f\pn{t, \vx}$ is the external forcing.

We discretize the domain onto a grid, with the components of $\vu$ and pressure at grid points forming the DA state.
The initial condition $\vu_0$ is sampled from a squared exponential Gaussian process via regular Fourier feature approximation \cite{fritz2009application, hensman2018variational}:
\begin{align*}
u_0(\vx) = \vw^\Transpose_1 \vect{\psi}(\vx), \quad v_0(\vx) = \vw^\Transpose_2 \vect{\psi}(\vx)
\end{align*}
where $\vect{\psi}(\vx) = (\cos(\vect{\omega}_1^\Transpose \vx), \ldots, \cos(\vect{\omega}_M^\Transpose \vx), \sin(\vect{\omega}_1^\Transpose \vx), \ldots, \sin(\vect{\omega}_M^\Transpose \vx))$ and $\vw_1, \vw_2 \sim \mathcal{N}(\vect{0}, \mat{S})$ i.i.d., with $\mat{S} = \mathtt{diag}(s(\vect{\omega}_1), \ldots, s(\vect{\omega}_M), s(\vect{\omega}_1), \ldots, s(\vect{\omega}_M))$; here, $s(\vect{\omega}) = e^{-2\pi^2 \ell^2 |\vect{\omega}|^2}$ is the spectral density for the squared-exponential kernel (i.e., the Fourier transform of a stationary kernel) and $\vect{\omega}_m$ denotes regular grid points in the 2D spectral domain. We use $\ell = 0.2$. The pressure is initialized with zeros.

\end{document}